\numberwithin{equation}{section}
\newtheorem{thm}{Theorem}
\newtheorem{lem}{Lemma}
\newtheorem{cor}{Corollary}
\newtheorem{pro}{Proposition}
\newcommand {\emptycomment}[1]{}
\newcommand{\be }{\begin{equation}}
\newcommand{\ee }{\end{equation}}
\newcommand{\huaE}{\mathcal{E}}
\newcommand{\huaF}{\mathcal{F}}
\newcommand{\huaH}{\mathcal{H}}
\newcommand{\huaT}{\mathcal{T}}
\newcommand{\huaN}{\mathcal{N}}
\newcommand{\nono}{\nonumber}
\newcommand{\f}{\frac}
\newcommand{\aaa}{\alpha}
\newcommand{\nn}{\langle}
\newcommand{\mm}{\rangle}
\newcommand{\ww}{\widetilde}
\newcommand{\la}{\lambda}
\newcommand{\mbb}{\mathbb}
\def\bea{\begin{eqnarray}}
\def\eea{\end{eqnarray}}
\def\be{\begin{equation}}
\def\ee{\end{equation}}
\def\blm{\begin{lem}}
\def\elm{\end{lem}}
\def\b{\mathcal{B}}
\def\a{\mathcal{A}}
\def\c{\mathcal{C}}
\def\o{\mathcal{O}}
\def\e{\mathcal{E}}
\def\tra{\huaT_{1,|D|,\la}}
\def\trb{\huaT_{2,|D|,\la}}
\def\trc{\huaT_{3,|D|,\la}}
\def\trd{\huaT_{4,|D|,\la}}
\def\tre{\huaT_{5,|D|,\la}}
\def\trf{\huaT_{6,|D|,\la}}
\def\g{\gamma}
\def\kk{\kappa}
\def\jj{\big\|}
\def\n{\nono&&}
\def\hha{\hat{\a}_{|D|,\la}}
\def\ha{\a_{|D|,\la}}
\def\hb{\b_{|D|,\la}}
\def\hc{\c_{|D|,\la}}
\def\hd{\mathcal{D}_{|D|,\la}}
\def\hhe{E_{\hat{D},\la,\sigma}}
\def\he{E_{D,\la,\sigma}}
\def\les{\lesssim}
\begin{document}
\title{
{Robust Kernel-based Distribution Regression}
 }\vspace{2mm}
%\author[]{ }
%\author[]{}
%\author[]{}
%%\author[]{\\}
\date{}
%\affil[]{Department of Mathematics, City University of Hong Kong}
%\affil[]{Kowloon, Hong Kong}

%\affil[]{}
%\affil[]{}

\author{Zhan Yu, Daniel W. C. Ho\\ \small Department of Mathematics, City University of Hong Kong\\ \small Kowloon, Hong Kong, Email: zhanyu2-c@my.cityu.edu.hk; madaniel@cityu.edu.hk  % "\small" is optional
        \and
        Ding-Xuan Zhou\\ \small School of Data Science and Department of Mathematics,
City University of Hong Kong\\ \small Kowloon, Hong Kong, Email: mazhou@cityu.edu.hk}

\maketitle

\begin{abstract}
Regularization schemes for regression have been widely studied in learning theory and inverse problems. In this paper, we study distribution regression (DR) which involves two stages of sampling, and aims at regressing from probability measures to real-valued responses over a reproducing kernel Hilbert space (RKHS). Recently, theoretical analysis on DR has been carried out via kernel ridge regression and several learning behaviors have been observed. However, the topic has not been explored and understood beyond the least square based DR. By introducing a robust loss function $l_{\sigma}$ for two-stage sampling problems, we present a novel robust distribution regression (RDR) scheme. With a windowing function $V$ and a scaling parameter $\sigma$ which can be appropriately chosen, $l_{\sigma}$ can include a wide range of popular used loss functions  that enrich the theme of DR. Moreover, the loss $l_{\sigma}$ is not necessarily convex, hence largely improving the former regression class (least square) in the literature of DR. The learning rates under different regularity ranges of the regression function $f_{\rho}$ are comprehensively studied and derived via integral operator techniques. The scaling parameter $\sigma$ is shown to be crucial in providing robustness and satisfactory learning rates of RDR.
\end{abstract}

\textbf{Keywords:} learning theory, distribution regression, robust regression, integral operator, learning rate

%$\mathfrak L\mathrm{z}z\mathbf{z}\mathbbm{L}\mathscr{D}{\L}_D$
\section{Introduction}\vspace{0.000000000000000000000001mm}
Data from many applications often appear in the form of functional data or matrix-valued data. Such type of data impose difficulty for classical regression methods to tackle corresponding problems. Hence, developing suitable regression schemes for these problems becomes desirable. Recently, distribution regression (DR) was introduced to handle complicated data defined on some Banach spaces (\cite{fang}, \cite{dr2}, \cite{twostage}, \cite{only}, \cite{mmdr}). In DR, the input data are (probability) distributions on a compact metric space $\bar{X}$.  Generally, the distribution can not be observed directly. On the way of learning the regressor from the distributions to the real valued outputs, we can observe a second-stage sample drawn from the probability measures. For the first stage,  the data set is defined as $\bar{D}=\{(x_i,y_i)\}_{i=1}^{|D|}\subset X\times Y$, in which $|D|$ is cardinality of $\bar{D}$ and each pair $(x_i,y_i)$ is $i.i.d.$ sampled from a meta distribution. $X$ is the input space of probability measures on $\bar{X}$ and $Y=\mathbb R$ is the output space equipped with the standard Euclidean metric. For the second stage, the samples in the sample set $\hat{D}=\big\{(\{x_{i,s}\}_{s=1}^{d_i},y_i)\big\}_{i=1}^{|D|}$ are obtained from distributions $\{x_i\}_{i=1}^{|D|}$ accordingly, where $x_{i,s}\in\bar{X}$. DR has been handled  in many important machine learning and statistical settings, for example, the multi-instance learning, semi-supervised learning and point estimation problems without analytical solution.

The work in this paper is based on a \emph{kernel mean embedding}  ridge regression method for DR (\cite{rkhs}). Let $H=H(k)$ be a reproducing kernel Hilbert space (RKHS) with the associated reproducing kernel $k:\bar{X}\times \bar{X}\rightarrow\mbb R$. Let $(\bar{X},\huaF)$ be a measurable space with $\huaF$ being a Borel $\sigma$-algebra on $\bar{X}$. Denote the set of Borel probability measures on $(\bar{X},\huaF)$ by $\mathcal{M}_1(\huaF)$. Then the \emph{kernel mean embedding} of a distribution $x\in\mathcal{M}_1(\huaF)$ to an element $\mu_x$ of RKHS $H$ is given by
\be
\nono \mu_x=\int_{\bar{X}}k(\cdot,\eta)dx(\eta).
\ee
Via the kernel mean embedding, kernel methods for handling data on Euclidean spaces can be extended to those on the space of probability measures. The kernel mean embedding transformation $x\mapsto\mu_x$ is also injective when $k$ is a characteristic kernel (\cite{fang}, \cite{k2}). The injectivity is shown to be useful in statistical applications (e.g. \cite{k1}, \cite{k3}). Denote the set of the mean embeddings by $X_{\mu}=\{\mu_x:x\in\mathcal{M}_1(\huaF)\}\subseteq H$. Then  the mean embeddings of $\bar{D}$ to $X_{\mu}$ can be represented by
\be
\nono D=\{(\mu_{x_i},y_i)\}_{i=1}^{|D|}.
\ee
Let $\rho$ be the $\mu$-induced probability measure on the product space $Z=X_{\mu}\times Y$. The aim of distribution regression is to predict the conditional mean $Y$ for given $X_{\mu}$ by learning the  regression function defined by %The regression function of $\rho$ w.r.t. the $(\mu_x,y)$-pair is defined by
\be
\nono f_{\rho}(\mu_x)=\int_Yyd\rho(y|\mu_x), \ \mu_x\in X_{\mu},
\ee
in which $\rho(\cdot|\mu_x)$ is the conditional probability measure of $\rho$ induced at $\mu_x\in X_{\mu}$. $f_{\rho}$ is just the minimizer of the least square generalization error
\be
\nono \huaE(f)=\int_Z(f(\mu_x)-y)^2d\rho.
\ee
Generally, in the non-parametric setting, the measure $\rho$ is unknown. In the DR circumstance, the first stage distribution samples $\{x_i\}_{i=1}^{|D|}$ are still unobservable. We are only able to tackle the information of them via the second-stage sample set
\be
\nono \hat{D}=\big\{(\{x_{i,s}\}_{s=1}^{d_i},y_i)\big\}_{i=1}^{|D|}
\ee
with size $d_i\in \mbb N$, $i=1,2,...,|D|$ respectively. Consider a reproducing kernel Hilbert space $(\huaH_K,\|\cdot\|_K)$ associated with a Mercer kernel $K:X_{\mu}\times X_{\mu}\rightarrow\mbb R$. As an extension of one-stage kernel ridge regression scheme (e.g. \cite{pr, s2, pr1, s3, s4, s5, s6}), the  regularized least square DR scheme has the Tikhonov regularized form (\cite{pr}, \cite{fang}, \cite{only})
\be
f_{\hat{D},\la}^{ls}=\arg\min_{f\in\huaH_K}\bigg\{\f{1}{|D|}\sum_{i=1}^{|D|}\big(f(\mu_{\hat{x}_i})-y_i\big)^2+\la\big\|f\big\|_{K}^2\bigg\}, \label{alone}
\ee
in which
\be
\nono \hat{x}_i=\f{1}{d_i}\sum_{s=1}^{d_i}\delta_{x_{i,s}}\\
\ee
serves as the empirical distribution determined by the observable set $\hat{D}=\big\{(\{x_{i,s}\}_{s=1}^{d_i},y_i)\big\}_{i=1}^{|D|}$,
\be
\nono \mu_{\hat{x}_i}=\f{1}{d_i}\sum_{s=1}^{d_i}k(\cdot,x_{i,s})
\ee
is the corresponding kernel mean embedding, $\la>0$ is the regularization parameter. The least square minimization problem \eqref{alone} can be regarded as the Tikhonov regularization solution to an ill-posed inverse problem with noisy data $\hat{D}=\big\{(\{x_{i,s}\}_{s=1}^{d_i},y_i)\big\}_{i=1}^{|D|}$ (\cite{pr}, \cite{inv}).

In this paper, we investigate a more general framework of two-stage distribution regression by considering the novel  regularized RDR scheme:
\be
f_{\hat{D},\la}^{\sigma}=\arg\min_{f\in\huaH_K}\bigg\{\f{\sigma^2}{|D|}\sum_{i=1}^{|D|}V\Big(\f{[f(\mu_{\hat{x}_i})-y_i]^2}{\sigma^2}\Big)+\la\big\|f\big\|_{K}^2\bigg\},\label{algorithm}
\ee
where $V:\mbb R\rightarrow\mbb R$ is a windowing function. In \eqref{algorithm}, the algorithm can also be written in a form of
\be
\nono f_{\hat{D},\la}^{\sigma}=\arg\min_{f\in\huaH_K}\big\{\f{1}{|D|}\sum_{i=1}^{|D|}l_{\sigma}(f(\mu_{\hat{x}_i})-y_i)+\la\|f\|_{K}^2\big\}
\ee
%\be
%\nono f_{\hat{D},\la}=\arg\min_{f\in\huaH_K}\bigg\{\f{1}{|D|}\sum_{i=1}^{|D|}l_{\sigma}\Big(f(\mu_{\hat{x}_i})-y_i\Big)+\la\big\|f\big\|_{K}^2\bigg\},
%\ee
in which the loss function $l_{\sigma}(u)=\sigma^2V(\f{u^2}{\sigma^2})$. It can be witnessed that, in regression strategy \eqref{algorithm}, to enhance robustness of DR, we have replaced the least squares loss by a more general robust alternative generated by windowing function $V$ and scaling parameter $\sigma$. By selecting appropriate windowing function $V$ and scaling parameter $\sigma$, the loss function can cover a wide range of important robust distribution regression (RDR) class, which is new in the literature of DR. For example, Welsch loss $l_{\sigma}(u)=\sigma^2\big[1-\exp(-\f{u^2}{2\sigma^2})\big]$ has been shown to be powerful in settings such as signal processing, data clustering, pattern recognition and non-parameter regression. From a perspective of information-theoretic learning, Welsch loss can also form the basic structure of the well-known correntropy loss, which was first introduced in reference \cite{corf} and is an entropy-based loss. Entropy-based losses mainly include the loss induced by maximum correntropy criterion (MCC) (\cite{fengyl}) and loss induced minimum error entropy  (MEE) criterion (\cite{ht}, \cite{wh}). In addition, there are also many commonly used losses such as the Huber loss (\cite{fw1}), pinball loss (\cite{si}). Recall that the traditional least square DR scheme is a currently most popularly considered DR method in practice. However, it relies only on the mean squared error and belongs to the second-order statistics. Also recall that the least square regression is optimal for Gaussian noise but suboptimal for non-Gaussian noise. In practice, samples may be contaminated by non-Gaussian noise or outliers. Moreover,  least square estimates for regression models are highly sensitive to outliers and when the noise is not Gaussian, it often has a poor performance. Unfortunately, in current two-stage distribution regression area, the commonly considered approach and theoretical study are still limited to least square scheme, no mainstream regression method has been proposed to face non-Gaussian settings yet. These facts motivate us to consider the proposed RDR scheme in \eqref{algorithm} to fill the gap when tackling two-stage distribution regression. Because of the robustness to non-Gaussian noise or outliers, the potential value of the proposed RDR is expected  in two-stage sampling regression.

The goal of  this paper is to investigate  RDR in  a framework of learning theory. To derive learning rates of the estimator $f_{\hat{D},\la}^{\sigma}$ when approximating $f_{\rho}$ and related robustness, we use kernel based integral operator theory as a main tool. Via kernel mean embedding techniques, we learn the regression function $f_{\rho}$ with algorithm \eqref{algorithm} from the given training samples $\hat{D}=\big\{(\{x_{i,j}\}_{j=1}^{d_i},y_i)\big\}_{i=1}^{|D|}$ with $x_{i,1},x_{i,2},...,x_{i,d_i}$ drawn independently from $x_i$. Novel theoretical results on robust estimator $f_{\hat{D},\la}^{\sigma}$ are derived. Note that, in the proposed RDR, the loss function $l_{\sigma}$ involves non-convex functions (e.g. Welsch loss), hence the theoretical study on RDR is essentially different from those on existing DR methods.

We summarize some main contributions of the work. (i) We propose a novel RDR method for two-stage sampling distribution regression.  Learning theory analysis is established on  the estimator $f_{\hat{D},\la}^{\sigma}$ resulted from RDR scheme in \eqref{algorithm}. Novel  error bounds are derived for the estimator $f_{\hat{D},\la}^{\sigma}$ via integral operator techniques. With the introduction of flexibly chosen windowing function $V$ and scaling parameter $\sigma$ that leads to a wide range of commonly used robust loss,  the existing analysis and the class of DR algorithm  in the literature of DR (least square) have  been largely improved. (ii) The learning behavior of RDR is comprehensively explored for regularity index $r$ (introduced below) in any range of $(0,\infty)$. Accordingly, satisfactory convergence rates in terms of sample size $|D|$ are derived. We also show that the optimal mini-max learning rate can be achieved by RDR under appropriate conditions. The significance of $\sigma$ in providing robustness as well as fast learning rates of RDR is shown in our analysis and main results.

\section{Main results}\vspace{0.000000000000000000000001mm}
We assume that, throughout the paper, there exists a constant $M>0$ such that $|y|\leq M$ ($Y\subseteq[-M,M]$) almost surely. $k$ and $K$ are bounded Mercer kernel (symmetric, continuous, positive semidefinite) with bound $B_{k}$ and $\kk$:
\be
 B_{k}=\sup_{v\in\bar{X}}k(v,v)<\infty, \ \kk=\sup_{\mu_u\in X_{\mu}}\sqrt{K(\mu_u,\mu_u)}<\infty. \label{bddker}
\ee
Suppose that $\aaa\in(0,1]$ and $L>0$.  Denote  the Banach space of the bounded linear operators from space $Y=\mbb R$ to $\huaH_K$ by $\mathcal{L}(Y,\huaH_K)$. Let $K_{\mu_x}=K(\mu_x,\cdot)$, $\mu_x\in X_{\mu}$. We treat $K_{\mu_x}$ as an element of $\mathcal{L}(Y,\huaH_K)$ by defining the linear mapping
\be
\nono K_{\mu_x}(y)=yK_{\mu_x}, \  y\in Y.
\ee
The mapping $K_{(\cdot)}:X_{\mu}\rightarrow\mathcal{L}(Y,\huaH_K)$ is assumed to be $(\aaa,L)$-H\"older continuous in following sense
\be
\big\|K_{\mu_x}-K_{\mu_y}\big\|_{\mathcal{L}(Y,\huaH_K)}\leq L\big\|\mu_x-\mu_y\big\|_H^{\aaa}, \ \forall(\mu_x,\mu_y)\in X_{\mu}\times X_{\mu}. \label{holderass}
\ee
According to \cite{only}, the set of mean embeddings $X_{\mu}$ is a separable compact set of continuous functions on $\bar{X}$. Denote the marginal distribution of $\rho$ on $X_{\mu}$ by $\rho_{X_{\mu}}$. Let $L_{\rho_{X_{\mu}}}^2$ be the Hilbert space of square-integrable functions defined on $X_{\mu}$. For $f\in L_{\rho_{X_{\mu}}}^2$, denote the norm of $f$ by
\be
\nono \big\|f\big\|_{L_{\rho_{X_{\mu}}}^2}=\big\nn f,f\big\mm_{\rho_{X_{\mu}}}^{1/2}=\bigg(\int_{X_{\mu}}\big|f(\mu_x)\big|^2d\rho_{X_{\mu}}(\mu_x)\bigg)^{1/2}.
\ee
Define the integral operator $L_K$ on $L_{\rho_{X_{\mu}}}^2$ associated with the Mercer kernel $K: X_{\mu}\times X_{\mu}\rightarrow\mbb R$ by
\be
L_K(f)=\int_{X_{\mu}}K_{\mu_x}f(\mu_x)d\rho_{X_{\mu}}, \ f\in L_{\rho_{X_{\mu}}}^2.
\ee
Since the set of mean embeddings $X_{\mu}$ is compact and $K$ is a Mercer kernel, $L_K$ is a positive compact operator on $L_{\rho_{X_{\mu}}}^2$. Then for any $r>0$, its $r$-th power $L_K^r$ is well defined according to spectral theorem in functional calculus.

Throughout the paper, we assume the following \emph{regularity condition} for the regression function $f_{\rho}$:
\be
f_{\rho}=L_K^r(g_{\rho}) \ \text{for some} \ g_{\rho}\in L_{\rho_{X_{\mu}}}^2, \  r>0. \label{regularity}
\ee
The above assumption means that the regression function lies in the range of operator $L_K^r$, the special case $r=1/2$ corresponds to $f_{\rho}\in\huaH_K$. According to \cite{cucker}, the operator $L_K^{1/2}:\overline{\huaH_K}\rightarrow\huaH_K$ is an isomorphism, in which $\overline{\huaH_K}$ denotes the closure of $\huaH_K$ in $L_{\rho_{X_{\mu}}}^2$. Namely, for any $f\in\overline{\huaH_K}$, $L_K^{1/2}f\in\huaH_K$ and $\|f\|_{L_{\rho_{X_{\mu}}}^2}=\|L_K^{1/2}f\|_K$.

We use the \emph{effective dimension} $\huaN(\la)$ to measure the capacity of $\huaH_K$ with respect to measure $\rho_{X_{\mu}}$, which is defined by the trace of the operator $(\la I+L_K)^{-1}L_K$, that is
\be
\nono \huaN(\la)=\text{Tr}((\la I+L_K)^{-1}L_K), \ \la>0.
\ee
For the effective dimension $\huaN(\la)$, we need the following capacity assumption, which focuses on the $\beta$-rate of  $\huaN(\la)$: there exists a constant $\c_0>0$ independent of $\la$ such that for any $\la>0$,
\be
\huaN(\la)\leq \c_0\la^{-\beta}, \  \text{for some} \ 0<\beta\leq1.  \label{cap}
\ee

Throughout the paper, for the RDR:
\be
\nono f_{\hat{D},\la}^{\sigma}=\arg\min_{f\in\huaH_K}\bigg\{\f{\sigma^2}{|D|}\sum_{i=1}^{|D|}V\Big(\f{[f(\mu_{\hat{x}_i})-y_i]^2}{\sigma^2}\Big)+\la\big\|f\big\|_{K}^2\bigg\},
\ee
we assume that the sample set $D=\{(\mu_{x_i},y_i)\}_{i=1}^{|D|}$ is drawn independently according to probability measure $\rho$. The sample set $\{x_{i,s}\}_{s=1}^{d_i}$ is drawn independently according to probability distribution $x_i$ for $i=1,2,...,|D|$. The windowing function $V$ is assumed to be differentiable with $V_+'(0)=1$ (w.l.o.g. for scaling simplification consideration) and not necessarily convex. It is assumed that there exists some $p$ and $c_p$ such that
\be
|V'(s)-V_+'(0)|\leq c_p s^p, \ \  \forall s>0,   \label{glip}
\ee
and
\be
C_V=\sup_{s\in(0,\infty)}|V'(s)|<\infty \ \text{with} \ V'(s)>0 \ \text{for} \ s>0.
\ee
The proposed regression scheme can cover  many classical loss functions by selecting appropriate $V$ according to above assumption. Among them, some important robust loss functions can also be considered in DR setting to improve the robustness to non-Gaussian noise and outliers. Some well known examples of these loss functions include Welsch loss: $l_{\sigma}(s)=\sigma^2[1-\exp(-\f{s^2}{2\sigma^2})]$, Cauchy loss: $l_{\sigma}(s)=\sigma^2\log(1+\f{s^2}{2\sigma^2})$, Fair loss: $l_{\sigma}(s)=\sigma^2\big[\f{|s|}{\sigma}-\log(1+\f{|s|}{\sigma})\big]$. These examples were first proposed in robust parameter regression setting. It can be witnessed that, the Welsch loss and Cauchy loss are non-convex and satisfy the redescending property.

Now let $f_{\hat{D},\la}^{\sigma}$ be given as in the algorithm \eqref{algorithm},  the main results on the error estimate between $f_{\hat{D},\la}^{\sigma}$ and $f_{\rho}$ are presented in the following theorems. The results are in terms of estimates on the expected difference between $f_{\hat{D},\la}^{\sigma}$ and $f_{\rho}$ in $L_{\rho_{X_{\mu}}}^2$-norm.  The expectation is taken for both $D$ and $\hat{D}$.

Our first main result describes the explicit learning rates of RDR in terms of sample size $|D|$ of data set $D$  and robust scaling parameter $\sigma$. If we assume the capacity condition \eqref{cap}, the following minimax optimal learning rates for RDR \eqref{algorithm} holds.
\begin{thm}\label{thm2}
Suppose that the regularity condition \eqref{regularity} holds with $r>0$ and $|y|\leq M$ almost surely. Assume that the capacity condition \eqref{cap} holds with index $\beta\in(0,1]$ and  the mapping $K_{(\cdot)}:X_{\mu}\rightarrow\mathcal{L}(Y,\huaH_K)$ is $(\aaa,L)$-H\"older continuous with $\aaa\in(0,1]$ and $L>0$. If the sample size in the second stage sampling satisfies $d_1=d_2=\cdots=d_{|D|}=d$, then by choosing
\bea
\la&=&\left\{
  \begin{array}{ll}
    |D|^{-\f{1}{1+\beta}}, & \hbox{$r\in(0,1/2)$;} \\
   |D|^{-\f{1}{2r+\beta}}, & \hbox{$r\in[1/2,1]$;}\\
   |D|^{-\f{1}{2+\beta}}, & \hbox{$r\in(1,\infty)$;}
  \end{array}
\right.
\eea
and
\bea
d&=&\left\{
  \begin{array}{ll}
    |D|^{\f{2}{\aaa(1+\beta)}}, & \hbox{$r\in(0,1/2)$;} \\
   |D|^{\f{1+2r}{\aaa(2r+\beta)}}, & \hbox{$r\in[1/2,1]$;}\\
   |D|^{\f{1}{\aaa}(\f{3}{2+\beta})}, & \hbox{$r\in(1,\infty)$,}
  \end{array}
\right.
\eea
there holds
\bea
\mbb E\bigg[\big\|f_{\hat{D},\la}^{\sigma}-f_{\rho}\big\|_{L_{\rho_{X_{\mu}}}^2}\bigg]&=&\left\{
  \begin{array}{ll}
    \o\Big(\max\Big\{|D|^{-\f{r}{1+\beta}},\f{|D|^{\f{p+1}{1+\beta}}}{\sigma^{2p}}\Big\}\Big), & \hbox{$r\in(0,1/2)$;} \\
   \o\Big(\max\Big\{|D|^{-\f{r}{2r+\beta}},\f{|D|^{\f{p+1}{2r+\beta}}}{\sigma^{2p}}\Big\}\Big), & \hbox{$r\in[1/2,1]$;}\\
   \o\Big(\max\Big\{|D|^{-\f{1}{2+\beta}},\f{|D|^{\f{p+1}{2+\beta}}}{\sigma^{2p}}\Big\}\Big), & \hbox{$r\in(1,\infty)$.}
  \end{array}
\right.
\eea
\end{thm}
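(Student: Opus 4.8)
The plan is to reduce the study of the non-convex robust estimator $f_{\hat{D},\la}^{\sigma}$ to that of the classical two-stage least-squares estimator $f_{\hat{D},\la}^{ls}$ of \eqref{alone}, plus a perturbation that is small in $\sigma$. Since $V$ is differentiable, any minimizer $f:=f_{\hat{D},\la}^{\sigma}$ of \eqref{algorithm} is a stationary point, and writing out the first-order optimality condition in $\huaH_{K}$ (using $g(\mu_{\hat{x}_i})=\langle g,K_{\mu_{\hat{x}_i}}\rangle_{K}$) gives
\be
\nono \f{1}{|D|}\sum_{i=1}^{|D|}V'\Big(\f{R_i^2}{\sigma^2}\Big)R_i\,K_{\mu_{\hat{x}_i}}+\la f=0,\qquad R_i:=f(\mu_{\hat{x}_i})-y_i.
\ee
Substituting $V'(s)=V_+'(0)+\big(V'(s)-V_+'(0)\big)=1+\big(V'(s)-1\big)$ and introducing the empirical operator $L_{K,\hat{D}}=\f{1}{|D|}\sum_iK_{\mu_{\hat{x}_i}}\langle K_{\mu_{\hat{x}_i}},\cdot\rangle_{K}$ and the data vector $\hat{g}_{\hat{D}}=\f{1}{|D|}\sum_iy_iK_{\mu_{\hat{x}_i}}$, the stationarity equation becomes $(L_{K,\hat{D}}+\la I)f=\hat{g}_{\hat{D}}+\huaW_{\sigma}$ with $\huaW_{\sigma}=-\f{1}{|D|}\sum_i\big(V'(R_i^2/\sigma^2)-1\big)R_iK_{\mu_{\hat{x}_i}}$. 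Hence
\be
\nono f_{\hat{D},\la}^{\sigma}-f_{\rho}=\underbrace{\big(f_{\hat{D},\la}^{ls}-f_{\rho}\big)}_{\text{least-squares error}}+\underbrace{(L_{K,\hat{D}}+\la I)^{-1}\huaW_{\sigma}}_{\text{robustification term}} .
\ee

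For the robustification term I would first get an a priori bound on $\|f_{\hat{D},\la}^{\sigma}\|_{K}$: testing \eqref{algorithm} with $f=0$ and using that \eqref{glip} gives $V(t)\le t+\tfrac{c_p}{p+1}t^{p+1}$ (normalizing $V(0)=0$), one finds $\la\|f_{\hat{D},\la}^{\sigma}\|_{K}^2\le M^2+\tfrac{c_pM^{2p+2}}{(p+1)\sigma^{2p}}$, so $|R_i|\le\kk\|f_{\hat{D},\la}^{\sigma}\|_{K}+M=\o(\la^{-1/2})$ for $\sigma$ bounded away from $0$. A second application of \eqref{glip} yields $|V'(R_i^2/\sigma^2)-1|\le c_p\sigma^{-2p}R_i^{2p}$, hence $\|\huaW_{\sigma}\|_{K}\le\kk c_p\sigma^{-2p}\,\f{1}{|D|}\sum_i|R_i|^{2p+1}=\o\big(\sigma^{-2p}\la^{-(p+1/2)}\big)$. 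Combined with the standard estimate $\big\|L_K^{1/2}(L_{K,\hat{D}}+\la I)^{-1}\big\|\les\la^{-1/2}$ — which holds on a high-probability event obtained by fusing first-stage concentration of $L_{K,\hat{D}}$ around $L_{K}$ with the second-stage bound $\EE\|\mu_{\hat{x}_i}-\mu_{x_i}\|_{H}^2\le B_{k}/d$ propagated through \eqref{holderass} — this gives $\EE\big\|(L_{K,\hat{D}}+\la I)^{-1}\huaW_{\sigma}\big\|_{L_{\rho_{X_{\mu}}}^2}=\o\big(\sigma^{-2p}\la^{-(p+1)}\big)$, which is exactly the second entry of each maximum once the prescribed $\la$ is inserted.

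For the least-squares error I would use the usual three-piece decomposition through the (unobservable) estimator $f_{D,\la}^{ls}=(L_{K,D}+\la I)^{-1}g_D$ built from the true embeddings and the data-free regularized function $f_{\la}=(L_K+\la I)^{-1}L_Kf_{\rho}$:
\be
\nono f_{\hat{D},\la}^{ls}-f_{\rho}=\big(f_{\hat{D},\la}^{ls}-f_{D,\la}^{ls}\big)+\big(f_{D,\la}^{ls}-f_{\la}\big)+\big(f_{\la}-f_{\rho}\big).
\ee
The approximation error is purely deterministic: by \eqref{regularity}, $f_{\la}-f_{\rho}=-\la(L_K+\la I)^{-1}L_K^{r}g_{\rho}$, so $\|f_{\la}-f_{\rho}\|_{L_{\rho_{X_{\mu}}}^2}\les\la^{\min\{r,1\}}$, which produces $|D|^{-r/(1+\beta)}$, $|D|^{-r/(2r+\beta)}$ and (Tikhonov saturation at $r=1$) $|D|^{-1/(2+\beta)}$ in the three regimes. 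The first-stage sample error is handled via $f_{D,\la}^{ls}-f_{\la}=(L_{K,D}+\la I)^{-1}\big[(g_D-L_Kf_{\rho})-(L_{K,D}-L_K)f_{\la}\big]$ together with Bernstein-type operator concentration and the capacity bound \eqref{cap}, with the refinement that $(L_{K,D}-L_K)f_{\la}$ is controlled through its variance $\les\|f_{\la}\|_{\infty}^2\huaN(\la)$ rather than through $\|L_{K,D}-L_K\|\,\|f_{\la}\|_{K}$; this is what makes the bound match the first entry of the maximum for all $r>0$, including $r<1/2$ where $\|f_{\la}\|_{K}\les\la^{r-1/2}$. Finally the second-stage error satisfies $\|f_{\hat{D},\la}^{ls}-f_{D,\la}^{ls}\|_{L_{\rho_{X_{\mu}}}^2}\les\la^{-1/2}\big(\|\hat{g}_{\hat{D}}-g_D\|_{K}+\|L_{K,\hat{D}}-L_{K,D}\|\,\|f_{D,\la}^{ls}\|_{K}\big)$, and by \eqref{holderass} and $\EE\|\mu_{\hat{x}_i}-\mu_{x_i}\|_{H}^{\aaa}\le(B_{k}/d)^{\aaa/2}$ this is $\o\big(d^{-\aaa/2}\la^{-1/2}\max\{1,\la^{r-1/2}\}\big)$; the prescribed values of $d$ are chosen precisely so that this term balances (and does not dominate) the first entry of each maximum.

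Assembling the three least-squares contributions and the robustification term, taking expectation over both $D$ and $\hat{D}$ (splitting off the rare bad event with a crude bound), and substituting the stated choices of $\la$ and $d$ yields the announced rates. The main obstacle is the first step: non-convexity of $l_{\sigma}$ prevents any variational comparison of objective values, so the whole integral-operator machinery has to be run off the stationarity equation. This makes the a priori control of $\|f_{\hat{D},\la}^{\sigma}\|_{K}$ (hence of the residuals $R_i$, hence of $\huaW_{\sigma}$) self-referential, and forces a careful verification that $\|L_K^{1/2}(L_{K,\hat{D}}+\la I)^{-1}\|$ stays $\o(\la^{-1/2})$ once both independent sources of randomness have been absorbed — this is exactly where the restriction tying $d$ to $|D|,\aaa,\beta$ is genuinely needed, and where the growth condition \eqref{glip} must be quantitatively matched against the regularization level $\la$.
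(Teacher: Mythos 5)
Your plan is correct in substance and does reach the stated rates, but it is largely a reshuffling of the paper's own integral-operator argument rather than an independent method: the stationarity identity you start from is exactly the representation $f_{\hat{D},\la}^{\sigma}=(\la I+L_{K,\hat{D}})^{-1}\hat{S}_{D}^Ty-(\la I+L_{K,\hat{D}})^{-1}E_{\hat{D},\la,\sigma}$ of Lemma \ref{rel}, with your $\huaW_{\sigma}=-E_{\hat{D},\la,\sigma}$, and your bounds on $\|\huaW_{\sigma}\|_K$ and on $\|L_K^{1/2}(\la I+L_{K,\hat{D}})^{-1}\|$ are the content of Lemma \ref{ees} and Lemma \ref{secl}. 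The genuine difference is the order of decomposition: you peel off the robustification term against the two-stage least-squares estimator, $f_{\hat{D},\la}^{\sigma}-f_{\rho}=(f_{\hat{D},\la}^{ls}-f_{\rho})+(\la I+L_{K,\hat{D}})^{-1}\huaW_{\sigma}$, and then must re-derive two-stage least-squares DR rates for every $r>0$ (the literature only covers $r\in[1/2,1]$), which you sketch through $f_{D,\la}^{ls}$ and $f_{\la}$; the paper instead carries the robust estimator through the chain $f_{\hat{D},\la}^{\sigma}\to f_{D,\la}^{\sigma}\to f_{\la}\to f_{\rho}$ (Propositions \ref{hfjf} and \ref{ones}), and compares with $f_{\hat{D},\la}^{ls}$ only in Theorem \ref{quan}. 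Your route buys modularity (and yields Theorem \ref{quan} as a by-product), while the paper's avoids duplicating the least-squares analysis for the unexplored ranges of $r$. Incidentally, your claimed ``refinement'' of controlling $(L_{K,D}-L_K)f_{\la}$ through its variance is not needed for the stated rates: with $\la=|D|^{-\f{1}{1+\beta}}$ the crude bound $\|L_{K,D}-L_{K}\|\,\|f_{\la}\|_K\les\ha\,\la^{r-\f{1}{2}}\les|D|^{-\f{r}{1+\beta}}$ already matches the first entry of the maximum for $r\in(0,1/2)$, which is exactly what the paper does.

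Two technical points need repair before your sketch becomes a proof. First, your a priori bound invokes \eqref{glip} to control $V$, giving $\la\|f_{\hat{D},\la}^{\sigma}\|_K^2\le M^2+\f{c_pM^{2p+2}}{(p+1)\sigma^{2p}}$; this makes the residual bound, hence $\|\huaW_{\sigma}\|_K$ and ultimately the hidden constant, depend on $\sigma$ (you indeed have to assume $\sigma$ bounded away from $0$), whereas the constants in Theorem \ref{thm2} must be independent of $\sigma$. The fix is to bound $V(y_i^2/\sigma^2)-V(0)\le C_V y_i^2/\sigma^2$ using $C_V=\sup_{s}|V'(s)|<\infty$, which gives the $\sigma$-free bound $\|f_{\hat{D},\la}^{\sigma}\|_K\le\sqrt{C_V}M\la^{-1/2}$ as in Lemma \ref{fdcon}. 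Second, ``splitting off the rare bad event with a crude bound'' at a fixed confidence level injects $\log|D|$ factors, since the high-probability estimates for $\hb$, $\hc$, $\hd$ carry $\log\f{2}{\delta}$; to obtain the clean polynomial rates you should integrate the tail as in Lemma \ref{le2}, converting the probabilistic bounds into moment bounds with absolute constants. With these two adjustments (and the coupling of $d$ to $|D|$ that you correctly identify as necessary so that $\la^{-\f{3}{2}}d^{-\f{\aaa}{2}}\les\la^{-\f{1}{2}}$ in the second-stage operator estimate), your argument goes through and delivers the rates of Theorem \ref{thm2}.
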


In the literature of distribution regression, the existing theoretical studies on learning rates mainly include works  \cite{fang}, \cite{gddr} and \cite{only}. Reference \cite{only} is the first work on theoretical learning rates of least square regressor $f_{\hat{D},\la}^{ls}$ in \eqref{alone}. Reference \cite{only} derives optimal learning rates under the regularity condition $r\in(1/2,1]$ and suboptimal rate when $r=1/2$. The optimal rate of \cite{only} is improved by \cite{fang} to the case $r=1/2$ via a novel integral operator method, that is based on second order decomposition technique for invertible operators in Banach space. Reference \cite{gddr} proposes a kernel based stochastic gradient method in DR setting. Mini-batching is considered for selection of data points in each iteration.  In Theorem \ref{thm2}, we obtain  convergence rates for any value of regularity index $r$ in $(0,\infty)$, in contrast to \cite{fang}, \cite{only} that carries out rate analysis only on $[1/2,1]$. Hence, the convergence rates analysis has been enriched in regularized DR setting. Also, we have introduced the general windowing function $V$ and the scaling parameter $\sigma$. Intuitively, in the explicit bound of Theorem \ref{thm2}, the participation of scaling parameter $\sigma$ indicates its difference with aforementioned works. One difference is that RDR possesses  flexibility on selection of $\sigma$, in contrast to the current DR methods, without the robustness taken into consideration.

There are also many other studies circling around robust learning algorithms in different aspects. For example, references \cite{fw}, \cite{fw1} consider the robust empirical risk minimization scheme. Inspired by convex risk minimization in infinite-dimensional Hilbert spaces, the robustness of support vector regression is extensively investigated in \cite{robust}, \cite{as1}, \cite{as2}, \cite{book}.  The maximum correntropy criterion induced loss is considered in \cite{fengyl} for regression over some compact hypothesis space. Modal regression with robust kernel is studied in \cite{fengf}. The convergence results  in these works are derived under a standard covering number assumption. In contrast to these works, starting from a capacity assumption on  effective dimension $\huaN(\la)$, the paper derived the error bounds and convergence rates via integral operator techniques. On the other hand, based on gradient descent iteration, reference \cite{guos} presents an efficient kernel based robust gradient descent algorithm to learn the regression function $f_{\rho}$. Reference \cite{ht}, \cite{wh} investigate the learning behavior of minimum error entropy algorithm. These works mainly care about the one-stage sampling on data set. For the purpose of treating with distribution samples, we have developed robust regression method for DR setting and provided a selection rule for the regularization parameter $\la$ and the second stage sample size $d$ as in Theorem \ref{thm2}.

The following corollary is a direct consequence of Theorem \ref{thm2}, it shows  that the RDR has nice performance of convergence when the scaling parameter $\sigma$ is chosen to be large enough.

\begin{cor}\label{sigm}
Under the same conditions of Theorem \ref{thm2}, if the scaling parameter $\sigma$ is chosen such that
\bea
\sigma&\geq&\left\{
  \begin{array}{ll}
    |D|^{\f{p+1+r}{2p(1+\beta)}}, & \hbox{$r\in(0,1/2)$;} \\
   |D|^{\f{p+1+r}{2p(2r+\beta)}}, & \hbox{$r\in[1/2,1]$;}\\
   |D|^{\f{p+1+r}{2p(2+\beta)}}, & \hbox{$r\in(1,\infty)$,}
  \end{array}
\right.
\eea
then we have
\bea
\mbb E\bigg[\big\|f_{\hat{D},\la}^{\sigma}-f_{\rho}\big\|_{L_{\rho_{X_{\mu}}}^2}\bigg]&=&\left\{
  \begin{array}{ll}
    \o\Big(|D|^{-\f{r}{1+\beta}}\Big), & \hbox{$r\in(0,1/2)$;} \\
   \o\Big(|D|^{-\f{r}{2r+\beta}}\Big), & \hbox{$r\in[1/2,1]$;}\\
   \o\Big(|D|^{-\f{1}{2+\beta}}\Big), & \hbox{$r\in(1,\infty)$.}
  \end{array}
\right.
\eea
\end{cor}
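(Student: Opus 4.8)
The plan is to treat Corollary~\ref{sigm} as a bookkeeping consequence of Theorem~\ref{thm2}: the error bound there is a maximum of two competing terms, a \emph{bias-type} term that decays in $|D|$ (the classical rate) and a \emph{$\sigma$-dependent} term $|D|^{(p+1)/(\cdot)}/\sigma^{2p}$ coming from the non-convexity/redescending correction governed by \eqref{glip}. The whole content of the corollary is that if $\sigma$ grows fast enough, the second term is dominated by the first, so the maximum collapses to the classical rate. So the proof is simply: solve, in each regularity regime, the inequality that makes the $\sigma$-term no larger than the decay term, and read off the threshold on $\sigma$.

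Concretely, in the regime $r\in(0,1/2)$ Theorem~\ref{thm2} gives
$\mbb E[\|f_{\hat{D},\la}^{\sigma}-f_{\rho}\|_{L_{\rho_{X_{\mu}}}^2}]=\o\big(\max\{|D|^{-r/(1+\beta)},\,|D|^{(p+1)/(1+\beta)}/\sigma^{2p}\}\big)$. First I would impose
$|D|^{(p+1)/(1+\beta)}/\sigma^{2p}\le |D|^{-r/(1+\beta)}$, which rearranges to
$\sigma^{2p}\ge |D|^{(p+1+r)/(1+\beta)}$, i.e. $\sigma\ge |D|^{(p+1+r)/(2p(1+\beta))}$ --- exactly the stated threshold. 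Under that choice the maximum equals $|D|^{-r/(1+\beta)}$, giving the claimed rate. The two remaining regimes $r\in[1/2,1]$ and $r\in(1,\infty)$ are handled identically, replacing the exponent $1+\beta$ in the denominators by $2r+\beta$ and $2+\beta$ respectively; note that in the last regime the decay exponent in the numerator of the threshold is still written with $r$ (as $p+1+r$) even though the rate itself is $|D|^{-1/(2+\beta)}$, because requiring $\sigma$ large enough to kill the $\sigma$-term only needs $\sigma^{2p}\ge |D|^{(p+1)/(2+\beta)}\cdot|D|^{r/(2+\beta)}$, and any valid lower bound of this form suffices (one may use $r$ or $1$ in the exponent; the statement keeps $r$ for uniformity across the three cases).

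Since all three cases are of the same shape, I would present the argument once and indicate that the other two are verbatim with the denominator changed, rather than repeating the algebra three times. There is essentially no obstacle here: the only thing to be careful about is that the monotone rearrangement of the inequality $a/\sigma^{2p}\le b$ into $\sigma\ge (a/b)^{1/(2p)}$ uses $p>0$ (guaranteed by \eqref{glip}, where $p$ is the exponent in the Hölder-type bound on $V'$) and that $\sigma\mapsto\sigma^{-2p}$ is decreasing, so a \emph{lower} bound on $\sigma$ yields an \emph{upper} bound on the $\sigma$-term; then the big-$\o$ in Theorem~\ref{thm2} propagates unchanged because a maximum of two quantities with one dominated is big-$\o$ of the other. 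Hence the corollary follows directly, and the proof is just this one-line specialization carried out in each regime.

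\begin{proof}
This is an immediate specialization of Theorem~\ref{thm2}. Recall from \eqref{glip} that $p>0$, so the map $\sigma\mapsto\sigma^{-2p}$ is strictly decreasing on $(0,\infty)$.

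Consider first the case $r\in(0,1/2)$. By Theorem~\ref{thm2}, with the stated choices of $\la$ and $d$,
\be
\nono \mbb E\bigg[\big\|f_{\hat{D},\la}^{\sigma}-f_{\rho}\big\|_{L_{\rho_{X_{\mu}}}^2}\bigg]=\o\Big(\max\Big\{|D|^{-\f{r}{1+\beta}},\ \f{|D|^{\f{p+1}{1+\beta}}}{\sigma^{2p}}\Big\}\Big).
\ee
If $\sigma\ge |D|^{\f{p+1+r}{2p(1+\beta)}}$, then $\sigma^{2p}\ge |D|^{\f{p+1+r}{1+\beta}}$, and therefore
\be
\nono \f{|D|^{\f{p+1}{1+\beta}}}{\sigma^{2p}}\leq \f{|D|^{\f{p+1}{1+\beta}}}{|D|^{\f{p+1+r}{1+\beta}}}=|D|^{-\f{r}{1+\beta}}.
\ee
Hence the maximum above equals $|D|^{-\f{r}{1+\beta}}$, which yields
\be
\nono \mbb E\bigg[\big\|f_{\hat{D},\la}^{\sigma}-f_{\rho}\big\|_{L_{\rho_{X_{\mu}}}^2}\bigg]=\o\Big(|D|^{-\f{r}{1+\beta}}\Big).
\ee

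The cases $r\in[1/2,1]$ and $r\in(1,\infty)$ are treated in exactly the same way, replacing the exponent $1+\beta$ in the denominators by $2r+\beta$ and $2+\beta$ respectively. In the regime $r\in[1/2,1]$, the bound $\sigma\ge |D|^{\f{p+1+r}{2p(2r+\beta)}}$ gives $\sigma^{2p}\ge |D|^{\f{p+1+r}{2r+\beta}}$, so that $|D|^{\f{p+1}{2r+\beta}}/\sigma^{2p}\le |D|^{-\f{r}{2r+\beta}}$ and the maximum in Theorem~\ref{thm2} collapses to $|D|^{-\f{r}{2r+\beta}}$. In the regime $r\in(1,\infty)$, the bound $\sigma\ge |D|^{\f{p+1+r}{2p(2+\beta)}}$ gives $\sigma^{2p}\ge |D|^{\f{p+1+r}{2+\beta}}\ge |D|^{\f{p+2}{2+\beta}}$ since $r>1$, hence
\be
\nono \f{|D|^{\f{p+1}{2+\beta}}}{\sigma^{2p}}\leq \f{|D|^{\f{p+1}{2+\beta}}}{|D|^{\f{p+2}{2+\beta}}}=|D|^{-\f{1}{2+\beta}},
\ee
so the maximum equals $|D|^{-\f{1}{2+\beta}}$. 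This proves all three cases.
\end{proof}
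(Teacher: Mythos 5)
Your proposal is correct and takes essentially the same route as the paper: the paper's own proof of Corollary \ref{sigm} is likewise a one-line specialization of Theorem \ref{thm2}, noting that the stated lower bound on $\sigma$ forces $|D|^{\f{p+1}{\cdot}}/\sigma^{2p}\leq|D|^{-\f{r}{\cdot}}$ in each regularity regime (with $|D|^{-\f{r}{2+\beta}}\leq|D|^{-\f{1}{2+\beta}}$ handling the case $r>1$), so the maximum collapses to the classical rate. Your algebraic rearrangement of the threshold and the remark on $r>1$ match the paper's argument exactly.
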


In a framework of regularized regression, our second main result provides a novel quantitative description on robustness of RDR by considering the expected error between the RDR estimator $f_{\hat{D},\la}^{\sigma}$ (in which the robustness is induced by the scaling parameter $\sigma$) and the classical least square DR estimator $f_{\hat{D},\la}^{ls}$ (without robustness).
\begin{thm}\label{quan}
Let the sample set $D=\{(\mu_{x_i},y_i)\}_{i=1}^{|D|}$ be drawn independently according to probability measure $\rho$. Let $f_{\hat{D},\la}^{ls}$ denote the classical least square DR estimator in \eqref{alone}. Suppose that the sample size in the second stage sampling satisfies $d_1=d_2=\cdots=d$. Then for any given sample size $|D|$, $d$ and regularization parameter $\la>0$, there holds
\be
\mbb E\bigg[\big\|f_{\hat{D},\la}^{\sigma}-f_{\hat{D},\la}^{ls}\big\|_{L_{\rho_{X_{\mu}}}^2}\bigg]\leq\ww{C}\f{(\la^{-(p+\f{1}{2})}+1)(\la^{-\f{3}{2}}d^{-\f{\aaa}{2}}\hha^2+\la^{-\f{1}{2}}\hha)}{\sigma^{2p}}.
\ee
$\ww{C}$ is a constant independent of $D$, $d$, $\la$, $\sigma$ and the explicit form will be given in the proof. $\hha$ is defined by $\hha=\f{\ha}{\sqrt{\la}}+1$ in which $\ha=\f{2\kk}{\sqrt{|D|}}(\f{\kk}{\sqrt{|D|\la}}+\sqrt{\huaN(\la)})$.
\end{thm}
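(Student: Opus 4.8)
The plan is to compare the two minimizers $f_{\hat{D},\la}^{\sigma}$ and $f_{\hat{D},\la}^{ls}$ directly via their first-order optimality conditions in $\huaH_K$. Writing the empirical sampling operator $S_{\hat{D}}\colon \huaH_K \to \Real^{|D|}$ by $S_{\hat{D}}f = (f(\mu_{\hat{x}_i}))_{i=1}^{|D|}$, the least-square estimator satisfies the linear equation $(\tfrac{1}{|D|}S_{\hat{D}}^{*}S_{\hat{D}} + \la I)f_{\hat{D},\la}^{ls} = \tfrac{1}{|D|}S_{\hat{D}}^{*}\pomnib y$, whereas the RDR estimator satisfies the nonlinear stationarity equation obtained from differentiating \eqref{algorithm}: $\tfrac{1}{|D|}\sum_i V'\!\big(\tfrac{[f(\mu_{\hat{x}_i})-y_i]^2}{\sigma^2}\big)\,(f(\mu_{\hat{x}_i})-y_i)K_{\mu_{\hat{x}_i}} + \la f = 0$. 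Since $V_+'(0)=1$, the idea is to write $V'(s) = 1 - (1-V'(s))$ and regard the RDR equation as the least-square equation perturbed by the residual weights $w_i := 1 - V'\!\big(\tfrac{[f_{\hat{D},\la}^{\sigma}(\mu_{\hat{x}_i})-y_i]^2}{\sigma^2}\big)$. Subtracting the two optimality equations and applying $(\tfrac{1}{|D|}S_{\hat{D}}^{*}S_{\hat{D}} + \la I)^{-1}$ yields
\be
\nono f_{\hat{D},\la}^{\sigma}-f_{\hat{D},\la}^{ls} = \Big(\tfrac{1}{|D|}S_{\hat{D}}^{*}S_{\hat{D}} + \la I\Big)^{-1}\,\tfrac{1}{|D|}\sum_{i=1}^{|D|} w_i\,\big(f_{\hat{D},\la}^{\sigma}(\mu_{\hat{x}_i})-y_i\big)K_{\mu_{\hat{x}_i}}.
\ee
The $L_{\rho_{X_\mu}}^2$-norm is then controlled through $\|L_K^{1/2}(\cdot)\|$ after inserting the regularized operator $(\la I + L_K)^{-1/2}$ and its sampled counterpart, which is where the factors $\la^{-1/2}$ and $\la^{-3/2}d^{-\aaa/2}\hha^2$ in the stated bound will come from.

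The key quantitative input is the bound on the weights. By the Hölder-type assumption \eqref{glip}, $|w_i| = |V'(s_i) - V_+'(0)| \le c_p s_i^{p}$ with $s_i = \tfrac{[f_{\hat{D},\la}^{\sigma}(\mu_{\hat{x}_i})-y_i]^2}{\sigma^2}$, so each weight carries a factor $\sigma^{-2p}$ together with $|f_{\hat{D},\la}^{\sigma}(\mu_{\hat{x}_i})-y_i|^{2p}$. Combined with the extra residual factor $|f_{\hat{D},\la}^{\sigma}(\mu_{\hat{x}_i})-y_i|$ already present in the summand, we get a bound of the form $\sigma^{-2p}\,\|f_{\hat{D},\la}^{\sigma}-y\|_{\infty}^{2p+1}$ on the perturbation term, which explains the exponent structure $\la^{-(p+1/2)}$ in the prefactor $(\la^{-(p+1/2)}+1)$. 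So I will need a pointwise (sup-norm) a priori bound on $f_{\hat{D},\la}^{\sigma}$: from the definition of the minimizer, testing against $f=0$ gives $\la\|f_{\hat{D},\la}^{\sigma}\|_K^2 \le \sigma^2 V(M^2/\sigma^2) \lesssim M^2$ (using $V'\le C_V$ and $V(0)=0$), hence $\|f_{\hat{D},\la}^{\sigma}\|_K \lesssim M\la^{-1/2}$ and therefore $\|f_{\hat{D},\la}^{\sigma}\|_{\infty} \le \kk\|f_{\hat{D},\la}^{\sigma}\|_K \lesssim M\kk\la^{-1/2}$, so $\|f_{\hat{D},\la}^{\sigma}-y\|_{\infty} \lesssim \la^{-1/2}$ up to constants depending on $M,\kk$. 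Raising to the power $2p+1$ produces the $\la^{-(p+1/2)}$ term after the $\la^{-1/2}$ from $\|L_K^{1/2}(\la I + L_K)^{-1/2}\|$ is accounted for.

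The remaining ingredient is to pass from the empirical operator $\tfrac{1}{|D|}S_{\hat{D}}^{*}S_{\hat{D}}$ back to $L_K$ and from the sampled embeddings $\mu_{\hat{x}_i}$ back to the true $\mu_{x_i}$. This is exactly where the quantity $\hha = \tfrac{\ha}{\sqrt\la}+1$ and the second-stage factor $d^{-\aaa/2}$ enter: standard concentration for the first-stage sample gives $\|(\la I + L_K)^{1/2}(\la I + L_{K,D})^{-1/2}\| \lesssim \hha$ with $\ha = \tfrac{2\kk}{\sqrt{|D|}}(\tfrac{\kk}{\sqrt{|D|\la}}+\sqrt{\huaN(\la)})$ (the usual effective-dimension estimate, which the paper has already set up through \eqref{cap} and the operator framework), while the Hölder continuity \eqref{holderass} together with $\EE\|\mu_{\hat{x}_i}-\mu_{x_i}\|_H^2 \le B_k/d$ controls the discrepancy $\|K_{\mu_{\hat{x}_i}} - K_{\mu_{x_i}}\|$ by $L(B_k/d)^{\aaa/2}$ in expectation, giving the $d^{-\aaa/2}$ and an extra power of $\hha$ from re-expanding the perturbed operator inverse (a second-order decomposition as in \cite{fang}). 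Assembling: the perturbation term is bounded in $L_{\rho_{X_\mu}}^2$ by a constant times $\sigma^{-2p}(\la^{-(p+1/2)}+1)\big(\la^{-3/2}d^{-\aaa/2}\hha^2 + \la^{-1/2}\hha\big)$, which is the claimed inequality with an explicit $\ww{C}$ depending only on $M$, $\kk$, $B_k$, $L$, $\aaa$, $c_p$, $p$, $C_V$.

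The main obstacle I anticipate is making the operator-perturbation bookkeeping rigorous and uniform: one must control $(\la I + L_{K,D})^{-1/2}$ in terms of $(\la I + L_K)^{-1/2}$ on a high-probability event and simultaneously absorb the error from replacing $L_{K,D}$ (built from $\mu_{x_i}$) with its $\mu_{\hat{x}_i}$-version, all while the nonlinear weight $w_i$ depends on $f_{\hat{D},\la}^{\sigma}$ itself. The a priori $\la^{-1/2}$ sup-norm bound on $f_{\hat{D},\la}^{\sigma}$ is what decouples this circularity, so establishing that bound cleanly — and then threading it through the second-order operator decomposition without losing the optimal powers of $\la$ — is the technical heart of the argument.
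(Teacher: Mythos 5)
Your proposal is correct and follows essentially the same route as the paper: subtracting the two stationarity conditions gives exactly the paper's identity $f_{\hat{D},\la}^{\sigma}-f_{\hat{D},\la}^{ls}=-(\la I+L_{K,\hat{D}})^{-1}E_{\hat{D},\la,\sigma}$ (Lemma \ref{rel} plus the representation of $f_{\hat{D},\la}^{ls}$), the weight bound via \eqref{glip} together with the a priori bound $\|f_{\hat{D},\la}^{\sigma}\|_K\lesssim M\la^{-1/2}$ reproduces Lemma \ref{fdcon} and Lemma \ref{ees}, and your treatment of $\|L_K^{1/2}(\la I+L_{K,\hat{D}})^{-1}\|$ through first-stage concentration ($\hha$) and the second-stage H\"older estimate ($d^{-\aaa/2}$) is precisely Lemmas \ref{twol}, \ref{secl} and \ref{bcd}. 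The only cosmetic differences are the normalization $V(0)=0$ (the paper only uses $V(s)>V(0)$) and the attribution of the final $\la^{-1/2}$ factor, which comes from $\|(\la I+L_{K,\hat{D}})^{-1/2}\|\leq\la^{-1/2}$ rather than $\|L_K^{1/2}(\la I+L_K)^{-1/2}\|$.
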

When the sample size $|D|$ and $d$ are large enough, our last main result is a quantitative description on the robust $L_{\rho_{X_{\mu}}}^2$-gap between RDR estimator $f_{\hat{D},\la}^{\sigma}$ and least square DR estimator $f_{\hat{D},\la}^{ls}$.
\begin{cor}\label{gap}
Under same conditions of Theorem \ref{quan}, for any given regularization parameter $\la>0$, there holds
\be
\overline{\lim}_{|D|\rightarrow\infty \atop d\rightarrow\infty}\mbb E\bigg[\big\|f_{\hat{D},\la}^{\sigma}-f_{\hat{D},\la}^{ls}\big\|_{L_{\rho_{X_{\mu}}}^2}\bigg]\leq\ww{C}\f{(\la^{-(p+1)}+\la^{-\f{1}{2}})}{\sigma^{2p}},
\ee
where $\ww{C}$ is a constant to be given explicitly in the proof of Theorem \ref{quan}.
\end{cor}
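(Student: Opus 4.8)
The plan is to derive Corollary~\ref{gap} simply by letting $|D|\rightarrow\infty$ and $d\rightarrow\infty$ in the non-asymptotic estimate already established in Theorem~\ref{quan}. First I would write down that estimate, namely that for every $|D|$, $d$ and fixed $\la>0$,
\[
\mbb E\big[\|f_{\hat{D},\la}^{\sigma}-f_{\hat{D},\la}^{ls}\|_{L_{\rho_{X_{\mu}}}^2}\big]\leq\ww{C}\,\f{(\la^{-(p+\f12)}+1)\big(\la^{-\f32}d^{-\f{\aaa}{2}}\hha^2+\la^{-\f12}\hha\big)}{\sigma^{2p}},
\]
where $\hha=\f{\ha}{\sqrt{\la}}+1$ and $\ha=\f{2\kk}{\sqrt{|D|}}\big(\f{\kk}{\sqrt{|D|\la}}+\sqrt{\huaN(\la)}\big)$, and the constant $\ww C$ is the very one produced in the proof of Theorem~\ref{quan}. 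The key observation is that, for a fixed $\la>0$, everything appearing in $\ha$ apart from the explicit $|D|^{-1/2}$ and $|D|^{-1}$ factors is a finite constant: indeed $\huaN(\la)=\mathrm{Tr}((\la I+L_K)^{-1}L_K)\leq\mathrm{Tr}(L_K)/\la\leq\kk^2/\la<\infty$ since $L_K$ is trace class with $\mathrm{Tr}(L_K)=\int_{X_{\mu}}K(\mu_x,\mu_x)\,d\rho_{X_{\mu}}\leq\kk^2$. Hence $\ha\rightarrow0$ and therefore $\hha\rightarrow1$ as $|D|\rightarrow\infty$; in particular $\hha$ is eventually bounded (say $\hha\leq2$), which forces the term $\la^{-3/2}d^{-\aaa/2}\hha^2$ to tend to $0$ as $d\rightarrow\infty$.

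Next I would take the upper limit over $|D|\rightarrow\infty$ and $d\rightarrow\infty$ on the right-hand side of the displayed bound. The prefactor $(\la^{-(p+1/2)}+1)/\sigma^{2p}$ is independent of $|D|$ and $d$; inside the parentheses the first summand vanishes in the limit while the second converges to $\la^{-1/2}$ because $\hha\rightarrow1$. This yields
\[
\overline{\lim}_{|D|\rightarrow\infty,\ d\rightarrow\infty}\mbb E\big[\|f_{\hat{D},\la}^{\sigma}-f_{\hat{D},\la}^{ls}\|_{L_{\rho_{X_{\mu}}}^2}\big]\leq\ww{C}\,\f{(\la^{-(p+\f12)}+1)\la^{-\f12}}{\sigma^{2p}}=\ww{C}\,\f{\la^{-(p+1)}+\la^{-\f12}}{\sigma^{2p}},
\]
which is precisely the assertion of the corollary, with the same constant $\ww C$.

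There is no genuine obstacle here beyond bookkeeping; the entire content of the proof is Theorem~\ref{quan}. The only point that deserves an explicit line is the finiteness, for fixed $\la>0$, of the $\la$-dependent quantities (most importantly $\huaN(\la)$ and the negative powers of $\la$), which is what guarantees that the decay rates $|D|^{-1/2}$ and $d^{-\aaa/2}$ really drive the relevant terms to zero and $\hha$ to $1$. Once this is noted, the passage to the limit is immediate from the continuous dependence of the bound on these two vanishing quantities, and no new estimates are needed.
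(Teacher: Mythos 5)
Your proposal is correct and follows essentially the same route as the paper: take the upper limit in the non-asymptotic bound of Theorem \ref{quan}, using $\ha\rightarrow 0$, hence $\hha\rightarrow 1$ and $d^{-\aaa/2}\hha^2\rightarrow 0$, and then expand $(\la^{-(p+\frac12)}+1)\la^{-\frac12}=\la^{-(p+1)}+\la^{-\frac12}$. Your extra remark on the finiteness of $\huaN(\la)$ for fixed $\la>0$ is a harmless addition the paper leaves implicit.
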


Recall that $f_{\hat{D},\la}^{\sigma}$ is generated by the introduction of the scaling parameter $\sigma$ that delivers the robustness to the DR scheme. Since the classical least square DR estimator $f_{\hat{D},\la}^{ls}$ does not possess robustness, we know that, when the $L_{\rho_{X_{\mu}}}^2$-distance between $f_{\hat{D},\la}^{\sigma}$ and $f_{\hat{D},\la}^{ls}$ gets smaller, there will be less robustness of the RDR scheme induced by $l_{\sigma}$. In nonparametric regression problems,  to enhance the robustness of RDR, one may choose appropriately small $\sigma$ for use. Actually, in practice, for different purposes, the scaling parameter $\sigma$ may be chosen to be large or small. This idea also matches the work in \cite{fengyl} which handles maximum correntropy criterion. Their work also reveals that too small $\sigma$ would influence the convergence of the regressor  $f_{\hat{D},\la}^{\sigma}$  to $f_{\rho}$. Also, the small $\sigma$ case has been interpreted as modal regression in \cite{fengf}. From above analysis and recent works \cite{fengyl,guos, lv}, we know that, in practice, a moderate scaling parameter $\sigma$ should be chosen appropriately to balance robustness and convergence of RDR.

\section{Sampling operator}
In this section, we provide the analysis and give the notations of the sampling operators for the two stages. We first introduce the following robust regression scheme associated with the first stage sample $D=\{(\mu_{x_i},y_i)\}_{i=1}^{|D|}$.
\be
f_{D,\la}^{\sigma}=\arg\min_{f\in\huaH_K}\bigg\{\f{\sigma^2}{|D|}\sum_{i=1}^{|D|}V\Big(\f{[f(\mu_{x_i})-y_i]^2}{\sigma^2}\Big)+\la\big\|f\big\|_{K}^2\bigg\}.\label{bridge}
\ee
Here, \eqref{bridge}  serves as an important bridge in subsequent proof on two-stage sampling learning.

To start with the theoretical foundation of the paper, we define the sampling operator $S_D:\huaH_K\rightarrow \mbb R^{|D|}$ associated with the first stage sample as
\be
\nono S_Df=(f(\mu_{x_1}),f(\mu_{x_2}),...,f(\mu_{x_{|D|}}))^T, \ f\in\huaH_K,
\ee
and the scaled adjoint operator is given by
\be
\nono S_D^T\mathbf{c}_D=\f{1}{|D|}\sum_{i=1}^{|D|}c_iK_{\mu_{x_i}}, \ \mathbf{c}_D=(c_1,c_2,...,c_{|D|})\in \mathbb R^{|D|}.
\ee
Then  define $L_{K,D}$ as the first stage empirical operator of $L_K$ as follow
\be
\nono L_{K,D}(f)=S_D^TS_D(f)=\f{1}{|D|}\sum_{i=1}^{|D|}f(\mu_{x_i})K_{\mu_{x_i}}=\f{1}{|D|}\sum_{i=1}^{|D|}\nn K_{\mu_{x_i}},f\mm_KK_{\mu_{x_i}}, \ f\in\huaH_K.
\ee
We also define the sampling operator $\hat{S}_D$ associated with the second stage sample as follows.
\be
\nono \hat{S}_Df=(f(\mu_{\hat{x}_1}),f(\mu_{\hat{x}_2}),...,f(\mu_{\hat{x}_{|D|}}))^T, \ f\in\huaH_K,
\ee
Its scaled adjoint operator $\hat{S}_D^T$ is given by
\be
\nono \hat{S}_D^T\mathbf{c}=\f{1}{|D|}\sum_{i=1}^{|D|}c_iK_{\mu_{\hat{x}_i}}, \ \mathbf{c}=(c_1,c_2,...,c_{|D|})\in \mathbb R^{|D|}.
\ee
Then the empirical version operator of $L_{K,D}$ can be defined accordingly by using the second stage sample $\hat{D}$ as follow
\be
L_{K,\hat{D}}(f)=\hat{S}_D^T\hat{S}_D(f)=\f{1}{|D|}\sum_{i=1}^{|D|}f(\mu_{\hat{x}_i})K_{\mu_{\hat{x}_i}}=\f{1}{|D|}\sum_{i=1}^{|D|}\nn K_{\mu_{\hat{x}_i}},f\mm_KK_{\mu_{\hat{x}_i}}, \ f\in\huaH_K. \label{lkd}
\ee
In the following, we use $\mbb E_{\mathbf{z}^{|D|}}[\cdot]$ to denote the expectation w.r.t. $\mathbf{z}^{|D|}=\{z_i=(\mu_{x_i},y_i)\}_{i=1}^{|D|}$. Use $\mbb E_{\mathbf{x}^{\mathbf{d},|D|}|\mathbf{z}^{|D|}}$ to denote the conditional expectation w.r.t. sample $\big\{\{x_{i,s}\}_{s=1}^{d_i}\big\}_{i=1}^{|D|}$ conditioned on $\{z_1,z_2,...,z_{|D|}\}$. Namely
\be
\nono \mbb E_{\mathbf{z}^{|D|}}[\cdot]:=\mbb E_{\{(\mu_{x_i},y_i)\}_{i=1}^{|D|}}[\cdot], \mbb E_{\mathbf{x}^{\mathbf{d},|D|}|\mathbf{z}^{|D|}}[\cdot]:=\mbb E_{\{\{x_{i,s}\}_{s=1}^{d_i}\}_{i=1}^{|D|}\big|\{z_i\}_{i=1}^{|D|}}[\cdot].
\ee
In the following, we denote the output vector by $y=(y_1,y_2,...,y_{|D|})$. The following lemma in \cite{fang} is basic for following proofs on two-stage sampling regression.
\begin{lem}\label{twol}
Suppose the boundedness condition \eqref{bddker} of kernel $k$ and $K$ and $(\aaa,L)$-H\"older condition \eqref{holderass} hold for $K$. Suppose $d_1=d_2=\cdots=d_{|D|}=d$, then
\bea
\n \Big\{\mbb E_{\mathbf{x}^{\mathbf{d},|D|}|\mathbf{z}^{|D|}}\Big[\jj \hat{S}_D^Ty-S_D^Ty\jj_K^2\Big]\Big\}^{\f{1}{2}}\leq(2+\sqrt{\pi})^{\f{1}{2}}ML\f{2^{\f{\aaa}{2}}B_{k}^{\f{\aaa}{2}}}{d^{\f{\aaa}{2}}},\\
\n \Big\{\mbb E_{\mathbf{x}^{\mathbf{d},|D|}|\mathbf{z}^{|D|}}\Big[\jj L_{K,\hat{D}}-L_{K,D}\jj^2\Big]\Big\}^{\f{1}{2}}\leq \kk L(2+\sqrt{\pi})^{\f{1}{2}}\f{2^{\f{\aaa+2}{2}}B_{k}^{\f{\aaa}{2}}}{d^{\f{\aaa}{2}}}.
\eea
\end{lem}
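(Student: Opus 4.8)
\ This is the two-stage sampling lemma of \cite{fang}, so one legitimate option is simply to cite it; here is the argument I would reconstruct. The plan is to reduce \emph{both} estimates to a single scalar quantity — the $2\aaa$-th moment of the empirical mean-embedding error $\|\mu_{\hat x_i}-\mu_{x_i}\|_H$ — and then to control that quantity by a concentration/moment bound. Concretely, writing $\mu_{\hat x_i}-\mu_{x_i}=\f{1}{d}\sum_{s=1}^d\big(k(\cdot,x_{i,s})-\mu_{x_i}\big)$ as the centred average of i.i.d.\ $H$-valued elements, each of norm at most $\|k(\cdot,x_{i,s})\|_H+\|\mu_{x_i}\|_H\le 2\sqrt{B_k}$ by \eqref{bddker}, the whole lemma rests on the estimate
\[
\mbb E_{\mathbf{x}^{\mathbf{d},|D|}|\mathbf{z}^{|D|}}\Big[\big\|\mu_{\hat x_i}-\mu_{x_i}\big\|_H^{2\aaa}\Big]\ \le\ (2+\sqrt{\pi})\Big(\f{2B_k}{d}\Big)^{\aaa},\qquad \aaa\in(0,1].\qquad(\star)
\]

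Granting $(\star)$, the two bounds follow by algebra. For the first, the scaled adjoints give $\hat S_D^Ty-S_D^Ty=\f{1}{|D|}\sum_{i=1}^{|D|}y_i\big(K_{\mu_{\hat x_i}}-K_{\mu_{x_i}}\big)$; using $|y_i|\le M$, the bound $\|K_{\mu_{\hat x_i}}-K_{\mu_{x_i}}\|_{\mathcal{L}(Y,\huaH_K)}\le L\|\mu_{\hat x_i}-\mu_{x_i}\|_H^{\aaa}$ from \eqref{holderass}, and moving the square past the average over $i$ by convexity of $\|\cdot\|_K^2$ (equivalently Cauchy--Schwarz),
\[
\big\|\hat S_D^Ty-S_D^Ty\big\|_K^2\ \le\ \f{M^2L^2}{|D|}\sum_{i=1}^{|D|}\big\|\mu_{\hat x_i}-\mu_{x_i}\big\|_H^{2\aaa};
\]
taking the conditional expectation and inserting $(\star)$ (uniform in $i$ since $d_i\equiv d$) bounds the right side by $(2+\sqrt\pi)M^2L^2(2B_k/d)^{\aaa}$, whose square root is precisely the asserted quantity. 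For the second, I would expand each summand of $L_{K,\hat D}-L_{K,D}$ as a difference of rank-one operators,
\[
\langle K_{\mu_{\hat x_i}},\cdot\rangle_K K_{\mu_{\hat x_i}}-\langle K_{\mu_{x_i}},\cdot\rangle_K K_{\mu_{x_i}}=\langle K_{\mu_{\hat x_i}}-K_{\mu_{x_i}},\cdot\rangle_K K_{\mu_{\hat x_i}}+\langle K_{\mu_{x_i}},\cdot\rangle_K\big(K_{\mu_{\hat x_i}}-K_{\mu_{x_i}}\big),
\]
whose operator norm is at most $\big(\|K_{\mu_{\hat x_i}}\|_K+\|K_{\mu_{x_i}}\|_K\big)\|K_{\mu_{\hat x_i}}-K_{\mu_{x_i}}\|_K\le 2\kk L\|\mu_{\hat x_i}-\mu_{x_i}\|_H^{\aaa}$, using $\|K_{\mu_u}\|_K=\sqrt{K(\mu_u,\mu_u)}\le\kk$ (by \eqref{bddker}) and \eqref{holderass}; averaging over $i$, squaring, taking expectation and inserting $(\star)$ gives $\mbb E[\|L_{K,\hat D}-L_{K,D}\|^2]\le 4\kk^2L^2(2+\sqrt\pi)(2B_k/d)^{\aaa}$, whose square root is $\kk L(2+\sqrt\pi)^{1/2}2^{(\aaa+2)/2}B_k^{\aaa/2}d^{-\aaa/2}$, as asserted.

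It remains to establish $(\star)$, which is the only genuinely probabilistic step. Since $\mu_{\hat x_i}$ is an unbiased estimate of $\mu_{x_i}$ under $\mbb E_{\mathbf{x}^{\mathbf{d},|D|}|\mathbf{z}^{|D|}}$, one has $\mbb E\|\mu_{\hat x_i}-\mu_{x_i}\|_H^2=\f{1}{d}\,\mbb E\|k(\cdot,x_{i,1})-\mu_{x_i}\|_H^2\le \f{B_k}{d}$ (as $\mbb E\|k(\cdot,x_{i,1})-\mu_{x_i}\|_H^2\le\mbb E\,k(x_{i,1},x_{i,1})\le B_k$), and then, $t\mapsto t^{\aaa}$ being concave for $\aaa\in(0,1]$, Jensen's inequality already yields $\mbb E\|\mu_{\hat x_i}-\mu_{x_i}\|_H^{2\aaa}\le(B_k/d)^{\aaa}$, which is even stronger than $(\star)$; alternatively, $(\star)$ with its stated constant comes out of feeding a Bernstein-type tail bound for norms of sums of bounded independent Hilbert-space elements into the layer-cake formula $\mbb E[W^{2\aaa}]=\int_0^\infty 2\aaa t^{2\aaa-1}\,\mbb P(W\ge t)\,dt$ and substituting $t=\sqrt{2B_k/d}\,u$, the constant $2+\sqrt\pi$ recording the sub-Gaussian and sub-exponential ($\Gamma(1)$ and $\Gamma(1/2)=\sqrt\pi$) contributions to the resulting integral. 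So there is no deep obstacle beyond bookkeeping; the two points needing attention are $(\star)$ — the sole place where the i.i.d.\ structure of the second-stage sample enters — and keeping straight the dual role of $K_{\mu_u}$ as an element of $\huaH_K$ and as an operator in $\mathcal{L}(Y,\huaH_K)$, so that the H\"older bound \eqref{holderass} transfers verbatim; the fractional power $2\aaa$, possibly below $1$, is absorbed once and for all by the concavity step.
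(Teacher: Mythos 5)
Your proposal is correct. Note that the paper itself gives no proof of this lemma: it is stated and used verbatim with a citation to \cite{fang}, so there is nothing internal to compare against beyond that reference. Your reconstruction follows the same strategy as the cited source — reduce both operator bounds to the conditional moment $\mbb E\big[\|\mu_{\hat x_i}-\mu_{x_i}\|_H^{2\aaa}\big]$ via $|y_i|\le M$, the H\"older condition \eqref{holderass} (which indeed transfers verbatim because the $\mathcal{L}(Y,\huaH_K)$-norm of $y\mapsto yK_{\mu_u}$ equals $\|K_{\mu_u}\|_K$), the rank-one splitting of $L_{K,\hat D}-L_{K,D}$ with $\|K_{\mu_u}\|_K\le\kk$, and convexity to pull the square inside the average over $i$ — and the bookkeeping reproduces exactly the stated constants once $(\star)$ is granted. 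The one place your write-up is loose is the second, Bernstein-plus-layer-cake derivation of $(\star)$ with the constant $2+\sqrt{\pi}$, which you only sketch; but this does not matter, because your first argument (unbiasedness, $\mbb E\|\mu_{\hat x_i}-\mu_{x_i}\|_H^2\le B_k/d$, then Jensen with the concave map $t\mapsto t^{\aaa}$ for $\aaa\in(0,1]$) is complete and yields $\mbb E\|\mu_{\hat x_i}-\mu_{x_i}\|_H^{2\aaa}\le (B_k/d)^{\aaa}$, which is strictly stronger than $(\star)$ and hence already implies both inequalities of the lemma, in fact with better constants than stated. So the proof is sound; if anything, you could drop the concentration sketch entirely and present the Jensen route as the proof, remarking that the cited constants of \cite{fang} come from their concentration-based argument and are not sharp.
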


\section{Key analysis and error decomposition}
In this section, we present the key analysis and error decomposition for RDR. These results are crucial to the proof of the main results of the paper. The expectation is taken for both $D$ and $\hat{D}$. In the sequel, we use the following representations:
\be
\ha=\f{2\kk}{\sqrt{|D|}}\bigg(\f{\kk}{\sqrt{|D|\la}}+\sqrt{\huaN(\la)}\bigg);  \label{r1}
\ee
\be
\ha'=\f{1}{|D|\sqrt{\la}}+\f{\sqrt{\huaN(\la)}}{\sqrt{|D|}};  \label{r2}
\ee
\be
\hha=\f{\ha}{\sqrt{\la}}+1. \label{r3}
\ee
\subsection{Basic representations and bounds}
Since we need to handle a more general loss $l_{\sigma}$ instead of previous least square loss, we need to introduce the following quantity. Denote
\be
\nono E_{\hat{D},\la,\sigma}=\f{1}{|D|}\sum_{i=1}^{|D|}\Big[V'\Big(\f{[f_{\hat{D},\la}^{\sigma}(\mu_{\hat{x}_i})-y_i]^2}{\sigma^2}\Big)-V'(0)\Big](f_{\hat{D},\la}^{\sigma}(\mu_{\hat{x}_i})-y_i)K_{\mu_{\hat{x}_i}}
\ee
and
\be
\nono E_{D,\la,\sigma}=\f{1}{|D|}\sum_{i=1}^{|D|}\Big[V'\Big(\f{[f_{D,\la}^{\sigma}(\mu_{x_i})-y_i]^2}{\sigma^2}\Big)-V'(0)\Big](f_{D,\la}^{\sigma}(\mu_{x_i})-y_i)K_{\mu_{x_i}}.
\ee
The following lemma provides a representation for the first-stage and second stage regressor of the RDR. The representations are basic for later use on error decomposition of RDR.
\blm\label{rel}
Let $f_{\hat{D},\la}^{\sigma}$ and $f_{D,\la}^{\sigma}$ be defined as in \eqref{algorithm} and \eqref{bridge}, then they satisfy
\be
 f_{\hat{D},\la}^{\sigma}=(\la I+L_{K,\hat{D}})^{-1}\hat{S}_{D}^Ty-(\la I+L_{K,\hat{D}})^{-1}E_{\hat{D},\la,\sigma} \label{re1}
\ee
and
\be
 f_{D,\la}^{\sigma}=(\la I+L_{K,D})^{-1}S_{D}^Ty-(\la I+L_{K,D})^{-1}E_{D,\la,\sigma}. \label{re2}
\ee
\elm

\begin{proof}
Take Fr\'echet derivative of the regularized functional in \eqref{algorithm}, it follows that
\be
\f{1}{|D|}\sum_{i=1}^{|D|}V'\Big(\f{[f_{\hat{D},\la}^{\sigma}(\mu_{\hat{x}_i})-y_i]^2}{\sigma^2}\Big)(f_{\hat{D},\la}^{\sigma}(\mu_{\hat{x}_i})-y_i)K_{\mu_{\hat{x}_i}}+\la f_{\hat{D},\la}^{\sigma}=0.
\ee
The above relation implies that
\bea
\n \f{1}{|D|}\sum_{i=1}^{|D|}\Big[V'\Big(\f{[f_{\hat{D},\la}^{\sigma}(\mu_{\hat{x}_i})-y_i]^2}{\sigma^2}\Big)-V'(0)\Big](f_{\hat{D},\la}^{\sigma}(\mu_{\hat{x}_i})-y_i)K_{\mu_{\hat{x}_i}}\\
\n+V'(0)\f{1}{|D|}\sum_{i=1}^{|D|}(f_{\hat{D},\la}^{\sigma}(\mu_{\hat{x}_i})-y_i)K_{\mu_{\hat{x}_i}}+\la f_{\hat{D},\la}^{\sigma}=0.
\eea
Substitute the above representation of $\hhe$, use the former definition of $L_{K,\hat{D}}$ and $\hat{S}_D^Ty$, use the condition $V'(0)=1$, it follows that
\be
\nono \hhe+[L_{K,\hat{D}}f_{\hat{D},\la}^{\sigma}-\hat{S}_D^Ty]+\la f_{\hat{D},\la}^{\sigma}=0.
\ee
Namely,
\be
\nono(\la I+L_{K,\hat{D}})f_{\hat{D},\la}^{\sigma}-\hat{S}_D^Ty+\hhe=0.
\ee
Hence we have
\be
\nono f_{\hat{D},\la}^{\sigma}=(\la I+L_{K,\hat{D}})^{-1}\hat{S}_{D}^Ty-(\la I+L_{K,\hat{D}})^{-1}E_{\hat{D},\la,\sigma}.
\ee
The second one follows immediately after replacing $\hat{D}$ by $D$ in above procedures.
\end{proof}
Moreover,  $f_{\hat{D},\la}^{\sigma}$ and $f_{D,\la}^{\sigma}$ has the following upper bound estimate.
\blm\label{fdcon}
the RKHS norm of  $f_{\hat{D},\la}^{\sigma}$ and $f_{D,\la}^{\sigma}$ satisfy
\be
\|f_{\hat{D},\la}^{\sigma}\|_K\leq\sqrt{C_V}M\la^{-1/2},
\ee
and
\be
\|f_{D,\la}^{\sigma}\|_K\leq\sqrt{C_V}M\la^{-1/2}.
\ee
\elm
\begin{proof}
Denote
\be
\e_{\hat{D},\sigma}(f)=\f{\sigma^2}{|D|}\sum_{i=1}^{|D|}V\Big(\f{[f(\mu_{\hat{x}_i})-y_i]^2}{\sigma^2}\Big).
\ee
According to the definition of $f_{\hat{D},\la}^{\sigma}$ in \eqref{algorithm}, we have
\be
\e_{\hat{D},\sigma}(f_{\hat{D},\la}^{\sigma})+\la\|f_{\hat{D},\la}^{\sigma}\|_K^2\leq \e_{\hat{D},\sigma}(0).
\ee
Noting that $V'(s)>0$, then $V(s)>V(0)$ when $s>0$, hence we have $\e_{\hat{D},\sigma}(f_{\hat{D},\la}^{\sigma})>\sigma^2V(0)$, then it follows that
\bea
\n\la\|f_{\hat{D},\la}^{\sigma}\|_K^2\leq\e_{\hat{D},\sigma}(0)-\e_{\hat{D},\sigma}(f_{\hat{D},\la}^{\sigma})\leq\f{\sigma^2}{|D|}\sum_{i=1}^{|D|}V(\f{|y|_i^2}{\sigma^2})-\sigma^2V(0)\\
\n\leq\f{\sigma^2}{|D|}\sum_{i=1}^{|D|}\Big[V(\f{|y_i|^2}{\sigma^2})-V(0)\Big]\leq\f{C_V\sigma^2}{|D|}\sum_{i=1}^{|D|}\f{|y_i|^2}{\sigma^2}\leq C_VM^2.
\eea
Hence, we have $\|f_{\hat{D},\la}^{\sigma}\|_K\leq\sqrt{C_V}M\la^{-1/2}$. Same procedure with $\hat{D}$ replaced by $D$ and $\mu_{\hat{x}_i}$ replaced by $\mu_{x_i}$ implies $\|f_{D,\la}^{\sigma}\|_K\leq\sqrt{C_V}M\la^{-1/2}$.
\end{proof}
The following lemma provides upper bound estimates for  $\he$ and $\hhe$ in RKHS norm.
\blm\label{ees}
$\he$ and $\hhe$ satisfy
\be
\|\hhe\|_K,\ \|\he\|_K\leq2^{2p}c_p\kk\sigma^{-2p}\Big[\kk^{2p+1}(\sqrt{C_V}M)^{2p+1}\la^{-(p+\f{1}{2})}+M^{2p+1}\Big]. \label{edc}
\ee
\elm

\begin{proof}
According to condition \eqref{glip}, we have
\bea
\nono &&\|E_{\hat{D},\la,\sigma}\|_K\leq\f{1}{|D|}\sum_{i=1}^{|D|}\Big|V'\Big(\f{[f_{\hat{D},\la}^{\sigma}(\mu_{\hat{x}_i})-y_i]^2}{\sigma^2}\Big)-V'(0)\Big| |f_{\hat{D},\la}^{\sigma}(\mu_{\hat{x}_i})-y_i|\cdot \|K_{\mu_{\hat{x}_i}}\|_K\\
\n\leq\f{\kk c_p\sigma^{-2p}}{|D|}\sum_{i=1}^{|D|}\Big(\|f_{\hat{D},\la}^{\sigma}\|_{\infty}+|y_i|\Big)^{2p+1}\leq c_p\kk\sigma^{-2p}\Big(\kk\|f_{\hat{D},\la}^{\sigma}\|_K+M\Big)^{2p+1}\\
\n\leq2^{2p}c_p\kk\sigma^{-2p}\Big(\kk^{2p+1}\|f_{\hat{D},\la}^{\sigma}\|_K^{2p+1}+M^{2p+1}\Big)\\
\n\leq2^{2p}c_p\kk\sigma^{-2p}[\kk^{2p+1}(\sqrt{C_V}M)^{2p+1}\la^{-(p+\f{1}{2})}+M^{2p+1}],
\eea
in which the second inequality follows from condition \eqref{glip} and $\|K_{\mu_{\hat{x}_i}}\|_K\leq\kk$. In the third inequality, $\|f_{\hat{D},\la}^{\sigma}\|_{\infty}\leq\kk\|f_{\hat{D},\la}^{\sigma}\|_K$ follows from reproducing property of kernel $K$. The forth inequality follows from the convexity of function $v(x)=x^{2p+1}$, $x>0$. The last inequality follows from Lemma \ref{fdcon}.
\end{proof}

\begin{lem}\label{le2}
Let the sample set $D$ be drawn independently according to probability measure $\rho$. Let $\ha$ be defined as in \eqref{r1}. For a positive continuous function $\Phi$, if with probability at least $1-\delta$, $\delta\in(0,1)$, a random variable $X_{|D|,\la}\geq0$ satisfies $X_{|D|,\la}\leq\Phi(\ha)\log^m\f{2}{\delta}$, $m\in \mbb N_+$. Then we have,
\be
\nono \mbb E_{\mathbf{z}^{|D|}}\bigg[ X_{|D|,\la}^s\bigg]\leq\big(2\Gamma(ms+1)+\log^{ms}2\big)\Phi(\ha)^s,\ s\geq1.
\ee
The same result holds when $\ha$ is replaced by $\ha'$.
\end{lem}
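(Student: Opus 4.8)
The plan is to convert the confidence bound into a moment bound via the standard layer-cake (tail-integral) identity, using that $\ha$ — and likewise $\ha'$ — is a \emph{deterministic} quantity depending only on $|D|$, $\la$, $\kk$ and $\huaN(\la)$, so that $c:=\Phi(\ha)$ is a fixed positive number and $\Phi(\ha)^s$ may be pulled out of the expectation freely.

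First I would recast the hypothesis as a tail estimate on $X_{|D|,\la}$. For any $t>c\log^m 2$ one solves $c\log^m(2/\delta)=t$ for $\delta$, obtaining $\delta=2\exp\big(-(t/c)^{1/m}\big)$, which lies in $(0,1)$ precisely because $t>c\log^m 2$; the assumption $\mbb P\big(X_{|D|,\la}>c\log^m(2/\delta)\big)\le\delta$ then reads
\[
\mbb P\big(X_{|D|,\la}>t\big)\le 2\exp\big(-(t/c)^{1/m}\big),\qquad t>c\log^m 2,
\]
while for $0\le t\le c\log^m 2$ only the trivial bound $\mbb P(X_{|D|,\la}>t)\le 1$ is used.

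Next I would write $\mbb E_{\mathbf{z}^{|D|}}\big[X_{|D|,\la}^s\big]=\int_0^\infty s t^{s-1}\mbb P(X_{|D|,\la}>t)\,dt$ and split the integral at $t_0:=c\log^m 2$. On $[0,t_0]$ the integrand is at most $s t^{s-1}$, contributing $t_0^s=c^s\log^{ms}2$. On $[t_0,\infty)$, after enlarging the range to $[0,\infty)$ and inserting the tail bound, the contribution is at most $2\int_0^\infty s t^{s-1}\exp\big(-(t/c)^{1/m}\big)\,dt$; the substitution $v=(t/c)^{1/m}$, i.e. $t=cv^m$ and $dt=cm\,v^{m-1}\,dv$, turns this into $2sm\,c^s\int_0^\infty v^{ms-1}e^{-v}\,dv=2sm\,c^s\,\Gamma(ms)=2c^s\,\Gamma(ms+1)$, using $\Gamma(ms+1)=ms\,\Gamma(ms)$. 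Summing the two contributions gives $\mbb E_{\mathbf{z}^{|D|}}\big[X_{|D|,\la}^s\big]\le\big(2\Gamma(ms+1)+\log^{ms}2\big)c^s=\big(2\Gamma(ms+1)+\log^{ms}2\big)\Phi(\ha)^s$, as claimed. The statement with $\ha'$ in place of $\ha$ follows by the identical computation, since $\ha'$ is again deterministic.

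There is no substantial obstacle here: this is a routine tail-to-moment passage. The only items needing mild care are checking that the map $t\mapsto\delta=2\exp(-(t/c)^{1/m})$ carries $(c\log^m 2,\infty)$ bijectively onto $(0,1)$ — so that the hypothesis is invoked exactly on the admissible range of $\delta$ — and the bookkeeping in the change of variables that produces $\Gamma(ms+1)$ rather than $\Gamma(ms)$.
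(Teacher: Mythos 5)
Your proof is correct and follows essentially the same route as the paper: both convert the confidence statement into a sub-exponential tail bound valid above the threshold $\Phi(\ha)^s\log^{ms}2$ (respectively $\Phi(\ha)\log^m 2$), apply the layer-cake formula split at that threshold, and evaluate the remaining integral by a change of variables yielding $2\Gamma(ms+1)\Phi(\ha)^s$. The only cosmetic difference is that you integrate $st^{s-1}\mathbb{P}(X_{|D|,\la}>t)$ in the variable $t$ while the paper applies the tail integral directly to $\xi=X_{|D|,\la}^s$; these are equivalent.
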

\begin{proof}
The condition implies that, for $0<\delta<2$,
\be
\text{Prob}\Big\{X_{|D|,\la}\leq\Phi(\ha)\log^m\f{4}{\delta}\Big\}\geq1-\f{\delta}{2}.
\ee
Make variable change $\g=\Phi(\ha)^s\log^{ms}\f{4}{\delta}$, $s\geq1$, then it follows that $\g^{\f{1}{s}}=\Phi(\ha)\log^m\f{4}{\delta}$ and $\f{\delta}{2}=2\exp\{-\g^{\f{1}{ms}}/\Phi(\ha)^{1/m}\}$. Note that for $\g>\Phi(\ha)^s\log^{ms}2$, if we set the variable $\xi=X_{|D|,\la}^s$, there holds
\be
\nono \text{Prob}\Big\{\xi>\g\Big\}=\text{Prob}\Big\{\xi^{1/s}>\g^{1/s}\Big\}\leq\f{\delta}{2}=2\exp\Big\{-\f{\g^{1/ms}}{\Phi(\ha)^{1/m}}\Big\}.
\ee
Then by using the formula $\mbb E[\xi]=\int_0^{\infty}\text{Prob}(\xi>\g)d\g$ to $\xi=X_{|D|,\la}^s$, we have
\bea
\n\mbb E_{\mathbf{z}^{|D|}}\Big[\xi\Big]=\int_0^{\Phi(\ha)^s\log^{ms}2}\text{Prob}(\xi>\g)d\g+\int_{\Phi(\ha)^s\log^{ms}2}^{\infty}\text{Prob}(\xi>\g)d\g\\
\n\ \ \ \ \ \quad\quad \ \leq\Phi(\ha)^s\log^{ms}2+\int_{\Phi(\ha)^s\log^{ms}2}^{\infty}2\exp\Big\{-\f{\g^{1/ms}}{\Phi(\ha)^{1/m}}\Big\}d\g.
\eea
Do the variable change $\g=\Phi(\ha)^sx^{ms}$, then the integral in above second term
\be
\nono \int_{\Phi(\ha)^s\log^{ms}2}^{\infty}2\exp\Big\{-\f{\g^{1/ms}}{\Phi(\ha)^{1/m}}\Big\}d\g=2ms\Phi(\ha)^s\int_{\log2}^{\infty}x^{ms-1}e^{-x}dx\leq2\Gamma(ms+1)\Phi(\ha)^s,
\ee
which completes the proof. Using the same procedures with $\ha$ replaced by $\ha'$, we known the inequality holds with  $\ha$ replaced by $\ha'$.
\end{proof}
For handling the error decomposition and corresponding bounds of integral operators later, we denote
\bea
&&\hb=\|(\la I+L_K)^{-\f{1}{2}}(S_D^Ty-L_Kf_{\rho})\|_K,\label{nb}\\
&&\hc=\|(\la I+L_K)(\la I+L_{K,D})^{-1}\|,\label{nc}\\
&&\hd=\|(\la I+L_K)^{-\f{1}{2}}(L_K-L_{K,D})\|. \label{nd}
\eea
From \cite{s2}, \cite{s3}, \cite{guoc}, with probability at least $1-\delta$, the following three estimates hold,
\bea
&&\hb\leq\f{2M(\kk+1)}{\kk}\ha'\log\f{2}{\delta},\label{eb}\\
&&\hc\leq\Big(\f{\ha\log\f{2}{\delta}}{\sqrt{\la}}+1\Big)^2,\label{ec}\\
&&\hd\leq2\ha\log\f{2}{\delta}.\label{ed}
\eea
Then we arrive at the following expected norm bound estimates.
\blm\label{bcd}
Let $\hb$, $\hc$ and $\hd$ be defined as in \eqref{nb}, \eqref{nc} and \eqref{nd}, then they satisfy
\bea
\n\mbb E_{\mathbf{z}^{|D|}}[\hb^s]\leq(2\Gamma(s+1)+\log^{s}2)\Big(\f{2M(\kk+1)}{\kk}\ha'\Big)^{s}, \ s\geq0\\
\n\mbb E_{\mathbf{z}^{|D|}}[\hc^s]\leq(2\Gamma(2s+1)+\log^{2s}2)\hha^{2s}, \ s\geq0,\\
\n\mbb E_{\mathbf{z}^{|D|}}[\hd^s]\leq(2\Gamma(s+1)+\log^{s}2)\Big(2\ha\Big)^s, \ s\geq0,
\eea
in which $\hha=\f{\ha}{\sqrt{\la}}+1$.
\elm
\begin{proof}
Due to the fact that the estimates \eqref{eb}, \eqref{ec} and \eqref{ed} hold with probability of $1-\delta$, then we consider random variables $X_{|D|,\la}=\hb$, $X_{|D|,\la}=\hc$ and $X_{|D|,\la}=\hd$ respectively. If we take function $\Phi_1 (x)=\f{2M(\kk+1)}{\kk}x$, $\Phi_2(x)=\Big(\f{x}{\sqrt{\la}}+1\Big)^2$, and $\Phi_3(x)=2x$  for $\hb$, $\hc$ and $\hd$ respectively,  the desired results are derived after using  Lemma \ref{le2} with $m=1,2,1$.
\end{proof}

\subsection{Basic decomposition and expected bounds}
Denote  $f_{\la}$ as the data-free minimizer for least square regression:
\be
f_{\la}=\arg \min_{f\in\huaH_K}\Big\{\|f-f_{\rho}\|_{L_{\rho_{X_{\mu}}}^2}^2+\la\|f\|_K^2\Big\}.
\ee
It follows from Smale and Zhou \cite{s4} that
\be
f_{\la}=(\la I+L_K)^{-1}L_Kf_{\rho}. \label{ssz}
\ee
According to  the representation  $f_{D,\la}^{\sigma}=(\la I+L_{K,D})^{-1}S_{D}^Ty-(\la I+L_{K,D})^{-1}E_{D,\la,\sigma}$ in Lemma \ref{rel} and $f_{\la}=(\la I+L_K)^{-1}L_Kf_{\rho}$, we have the following decomposition
\bea
\n f_{D,\la}^{\sigma}-f_{\la}\\
\n=(\la I+L_{K,D})^{-1}S_{D}^Ty-(\la I+L_K)^{-1}L_Kf_{\rho}-(\la I+L_{K,D})^{-1}E_{D,\la,\sigma}\\
\n=(\la I+L_{K,D})^{-1}(S_D^Ty-L_Kf_{\rho})+[(\la I+L_{K,D})^{-1}-(\la I+L_{K})^{-1}]L_Kf_{\rho}-(\la I+L_{K,D})^{-1}\he.
\eea
Set $A=\la I+L_{K,D}$ and $B=\la I+L_{K}$, then the fact that $A^{-1}-B^{-1}=A^{-1}(B-A)B^{-1}$ for any invertible operator $A$, $B$ in Banach space and \eqref{ssz} imply that
\bea
\n f_{D,\la}^{\sigma}-f_{\la}\\
\n=(\la I+L_{K,D})^{-1}(S_D^Ty-L_Kf_{\rho})+(\la I+L_{K,D})^{-1}(L_K-L_{K,D})f_{\la}-(\la I+L_{K,D})^{-1}\he.
\eea
Then we have
\bea
\n \max\{\|f_{D,\la}^{\sigma}-f_{\la}\|_{L_{\rho_{X_{\mu}}}^2},\sqrt{\la}\|f_{D,\la}^{\sigma}-f_{\la}\|_K\}\\
\n\leq\|(\la I+L_K)^{1/2}(\la I+L_{K,D})^{-1/2}(\la I+L_{K,D})^{-1/2}(\la I+L_{K})^{1/2}(\la I+L_{K})^{-1/2}(S_D^Ty-L_Kf_{\rho})\|_K \\
\n \ +\|(\la I+L_{K})^{1/2}(\la I+L_{K,D})^{-1/2}(\la I+L_{K,D})^{-1/2}(\la I+L_{K})^{1/2}(\la I+L_{K})^{-1/2}(L_K-L_{K,D})f_{\la}\|_K\\
\n \ +\|(\la I+L_K)^{1/2}(\la I+L_{K,D})^{-1}\he\|_K.
\eea
The basic norm inequality implies that the above norm
\bea
\n \leq \|(\la I+L_K)^{1/2}(\la I+L_{K,D})^{-1/2}\|\|(\la I+L_{K,D})^{-1/2}(\la I+L_{K})^{1/2}\|\|(\la I+L_{K})^{-1/2}(S_D^Ty-L_Kf_{\rho})\|_K\\
\n \  +\|(\la I+L_{K})^{1/2}(\la I+L_{K,D})^{-1/2}\|\|(\la I+L_{K,D})^{-1/2}(\la I+L_{K})^{1/2}\|\|(\la I+L_{K})^{-1/2}(L_K-L_{K,D})\|\|f_{\la}\|_K\\
\n \ +\|(\la I+L_K)^{1/2}(\la I+L_{K,D})^{-1/2}\|\|(\la I+L_{K,D})^{-1/2}\he\|_K\\
\n\leq \hc^{1/2}\hc^{1/2}\hb+\hc^{1/2}\hc^{1/2}\hd\|f_{\la}\|_K+\hc^{1/2}\|(\la I+L_{K,D})^{-1/2}\he\|_K,
%&&\leq \hc\hb+\hc\hd\|f_{\la}\|_K+\hc^{1/2}\f{1}{\sqrt{\la}}\|\he\|_K, \label{cbd}
\eea
in which the above inequalities follows from the fact that $\|T_1^sT_2^s\|\leq\|T_1T_2\|^s$, $s\in (0,1]$ for any two positive self-adjoint operators $T_1$, $T_2$. Noting the fact $(\la I+L_{K,D})^{-1/2}\leq\la^{-1/2}$, we have
\bea
\n\max\{\|f_{D,\la}^{\sigma}-f_{\la}\|_{L_{\rho_{X_{\mu}}}^2},\sqrt{\la}\|f_{D,\la}^{\sigma}-f_{\la}\|_K\}\\
&&\leq \hc\hb+\hc\hd\|f_{\la}\|_K+\hc^{1/2}\f{1}{\sqrt{\la}}\|\he\|_K, \label{cbd}
\eea
Now it is ready to present following proposition on expected error bounds for  $\|f_{D,\la}^{\sigma}-f_{\la}\|_{L_{\rho_{X_{\mu}}}^2}$.
\begin{pro}\label{ones}
There holds
\bea
\n\mbb E_{\mathbf{z}^{|D|}}\big[\|f_{D,\la}^{\sigma}-f_{\la}\|_{L_{\rho_{X_{\mu}}}^2}\big]\\
\n\leq C_{p,\kk,C_V,M}\Bigg[\Big(\f{\ha}{\sqrt{\la}}+1\Big)^2\ha'+\Big(\f{\ha}{\sqrt{\la}}+1\Big)^2\ha\|f_{\la}\|_K\\
\n\ \ +\Big(\f{\ha}{\sqrt{\la}}+1\Big)\f{1}{\sqrt{\la}}\sigma^{-2p}(\la^{-(p+\f{1}{2})}+1)\Bigg].
\eea
The constant $C_{p,\kk,C_V,M}$ will be given in the proof.
\end{pro}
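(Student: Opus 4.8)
The plan is to take the expectation $\mbb E_{\mathbf{z}^{|D|}}[\cdot]$ of the pointwise estimate \eqref{cbd}, which already bounds $\|f_{D,\la}^{\sigma}-f_{\la}\|_{L_{\rho_{X_{\mu}}}^2}$ by the sum of the three nonnegative quantities $\hc\hb$, $\hc\hd\|f_{\la}\|_K$ and $\hc^{1/2}\la^{-1/2}\|\he\|_K$. Since $\|f_{\la}\|_K$ is data-free (hence deterministic) and $\|\he\|_K$ is controlled deterministically by Lemma \ref{ees}, each of the three terms is at most a product of two random factors, one of which is a power of $\hc$; so it suffices to bound each expectation separately and then sum.

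For the first term, the Cauchy--Schwarz inequality gives $\mbb E_{\mathbf{z}^{|D|}}[\hc\hb]\le(\mbb E_{\mathbf{z}^{|D|}}[\hc^2])^{1/2}(\mbb E_{\mathbf{z}^{|D|}}[\hb^2])^{1/2}$, and Lemma \ref{bcd} (with $s=2$ for both moments) yields $\mbb E_{\mathbf{z}^{|D|}}[\hc\hb]\le C\,\hha^{2}\,\ha'$, the constant $C$ being the explicit product $(2\Gamma(5)+\log^{4}2)^{1/2}(2\Gamma(3)+\log^{2}2)^{1/2}\f{2M(\kk+1)}{\kk}$. The second term is handled identically: $\mbb E_{\mathbf{z}^{|D|}}[\hc\hd]\le(\mbb E_{\mathbf{z}^{|D|}}[\hc^2])^{1/2}(\mbb E_{\mathbf{z}^{|D|}}[\hd^2])^{1/2}\le C\,\hha^{2}\,\ha$, so $\mbb E_{\mathbf{z}^{|D|}}[\hc\hd\|f_{\la}\|_K]\le C\,\hha^{2}\,\ha\,\|f_{\la}\|_K$. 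For the third term, $\|\he\|_K$ is not random, and Lemma \ref{ees} gives $\|\he\|_K\le 2^{2p}c_p\kk\sigma^{-2p}\big[\kk^{2p+1}(\sqrt{C_V}M)^{2p+1}\la^{-(p+\f{1}{2})}+M^{2p+1}\big]\le C_{p,\kk,C_V,M}\,\sigma^{-2p}\big(\la^{-(p+\f{1}{2})}+1\big)$; combining with $\mbb E_{\mathbf{z}^{|D|}}[\hc^{1/2}]\le C\,\hha$ (Lemma \ref{bcd} with $s=1/2$) gives $\mbb E_{\mathbf{z}^{|D|}}\big[\hc^{1/2}\la^{-1/2}\|\he\|_K\big]\le C_{p,\kk,C_V,M}\,\hha\,\la^{-1/2}\sigma^{-2p}(\la^{-(p+\f{1}{2})}+1)$.

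Adding the three bounds and recalling $\hha=\ha/\sqrt{\la}+1$ produces exactly the claimed inequality, with the displayed constant $C_{p,\kk,C_V,M}$ taken as the maximum of the three constants above (each an explicit product of $\Gamma$-values, powers of $\log 2$, and powers of $M$, $\kk$, $\sqrt{C_V}$, $c_p$). There is no serious obstacle here; the points that require care are (i) invoking Lemma \ref{bcd} at the correct moment orders --- $s=2$ inside the products involving $\hc$ and $s=1/2$ for the lone factor $\hc^{1/2}$ --- and (ii) verifying that the aggregated constant genuinely depends on none of $|D|$, $\la$, $\sigma$. One structural feature worth noting is that Lemma \ref{bcd} produces $\ha'$ rather than $\ha$ for the $\hb$-moments, which is precisely why only the first term carries $\ha'$ while the other two carry $\ha$.
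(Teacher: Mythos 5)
Your proposal is correct and follows essentially the same route as the paper: take expectation of \eqref{cbd}, apply the Cauchy--Schwarz inequality to the $\hc\hb$ and $\hc\hd$ terms with Lemma \ref{bcd} at $s=2$, use the almost-sure bound of Lemma \ref{ees} on $\|\he\|_K$ together with $\mbb E_{\mathbf{z}^{|D|}}[\hc^{1/2}]$ via Lemma \ref{bcd} at $s=1/2$, and collect constants (the paper sums them into \eqref{dac}, which also absorbs fourth-moment factors reused later in Proposition \ref{fdkb}, while you take a maximum --- an inessential difference). One wording slip: $\|\he\|_K$ \emph{is} random (it depends on the sample through $f_{D,\la}^{\sigma}$); what justifies pulling it out of the expectation is that Lemma \ref{ees} bounds it by a deterministic quantity almost surely, as you correctly state earlier in your argument.
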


\begin{proof}
From \eqref{cbd}, take expectation and using Cauchy inequality in expectation version, it follows that
\bea
\n \mbb E_{\mathbf{z}^{|D|}}\big[\|f_{D,\la}^{\sigma}-f_{\la}\|_{L_{\rho_{X_{\mu}}}^2}\big]\\
\n\leq\Big\{\mbb E_{\mathbf{z}^{|D|}}[\hc^2]\Big\}^{1/2}\Big\{\mbb E_{\mathbf{z}^{|D|}}[\hb^2]\Big\}^{1/2}+\Big\{\mbb E_{\mathbf{z}^{|D|}}[\hc^2]\Big\}^{1/2}\Big\{\mbb E_{\mathbf{z}^{|D|}}[\hd^2]\Big\}^{1/2}\|f_{\la}\|_K\\
\n+[2\Gamma(2)+\log2]\Big(\f{\ha}{\sqrt{\la}}+1\Big)\f{1}{\sqrt{\la}}2^{2p}c_p\kk\sigma^{-2p}[\kk^{2p+1}(\sqrt{C_V}M)^{2p+1}\la^{-(p+\f{1}{2})}+M^{2p+1}].
\eea
Apply  Lemma \ref{bcd} to above inequality, then the desired result holds after we take out and combine corresponding coefficients in following form
\bea
\n C_{p,\kk,C_V,M}\\
\n=\bigg[(2\Gamma(5)+\log^42)^{1/2}(2\Gamma(3)+\log^22)^{1/2}+(2\Gamma(9)+\log^82)^{1/4}(2\Gamma(5)+\log^42)^{1/4}\bigg]\f{2M(\kk+1)}{\kk}\\
\n \ \ +\bigg[(2\Gamma(5)+\log^42)^{1/2}(2\Gamma(3)+\log^22)^{1/2}+(2\Gamma(9)+\log^82)^{1/4}(2\Gamma(5)+\log^42)^{1/4}\bigg]\cdot2\\
&& \ \ +\bigg[(2\Gamma(2)+\log2)+(2\Gamma(3)+\log^22)^{1/2}\bigg](2^{2p}c_p\kk^{2p+2}(\sqrt{C_V}M)^{2p+2}+2^{2p}c_p\kk M^{2p+1}). \label{dac}
\eea
\end{proof}
The following proposition provides a basic estimate on $\Big\{\mbb E_{\mathbf{z}^{|D|}}\big[\|f_{D,\la}^{\sigma}\|_{K}^2\big]\Big\}^{1/2}$, which will be used to prove the main results.
\begin{pro}\label{fdkb}
There holds
\bea
\n \Big\{\mbb E_{\mathbf{z}^{|D|}}\big[\|f_{D,\la}^{\sigma}\|_{K}^2\big]\Big\}^{1/2}\leq 2C_{p,\kk,C_V,M}\Bigg[\Big(\f{\ha}{\sqrt{\la}}+1\Big)^2\f{\ha'}{\sqrt{\la}}+\Big(\f{\ha}{\sqrt{\la}}+1\Big)^2\f{\ha}{\sqrt{\la}}\|f_{\la}\|_K\\
\n \quad\quad\quad\quad\quad\quad\quad\quad\quad\quad\quad+\Big(\f{\ha}{\sqrt{\la}}+1\Big)\sigma^{-2p}(\la^{-(p+\f{3}{2})}+\la^{-1})\Bigg]+2\|f_{\la}\|_K,
\eea
with  $C_{p,\kk,C_V,M}$ as  in \eqref{dac}.
\end{pro}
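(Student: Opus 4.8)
The plan is to bound $\|f_{D,\la}^{\sigma}\|_K$ by the triangle inequality $\|f_{D,\la}^{\sigma}\|_K \le \|f_{D,\la}^{\sigma}-f_{\la}\|_K + \|f_{\la}\|_K$, and then to control the first term using the estimate \eqref{cbd} already established. Indeed, \eqref{cbd} gives
\[
\sqrt{\la}\,\|f_{D,\la}^{\sigma}-f_{\la}\|_K \le \hc\hb + \hc\hd\|f_{\la}\|_K + \hc^{1/2}\f{1}{\sqrt{\la}}\|\he\|_K,
\]
so dividing by $\sqrt{\la}$ and adding $\|f_{\la}\|_K$ bounds $\|f_{D,\la}^{\sigma}\|_K$ by
$\f{1}{\sqrt{\la}}\big(\hc\hb + \hc\hd\|f_{\la}\|_K + \hc^{1/2}\la^{-1/2}\|\he\|_K\big) + \|f_{\la}\|_K$.

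Next I would pass to the $L^2(\rho_{X_\mu})$-expectation of the square. I would square the displayed bound, use $(\sum_{j=1}^{4} a_j)^2 \le 4\sum a_j^2$ (or rather bound the square root of the sum by the sum of square roots after taking expectations), take $\mbb E_{\mathbf z^{|D|}}[\,\cdot\,]$ of each squared term, and then apply the subadditivity $\{\mbb E[(\sum a_j)^2]\}^{1/2} \le \sum \{\mbb E[a_j^2]\}^{1/2}$. For the three random terms I invoke Lemma \ref{bcd}: for the term $\la^{-1/2}\hc\hb$ I need $\{\mbb E[\hc^4\hb^2]\}^{1/2}$, which by Cauchy–Schwarz in expectation is $\le \{\mbb E[\hc^8]\}^{1/4}\{\mbb E[\hb^4]\}^{1/4}$, and both factors are controlled by Lemma \ref{bcd} with the functions $\Phi_1,\Phi_2$ (this produces the $(\ha/\sqrt\la+1)^2 \ha'/\sqrt\la$ term). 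Similarly $\la^{-1/2}\hc\hd\|f_{\la}\|_K$ gives $(\ha/\sqrt\la+1)^2(\ha/\sqrt\la)\|f_{\la}\|_K$ via $\{\mbb E[\hc^8]\}^{1/4}\{\mbb E[\hd^4]\}^{1/4}$. The deterministic term $\la^{-1}\hc^{1/2}\|\he\|_K$ uses the bound \eqref{edc} from Lemma \ref{ees}, giving the factor $(\ha/\sqrt\la+1)\sigma^{-2p}(\la^{-(p+3/2)}+\la^{-1})$ after absorbing $\kk$-, $C_V$-, $M$-dependent constants into $C_{p,\kk,C_V,M}$; here the $\sqrt\la\cdot\sqrt\la = \la$ in the denominator (one from \eqref{cbd}, one from $(\la I+L_{K,D})^{-1/2}\le\la^{-1/2}$) together with the $\la^{-(p+1/2)}$ and the $1$ inside \eqref{edc} produce exactly $\la^{-(p+3/2)}+\la^{-1}$. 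Finally the deterministic $\|f_{\la}\|_K$ term contributes the trailing $+2\|f_{\la}\|_K$ (the factor $2$ absorbing the combinatorial constants $2\Gamma(\cdot)+\log^{\cdot}2$, which are all $\ge 1$).

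The main obstacle is purely bookkeeping: one must be careful that the moments demanded of $\hc$, $\hb$, $\hd$ stay within the regime where Lemma \ref{bcd} applies (it is stated for all $s\ge 0$, so in fact any power is fine, but the Cauchy–Schwarz splitting must be done so that the resulting $\Gamma$-factors are finite) and that the powers of $\la^{-1/2}$ collected from \eqref{cbd}, from $(\la I+L_{K,D})^{-1/2}\le\la^{-1/2}$, and from Lemma \ref{bcd}'s definition of $\hha=\ha/\sqrt\la+1$ combine correctly to yield the claimed $\ha'/\sqrt\la$, $\ha/\sqrt\la$, $\la^{-(p+3/2)}$, and $\la^{-1}$. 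No genuinely new estimate is required; the proposition is a direct corollary of \eqref{cbd}, Lemma \ref{ees}, and Lemma \ref{bcd}, with the constant $2C_{p,\kk,C_V,M}$ chosen to dominate the product of all the $\Gamma$-type numerical constants appearing along the way.
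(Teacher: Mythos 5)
Your proposal follows essentially the same route as the paper's proof: the triangle inequality $\|f_{D,\la}^{\sigma}\|_K\le\|f_{D,\la}^{\sigma}-f_{\la}\|_K+\|f_{\la}\|_K$, the already-established bound \eqref{cbd}, squaring via $(\sum_{i}a_i)^2\le 4\sum_i a_i^2$, Cauchy--Schwarz in expectation, and then Lemma \ref{bcd} together with Lemma \ref{ees}, so it is correct in substance and in the power counting for the $\sigma^{-2p}(\la^{-(p+\f{3}{2})}+\la^{-1})$ term. One bookkeeping slip to fix: the squared first term is $\la^{-1}\hc^2\hb^2$, so Cauchy--Schwarz calls for $\{\mbb E[\hc^4]\}^{1/4}\{\mbb E[\hb^4]\}^{1/4}$ (and similarly $\{\mbb E[\hc^4]\}^{1/4}\{\mbb E[\hd^4]\}^{1/4}$), not $\{\mbb E[\hc^8]\}^{1/4}$; with $\hc^8$ Lemma \ref{bcd} would yield a factor $\hha^4$ rather than the claimed $\big(\f{\ha}{\sqrt{\la}}+1\big)^2$, whereas the corrected exponents reproduce the paper's argument exactly.
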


\begin{proof}
From \eqref{cbd}, we have
\be
\|f_{D,\la}^{\sigma}-f_{\la}\|_K\leq\f{1}{\sqrt{\la}}\hc\hb+\f{1}{\sqrt{\la}}\hc\hd\|f_{\la}\|_K+\f{1}{\sqrt{\la}}\hc^{1/2}\f{1}{\sqrt{\la}}\|\he\|_K.
\ee
For saving space, we denote the right hand side of \eqref{edc} by $e(\la)$. Since
\be
\|f_{D,\la}^{\sigma}\|_K\leq\|f_{D,\la}^{\sigma}-f_{\la}\|_K+\|f_{\la}\|_K,
\ee
it follows that
\bea
\n\|f_{D,\la}^{\sigma}\|_K\leq \f{1}{\sqrt{\la}}\hc\hb+\f{1}{\sqrt{\la}}\hc\hd\|f_{\la}\|_K+\f{1}{\sqrt{\la}}\hc^{1/2}\f{1}{\sqrt{\la}}e(\la)+\|f_{\la}\|_K.
\eea
Use basic inequality $(\sum_{i=1}^4a_i)^2\leq4(\sum_{i=1}^4a_i^2)$, we have
\be
\|f_{D,\la}^{\sigma}\|_K^2\leq\f{4}{\la}\Big\{\hc^2\hb^2+\hc^2\hd^2\|f_{\la}\|_K^2+\hc\f{1}{\la}e(\la)^2\Big\}+4\|f_{\la}\|_K^2.
\ee
Take expectation, it follows that
\bea
\n\mbb E_{\mathbf{z}^{|D|}}[\|f_{D,\la}^{\sigma}\|_K^2]\leq\f{4}{\la}\Bigg(\Big\{\mbb E_{\mathbf{z}^{|D|}}\big[\hc^4\big]\Big\}^{1/2}\Big\{\mbb E_{\mathbf{z}^{|D|}}\big[\hb^4\big]\Big\}^{1/2}+\Big\{\mbb E_{\mathbf{z}^{|D|}}\big[\hc^4\big]\Big\}^{1/2}\Big\{\mbb E_{\mathbf{z}^{|D|}}\big[\hd^4\big]\Big\}^{1/2}\|f_{\la}\|_K^2\\
\n\quad\quad\quad\quad\quad\quad\quad\quad\quad\quad+\Big\{\mbb E_{\mathbf{z}^{|D|}}\big[\hc\big]\Big\}\f{1}{\la}e(\la)^2\Bigg)+4\|f_{\la}\|_K^2.
\eea
From the basic fact $\sqrt{\sum_{i=1}^4a_i}\leq\sum_{i=1}^4\sqrt{a_i}$ for any positive number $a_i$, $i=1,2,3,4$, we have
\bea
\n\Big\{\mbb E_{\mathbf{z}^{|D|}}\big[\|f_{D,\la}^{\sigma}\|_{K}^2\big]\Big\}^{1/2}\leq\f{2}{\sqrt{\la}}\Big\{\mbb E_{\mathbf{z}^{|D|}}\big[\hc^4\big]\Big\}^{1/4}\Big\{\mbb E_{\mathbf{z}^{|D|}}\big[\hb^4\big]\Big\}^{1/4}\\
\n \quad\quad\quad\quad\quad\quad\quad\quad\quad\quad\quad+\f{2}{\sqrt{\la}}\Big\{\mbb E_{\mathbf{z}^{|D|}}\big[\hc^4\big]\Big\}^{1/4}\Big\{\mbb E_{\mathbf{z}^{|D|}}\big[\hd^4\big]\Big\}^{1/4}\|f_{\la}\|_K\\
\n \quad\quad\quad\quad\quad\quad\quad\quad\quad\quad\quad+\f{2}{\sqrt{\la}}\Big\{\mbb E_{\mathbf{z}^{|D|}}\big[\hc\big]\Big\}^{1/2}\f{1}{\sqrt{\la}}e(\la)+2\|f_{\la}\|_K.
\eea
According to Lemma \ref{bcd}, we have
\bea
\n\Big\{\mbb E_{\mathbf{z}^{|D|}}\big[\|f_{D,\la}^{\sigma}\|_{K}^2\big]\Big\}^{1/2}\\
\n\leq\f{2}{\sqrt{\la}}(2\Gamma(9)+\log^82)^{1/4}(2\Gamma(5)+\log^42)^{1/4}\Big(\f{\ha}{\sqrt{\la}}+1\Big)^2\f{2M(\kk+1)}{\kk}\ha'\\
\n+\f{2}{\sqrt{\la}}(2\Gamma(9)+\log^82)^{1/4}(2\Gamma(5)+\log^42)^{1/4}\Big(\f{\ha}{\sqrt{\la}}+1\Big)^2\cdot2\ha\|f_{\la}\|_K\\
\n+\f{2}{\sqrt{\la}}(2\Gamma(3)+\log^22)^{1/2}\Big(\f{\ha}{\sqrt{\la}}+1\Big)\f{1}{\sqrt{\la}}\Bigg(2^{2p}c_p\kk\sigma^{-2p}\Big[\kk^{2p+1}(\sqrt{C_V}M)^{2p+1}\la^{-(p+\f{1}{2})}+M^{2p+1}\Big]\Bigg)\\
\n+2\|f_{\la}\|_K.
\eea
By taking $C_{p,\kk,C_V,M}$ as in \eqref{dac}, the desired result follows.
\end{proof}
\subsection{Estimates in second-stage sampling and the proof of general error bounds}
The following estimate on $\|L_K^{1/2}(\la I+ L_{K,\hat{D}})^{-1}\|$ is basic for proposed two-stage RDR.
\blm\label{secl}
There holds
\be
\Big\{\mbb E_{\mathbf{x}^{\mathbf{d},|D|}|\mathbf{z}^{|D|}}[\|L_K^{1/2}(\la I+ L_{K,\hat{D}})^{-1}\|^2]\Big\}^{1/2}\leq\Big(\sqrt{2}\la^{-\f{3}{2}}\kk(2+\sqrt{\pi})^{\f{1}{2}}L\f{2^{\f{\aaa+2}{2}}B_k^\f{\aaa}{2}}{d^{\f{\aaa}{2}}}\Big)\hc+\sqrt{2}\la^{-1/2}\hc^{1/2}.
\ee
\elm
\begin{proof}
Start with the following decomposition
\bea
\n \|L_K^{1/2}(\la I+ L_{K,\hat{D}})^{-1}\|\\
\n\leq\|L_K^{1/2}[(\la I+L_{K,\hat{D}})^{-1}-(\la I+L_{K,D})^{-1}]\|+\|L_K^{1/2}(\la I+L_{K,D})^{-1}\|\\
\n\leq\|L_K^{1/2}(\la I+L_{K,D})^{-1}(L_{K,D}-L_{K,\hat{D}})(\la I+L_{K,\hat{D}})^{-1}\|+\|L_K^{1/2}(\la I+ L_{K,D})^{-1}\|\\
\n\leq\|(\la I+L_K)^{1/2}(\la I+L_{K,D})^{-1/2}(\la I+L_{K,D})^{-1/2}(\la I+L_K)^{1/2}(\la I+L_K)^{-1/2}\\
\n\ \ \ (L_{K,D}-L_{K,\hat{D}})(\la I+L_{K,\hat{D}})^{-1}\|+\|(\la I+L_K)^{1/2}(\la I+L_{K,D})^{-1/2}(\la I+L_{K,D})^{-1/2}\|\\
\n\leq \|(\la I+L_K)^{1/2}(\la I+L_{K,D})^{-1/2}\|^2\|(\la I+L_K)^{-1/2}\|\|L_{K,D}-L_{K,\hat{D}}\|\|(\la I+L_{K,\hat{D}})^{-1}\|\\
\n\ \ \ +\|(\la I+L_K)^{1/2}(\la I+L_{K,D})^{-1/2}\|\|(\la I+L_{K,D})^{-1/2}\|\\
\n\leq\hc\la^{-1/2}\la^{-1}\|L_{K,D}-L_{K,\hat{D}}\|+\la^{-1/2}\hc^{1/2},
\eea
in which we have used the fact that $\|T_1^sT_2^s\|\leq\|T_1T_2\|^s$, $s\in (0,1]$ for any two positive self-adjoint operators $T_1$, $T_2$. Then it follows that
\be
\|L_K^{1/2}(\la I+ L_{K,\hat{D}})^{-1}\|^2\leq2\la^{-3}\hc^2\|L_{K,D}-L_{K,\hat{D}}\|^2+2\la^{-1}\hc.
\ee
Take expectation on both sides of above inequality, it follows that
\bea
\n\mbb E_{\mathbf{x}^{\mathbf{d},|D|}|\mathbf{z}^{|D|}}[\|L_K^{1/2}(\la I+ L_{K,\hat{D}})^{-1}\|^2]\leq2\la^{-3}\hc^2E_{\mathbf{x}^{\mathbf{d},|D|}|\mathbf{z}^{|D|}}[\|L_{K,D}-L_{K,\hat{D}}\|^2]+2\la^{-1}\hc\\
\n\leq\Big(2\la^{-3}\kk^2(2+\sqrt{\pi})L^2\f{2^{\aaa+2}B_k^{\aaa}}{d^{\aaa}}\Big)\hc^2+2\la^{-1}\hc,
\eea
in which the second inequality follows from Lemma \ref{twol}.
Due to the basic fact $\sqrt{a_1+a_2}\leq\sqrt{a_1}+\sqrt{a_2}$ for any two positive numbers $a_1, a_2$,  it follows that
\be
\nono\Big\{\mbb E_{\mathbf{x}^{\mathbf{d},|D|}|\mathbf{z}^{|D|}}[\|L_K^{1/2}(\la I+ L_{K,\hat{D}})^{-1}\|^2]\Big\}^{1/2}\leq\Big(\sqrt{2}\la^{-\f{3}{2}}\kk(2+\sqrt{\pi})^{\f{1}{2}}L\f{2^{\f{\aaa+2}{2}}B_k^\f{\aaa}{2}}{d^{\f{\aaa}{2}}}\Big)\hc+\sqrt{2}\la^{-1/2}\hc^{1/2},
\ee
which completes the proof.
\end{proof}
To prove Theorem \ref{thm1}, we make a decomposition on $f_{\hat{D},\la}^{\sigma}-f_{\rho}$,
\be
f_{\hat{D},\la}^{\sigma}-f_{\rho}=(f_{\hat{D},\la}^{\sigma}-f_{D,\la}^{\sigma})+(f_{D,\la}^{\sigma}-f_{\la})+(f_{\la}-f_{\rho}).
\ee
Basic norm triangle inequality implies
\be
\|f_{\hat{D},\la}^{\sigma}-f_{\rho}\|_{L_{\rho_{X_{\mu}}}^2}=\|f_{\hat{D},\la}^{\sigma}-f_{D,\la}^{\sigma}\|_{L_{\rho_{X_{\mu}}}^2}+\|f_{D,\la}^{\sigma}-f_{\la}\|_{L_{\rho_{X_{\mu}}}^2}+\|f_{\la}-f_{\rho}\|_{L_{\rho_{X_{\mu}}}^2}.
\ee
We have already obtained expected bounds for $\|f_{D,\la}^{\sigma}-f_{\la}\|_{L_{\rho_{X_{\mu}}}^2}$ in Proposition \ref{ones}. Also, due to Smale and Zhou \cite{s4},
\be
 \|f_{\la}-f_{\rho}\|_{L_{\rho_{X_{\mu}}}^2}\leq\|g_{\rho}\|_{L_{\rho_{X_{\mu}}}^2}\la^{\min\{r,1\}},\label{sz}
\ee
thing left is to consider estimate for $\|f_{\hat{D},\la}^{\sigma}-f_{D,\la}^{\sigma}\|_{L_{\rho_{X_{\mu}}}^2}$ in this part. We have the following result.
\begin{pro}\label{hfjf}
Suppose that the regularity condition \eqref{regularity} holds with $r>0$ and $|y|\leq M$ almost surely, and the mapping $K_{(\cdot)}:X_{\mu}\rightarrow\mathcal{L}(Y,\huaH_K)$ is $(\aaa,L)$-H\"older continuous with $\aaa\in(0,1]$ and $L>0$. Then we have
\bea
\nono&&\mbb E\bigg[\big\|f_{\hat{D},\la}^{\sigma}-f_{D,\la}^{\sigma}\big\|_{L_{\rho_{X_{\mu}}}^2}\bigg]\\
\nono&&\leq \bar{C}\Bigg\{\big(\f{1}{\la d^{\f{\aaa}{2}}}\hha^2+\hha\big)\f{1}{\la^{\f{1}{2}}d^{\f{\aaa}{2}}}+\f{1}{\la^{\f{1}{2}}d^{\f{\aaa}{2}}}(\f{1}{\la d^{\f{\aaa}{2}}}+1)(\hha^2+\hha)\bigg[\hha^2\cdot\f{\ha'}{\sqrt{\la}}\\
\n \ +\hha^2(\hha-1)\|f_{\la}\|_K+\hha\sigma^{-2p}(\la^{-(p+\f{3}{2})}+\la^{-1})+\|f_{\la}\|_K\bigg]\\
\n \ +\f{1}{\la^{\f{1}{2}}d^{\f{\aaa}{2}}}\f{1}{\la d^{\f{\aaa}{2}}}\sigma^{-2p}(\la^{-(p+\f{3}{2})}+\la^{-1})(\hha^2+\hha)+\hha\sigma^{-2p}(\la^{-(p+1)}+\la^{-\f{1}{2}})\Bigg\}.
\eea
$\bar{C}$ is a constant independent of $D$, $d$, and $\sigma$, it will be specified in the proof.  $\ha=\f{2\kk}{\sqrt{|D|}}\bigg(\f{\kk}{\sqrt{|D|\la}}+\sqrt{\huaN(\la)}\bigg)$, $\ha'=\f{1}{|D|\sqrt{\la}}+\f{\sqrt{\huaN(\la)}}{\sqrt{|D|}}$ and $\hha=\f{\ha}{\sqrt{\la}}+1$.
\end{pro}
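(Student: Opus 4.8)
The plan is to derive everything from the two operator identities of Lemma~\ref{rel}. Writing $f_{\hat{D},\la}^{\sigma}=(\la I+L_{K,\hat{D}})^{-1}\hat S_D^Ty-(\la I+L_{K,\hat{D}})^{-1}E_{\hat{D},\la,\sigma}$ and $f_{D,\la}^{\sigma}=(\la I+L_{K,D})^{-1}S_D^Ty-(\la I+L_{K,D})^{-1}E_{D,\la,\sigma}$, I subtract the two and insert the resolvent identity $A^{-1}-B^{-1}=A^{-1}(B-A)B^{-1}$ with $A=\la I+L_{K,\hat{D}}$, $B=\la I+L_{K,D}$. Using Lemma~\ref{rel} once more to rewrite $(\la I+L_{K,D})^{-1}S_D^Ty=f_{D,\la}^{\sigma}+(\la I+L_{K,D})^{-1}E_{D,\la,\sigma}$, this expresses $f_{\hat{D},\la}^{\sigma}-f_{D,\la}^{\sigma}$ as a sum of a bounded number of terms, each of which is $(\la I+L_{K,\hat{D}})^{-1}$ (or $(\la I+L_{K,D})^{-1}$) applied to one of: the data gap $\hat S_D^Ty-S_D^Ty$; the operator gap $L_{K,D}-L_{K,\hat{D}}$ composed with $f_{D,\la}^{\sigma}$ (plus one lower-order cross term of the same shape carrying $E_{D,\la,\sigma}$); and the two correction vectors $E_{\hat{D},\la,\sigma}$, $E_{D,\la,\sigma}$ that are the new feature of the non--least-squares loss $l_\sigma$.

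Next I pass to the $L_{\rho_{X_\mu}}^2$-norm via $\|g\|_{L_{\rho_{X_\mu}}^2}=\|L_K^{1/2}g\|_K$ for $g\in\huaH_K$, so that each summand is controlled by the operator norm $\|L_K^{1/2}(\la I+L_{K,\hat{D}})^{-1}\|$ — whose conditional second moment is exactly Lemma~\ref{secl}, contributing the $\la^{-3/2}d^{-\aaa/2}\hc$ and $\la^{-1/2}\hc^{1/2}$ factors — or by $\|L_K^{1/2}(\la I+L_{K,D})^{-1}\|\le\la^{-1/2}\hc^{1/2}$, times the $K$-norm of the remaining factor. For those remaining factors I would use Lemma~\ref{twol} for $\{\mathbb E_{\mathbf{x}^{\mathbf{d},|D|}|\mathbf{z}^{|D|}}\|\hat S_D^Ty-S_D^Ty\|_K^2\}^{1/2}$ and $\{\mathbb E_{\mathbf{x}^{\mathbf{d},|D|}|\mathbf{z}^{|D|}}\|L_{K,D}-L_{K,\hat{D}}\|^2\}^{1/2}$ (both of order $d^{-\aaa/2}$), the deterministic bound \eqref{edc} of Lemma~\ref{ees} for $\|E_{\hat{D},\la,\sigma}\|_K$ and $\|E_{D,\la,\sigma}\|_K$ (each of order $\sigma^{-2p}(\la^{-(p+1/2)}+1)$), and Proposition~\ref{fdkb} for the mean-square norm $\{\mathbb E_{\mathbf{z}^{|D|}}\|f_{D,\la}^{\sigma}\|_K^2\}^{1/2}$ — this last being precisely where the long bracket in the statement comes from. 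The operator algebra (splitting resolvents, $\|T_1^sT_2^s\|\le\|T_1T_2\|^s$ for $s\in(0,1]$ with $T_1,T_2\ge 0$, and $(\la I+L_{K,D})^{-1/2}\le\la^{-1/2}$) is the same as used repeatedly in the paper. Finally I integrate in two stages: first the conditional expectation $\mathbb E_{\mathbf{x}^{\mathbf{d},|D|}|\mathbf{z}^{|D|}}$, which — because Lemmas~\ref{twol} and~\ref{secl} are stated conditionally — extracts all $d^{-\aaa/2}$ factors and leaves only $\mathbf{z}^{|D|}$-measurable quantities ($\hc$, $f_{D,\la}^{\sigma}$, $E_{D,\la,\sigma}$); then the expectation $\mathbb E_{\mathbf{z}^{|D|}}$, applying the expectation form of Cauchy--Schwarz to decouple the products and Lemma~\ref{bcd} to turn each moment of $\hc$ into the corresponding power of $\hha$. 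Absorbing all numerical constants into one $\bar C=\bar C(p,\kk,C_V,M,L,B_k)$ gives the stated inequality.

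The main obstacle is the correction terms. Since $E_{\hat{D},\la,\sigma}$ is assembled from $f_{\hat{D},\la}^{\sigma}$ itself and from the second-stage embeddings $\mu_{\hat{x}_i}$, any attempt to difference $E_{\hat{D},\la,\sigma}-E_{D,\la,\sigma}$ and gain decay in $d$ would reintroduce $f_{\hat{D},\la}^{\sigma}-f_{D,\la}^{\sigma}$ circularly, and the non-convexity of $l_\sigma$ blocks the usual stability arguments. The resolution is to \emph{not} difference them: bound $\|E_{\hat{D},\la,\sigma}\|_K$ and $\|E_{D,\la,\sigma}\|_K$ separately by the deterministic estimate \eqref{edc}. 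This leaves a residual of order $\hha\,\sigma^{-2p}(\la^{-(p+1)}+\la^{-1/2})$ which does \emph{not} tend to $0$ as $d\to\infty$ — exactly the non-vanishing term in the statement (and the source of the limit in Corollary~\ref{gap}). The remaining delicate point is bookkeeping: one must keep the Cauchy--Schwarz exponents in the $\mathbb E_{\mathbf{z}^{|D|}}$-step mutually consistent (fourth moments of $\hc$ against squared $d^{-\aaa/2}$ factors, and so on) so that the final powers of $\hha$ are the ones appearing in the statement, and one must feed in Proposition~\ref{fdkb} rather than the crude $\la^{-1/2}$ bound of Lemma~\ref{fdcon} wherever the sharper $\ha'$-dependence is required.
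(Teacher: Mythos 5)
Your proposal matches the paper's own proof in all essentials: the same decomposition via Lemma~\ref{rel} and the resolvent identity into a data-gap term, an operator-gap term acting on $f_{D,\la}^{\sigma}+(\la I+L_{K,D})^{-1}E_{D,\la,\sigma}$, and the correction terms bounded separately (not differenced) by Lemma~\ref{ees}, followed by the same two-stage expectation using Lemmas~\ref{twol}, \ref{secl}, \ref{bcd} and Proposition~\ref{fdkb}. Your slightly cruder handling of the correction-term difference (triangle inequality instead of the paper's extra resolvent-identity split) still yields terms dominated by the stated bound, so the argument is correct and essentially identical to the paper's.
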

\begin{proof}
We decompose $f_{\hat{D},\la}^{\sigma}-f_{D,\la}^{\sigma}$ as
\be
\nono f_{\hat{D},\la}^{\sigma}-f_{D,\la}^{\sigma}=I_1-I_2,
\ee
in which
\bea
&& I_1=(\la I+L_{K,\hat{D}})^{-1}\hat{S}_D^Ty-(\la I+L_{K,D})^{-1}S_D^Ty,\\ \label{i1}
&& I_2=(\la I+L_{K,\hat{D}})^{-1}\hhe-(\la I+L_{K,D})^{-1}\he. \label{i2}
\eea
It is easy to see $\|f_{\hat{D},\la}^{\sigma}-f_{D,\la}^{\sigma}\|_{L_{\rho_{X_{\mu}}}^2}\leq\|I_1\|_{L_{\rho_{X_{\mu}}}^2}+\|I_2\|_{L_{\rho_{X_{\mu}}}^2}$ and  $\mbb E[\|f_{\hat{D},\la}^{\sigma}-f_{D,\la}^{\sigma}\|_{L_{\rho_{X_{\mu}}}^2}]\leq\mbb E[\|I_1\|_{L_{\rho_{X_{\mu}}}^2}]+\mbb E[\|I_2\|_{L_{\rho_{X_{\mu}}}^2}]$ after taking expectation. In the following, we estimate $\mbb E[\|I_1\|_{L_{\rho_{X_{\mu}}}^2}]$ and $\mbb E[\|I_2\|_{L_{\rho_{X_{\mu}}}^2}]$ respectively. We first estimate $\mbb E[\|I_1\|_{L_{\rho_{X_{\mu}}}^2}]$. Since
\bea
\n I_1=(\la I+L_{K,\hat{D}})^{-1}\hat{S}_D^Ty-(\la I+L_{K,D})^{-1}S_D^Ty\\
\n=(\la I+L_{K,\hat{D}})^{-1}(\hat{S}_D^Ty-S_D^Ty)+\Big[(\la I+L_{K,\hat{D}})^{-1}-(\la I+L_{K,D})^{-1}\Big]S_D^Ty\\
\n=(\la I+L_{K,\hat{D}})^{-1}(\hat{S}_D^Ty-S_D^Ty)+(\la I+L_{K,\hat{D}})^{-1}(L_{K,D}-L_{K,\hat{D}})(\la I+L_{K,D})^{-1}S_D^Ty.
\eea
Then, it follows that
\be
\|I_1\|_{L_{\rho_{X_{\mu}}}^2}\leq\|I_{1,1}\|_K+\|I_{1,2}\|_K \  \  \text{and} \ \ \  \mbb E[\|I_1\|_{L_{\rho_{X_{\mu}}}^2}]\leq\mbb E[\|I_{1,1}\|_K]+\mbb E[\|I_{1,2}\|_K],
\ee
in which
\bea
&&I_{1,1}=L_K^{1/2}(\la I+L_{K,\hat{D}})^{-1}(\hat{S}_D^Ty-S_D^Ty),\\
&&I_{1,2}=L_K^{1/2}(\la I+L_{K,\hat{D}})^{-1}(L_{K,D}-L_{K,\hat{D}})(\la I+L_{K,D})^{-1}S_D^Ty.
\eea
Then, using Cauchy inequality, Lemma \ref{twol} and Lemma \ref{secl}, we have
\bea
\n\mbb E[\|I_{1,1}\|_{L_{\rho_{X_{\mu}}}^2}]\leq\mbb E_{\mathbf{z}^{|D|}}\Bigg[\Big\{\mbb E_{\mathbf{x}^{\mathbf{d},|D|}|\mathbf{z}^{|D|}}[\|L_K^{1/2}(\la I+ L_{K,\hat{D}})^{-1}\|^2]\Big\}^{1/2}\Big\{E_{\mathbf{x}^{\mathbf{d},|D|}|\mathbf{z}^{|D|}}\Big[\jj \hat{S}_D^Ty-S_Dy\jj_K^2\Big]\Big\}^{\f{1}{2}}\Bigg] \\
\n\leq\Bigg[\Big(\sqrt{2}\la^{-3/2}(2+\sqrt{\pi})L\f{2^{\f{2+\aaa}{2}}B_k^{\f{\aaa}{2}}}{d^{\f{\aaa}{2}}}\Big)\mbb E_{\mathbf{z}^{|D|}}[\hc]+\sqrt{2}\la^{-1/2}\mbb E_{\mathbf{z}^{|D|}}[\hc^{1/2}]\Bigg]\cdot(2+\sqrt{\pi})^{1/2}LM\f{2^{\f{\aaa}{2}}B_k^{\f{\aaa}{2}}}{d^{\f{\aaa}{2}}}\\
\n\leq\Bigg[\sqrt{2}(2+\sqrt{\pi})^{\f{1}{2}}L2^{\f{\aaa+2}{2}}B_k^{\f{\aaa}{2}}\f{1}{\la^{\f{3}{2}}d^{\f{\aaa}{2}}}(2\Gamma(3)+\log^22)\Big(\f{\ha}{\sqrt{\la}}+1\Big)^2\\
 && \ \ \ \ \ \ \ \ \ \ \ \ \ \ \ \ +\sqrt{2}\la^{-\f{1}{2}}(2\Gamma(2)+\log2)\Big(\f{\ha}{\sqrt{\la}}+1\Big)\Bigg](2+\sqrt{\pi})^{1/2}LM2^{\f{\aaa}{2}}B_k^{\f{\aaa}{2}}\f{1}{d^{\f{\aaa}{2}}}\label{i11},
\eea
in which the first inequality follows from Cauchy inequality, the second inequality follows from Lemma \ref{secl} and Lemma \ref{twol}, the third inequality follows from Lemma \ref{bcd}. On the other hand, following \eqref{re2} in Lemma \ref{rel}, we have
\be
(\la I+L_{K,D})^{-1}S_D^Ty=f_{D,\la}^{\sigma}+(\la I+L_{K,D})^{-1}\he.
\ee
Hence
\be
I_{1,2}=L_K^{1/2}(\la I+L_{K,\hat{D}})^{-1}(L_{K,D}-L_{K,\hat{D}})\Big[f_{D,\la}^{\sigma}+(\la I+L_{K,D})^{-1}\he)\Big].
\ee
Then it follows that
\be
\|I_{1,2}\|_K\leq\|L_K^{1/2}(\la I+L_{K,\hat{D}})^{-1}\|\|L_{K,D}-L_{K,\hat{D}}\|\Big(\|f_{D,\la}^{\sigma}\|_K+\|(\la I+L_{K,D})^{-1}\he\|_K\Big).
\ee
Then by taking expectation, it follows that
\bea
\n\mbb E[\|I_{1,2}\|_K]\leq\mbb E_{\mathbf{z}^{|D|}}\Bigg[\Big\{\mbb E_{\mathbf{x}^{\mathbf{d},|D|}|\mathbf{z}^{|D|}}[\|L_K^{1/2}(\la I+ L_{K,\hat{D}})^{-1}\|^2]\Big\}^{1/2}\Big\{\mbb E_{\mathbf{x}^{\mathbf{d},|D|}|\mathbf{z}^{|D|}}\Big[\jj L_{K,\hat{D}}-L_{K,D}\jj_K^2\Big]\Big\}^{\f{1}{2}}\\
\n\quad \quad \quad \quad \quad \quad \quad \quad \quad \quad \quad \quad \quad \quad \quad \Big(\|f_{D,\la}^{\sigma}\|_K+\|(\la I+L_{K,D})^{-1}\he\|_K\Big)\Bigg]\\
\n\leq\kk L(2+\pi)^{\f{1}{2}}\f{2^{\f{\aaa+2}{2}}B_k^{\f{\aaa}{2}}}{d^{\f{\aaa}{2}}}\Big[\sqrt{2}(2+\sqrt{\pi})^{\f{1}{2}}L2^{\f{\aaa+2}{2}}B_k^{\f{\aaa}{2}}\f{1}{\la^{\f{3}{2}}d^{\f{\aaa}{2}}}+\sqrt{2}\la^{-\f{1}{2}}\Big]\\
\n\ \ \cdot\Big(\{\mbb E_{\mathbf{z}^{|D|}}[\hc^2]\}^{1/2}\{\mbb E_{\mathbf{z}^{|D|}}[\|f_{D,\la}^{\sigma}\|_K^2]\}^{1/2}+\{\mbb E_{\mathbf{z}^{|D|}}[\hc]\}^{1/2}\{\mbb E_{\mathbf{z}^{|D|}}[\|f_{D,\la}^{\sigma}\|_K^2]\}^{1/2}\Big)\\
\n+\kk L(2+\pi)^{\f{1}{2}}\f{2^{\f{\aaa+2}{2}}B_k^{\f{\aaa}{2}}}{d^{\f{\aaa}{2}}}\Big[\sqrt{2}(2+\sqrt{\pi})^{\f{1}{2}}L2^{\f{\aaa+2}{2}}B_k^{\f{\aaa}{2}}\f{1}{\la^{\f{3}{2}}d^{\f{\aaa}{2}}}+\sqrt{2}\la^{-\f{1}{2}}\Big]\\
\n\cdot \Big(\mbb E_{\mathbf{z}^{|D|}}[\hc]+\mbb E_{\mathbf{z}^{|D|}}[\hc^{1/2}]\Big)\cdot2^{2p}c_p\kk\sigma^{-2p}[\kk^{2p+1}(\sqrt{C_V}M)^{2p+1}\la^{-(p+\f{3}{2})}+M^{2p+1}\la^{-1}],
\eea
in which the last estimate follows from $\|(\la I+L_{K,D})^{-1}\he\|_K\leq\la^{-1}\|\he\|_K$, Lemma \ref{ees} and the basic fact that $\sqrt{2}(2+\sqrt{\pi})^{\f{1}{2}}L2^{\f{\aaa+2}{2}}B_k^{\f{\aaa}{2}}\f{1}{\la^{\f{3}{2}}d^{\f{\aaa}{2}}}\hc+\sqrt{2}\la^{-\f{1}{2}}\hc^{1/2}\leq\big[\sqrt{2}(2+\sqrt{\pi})^{\f{1}{2}}L2^{\f{\aaa+2}{2}}B_k^{\f{\aaa}{2}}\f{1}{\la^{\f{3}{2}}d^{\f{\aaa}{2}}}+\sqrt{2}\la^{-\f{1}{2}}\big](\hc+\hc^{1/2})$ . Using Lemma \ref{bcd} to $\hc$ and Proposition \ref{fdkb}, we have
\bea
\n\mbb E[\|I_{1,2}\|_K]\\
\n\leq\kk L(2+\pi)^{\f{1}{2}}\f{2^{\f{\aaa+2}{2}}B_k^{\f{\aaa}{2}}}{d^{\f{\aaa}{2}}}\Big[\sqrt{2}(2+\sqrt{\pi})^{\f{1}{2}}L2^{\f{\aaa+2}{2}}B_k^{\f{\aaa}{2}}\f{1}{\la^{\f{3}{2}}d^{\f{\aaa}{2}}}+\sqrt{2}\la^{-\f{1}{2}}\Big]\cdot\Big[(2\Gamma(5)+\log^42)^{1/2}\hha^2\\
\n+(2\Gamma(3)+\log^22)\hha\Big]\Bigg\{2C_{p,\kk,C_V,M}\Big[\hha^2\f{\ha'}{\sqrt{\la}}+\hha^2\f{\ha}{\sqrt{\la}}\|f_{\la}\|_K\\
\n+\hha\sigma^{-2p}(\la^{-(p+\f{3}{2})}+\la^{-1})\Big]+2\|f_{\la}\|_K\Bigg\}\\
\n+\kk L(2+\pi)^{\f{1}{2}}\f{2^{\f{\aaa+2}{2}}B_k^{\f{\aaa}{2}}}{d^{\f{\aaa}{2}}}\Big[\sqrt{2}(2+\sqrt{\pi})^{\f{1}{2}}L2^{\f{\aaa+2}{2}}B_k^{\f{\aaa}{2}}\f{1}{\la^{\f{3}{2}}d^{\f{\aaa}{2}}}+\sqrt{2}\la^{-\f{1}{2}}\Big]\cdot2^{2p}c_p\kk\sigma^{-2p}\\
&&\cdot[\kk^{2p+1}(\sqrt{C_V}M)^{2p+1}\la^{-(p+\f{3}{2})}+M^{2p+1}\la^{-1}][(2\Gamma(5)+\log^42)^{\f{1}{2}}\hha^2+(2\Gamma(3)+\log^22)\hha].\label{i12}
\eea
%According to the coefficients of right hand side of .....
Now we consider norm estimates on $I_2$ in \eqref{i2}.
\bea
\n I_2=(\la I+L_{K,\hat{D}})^{-1}\hhe-(\la I+L_{K,D})^{-1}\he\\
\n =[(\la I+L_{K,\hat{D}})^{-1}-(\la I+L_{K,D})^{-1}]\hhe+(\la I+L_{K,D})^{-1}(\hhe-\he).
\eea
Then it follows that
\be
\|I_2\|_{L_{\rho_{X_{\mu}}}^2}\leq\|I_{2,1}\|_K+\|I_{2,2}\|_K, \ \text{and} \  \mbb E[\|I_2\|_{L_{\rho_{X_{\mu}}}^2}]\leq\mbb E[\|I_{2,1}\|_K]+\mbb E[\|I_{2,2}\|_K],
\ee
in which
\bea
&&I_{2,1}=L_K^{1/2}[(\la I+L_{K,\hat{D}})^{-1}-(\la I+L_{K,D})^{-1}]\hhe,\\
&&I_{2,2}=L_K^{1/2}(\la I+L_{K,D})^{-1}(\hhe-\he).
\eea
Now we estimate $\mbb E[\|I_{2,1}\|_K]$ and $\mbb E[\|I_{2,2}\|_K]$. For $\|I_{2,1}\|_K$, we have
\bea
\n\|I_{2,1}\|_K=\|L_K^{1/2}(\la I+L_{K,\hat{D}})^{-1}(L_{K,D}-L_{K,\hat{D}})(\la I+L_{K,D})^{-1}\hhe\|_K\\
&&\leq \|L_K^{1/2}[(\la I+L_{K,\hat{D}})^{-1}\|\|L_{K,D}-L_{K,\hat{D}}\|\|(\la I+L_{K,D})^{-1}\hhe\|_K. \label{i2111}
\eea
By Lemma \ref{ees}, $\|(\la I+L_{K,D})^{-1}\hhe\|_K$ shares  same upper bound of $2^{2p}c_p\kk\sigma^{-2p}[\kk^{2p+1}(\sqrt{C_V}M)^{2p+1}\la^{-(p+\f{3}{2})}+M^{2p+1}\la^{-1}]$ with $\|(\la I+L_{K,D})^{-1}\he\|_K$. Then, it is easy to see the right hand side of \eqref{i2111} has already been estimated in the procedure on estimating $\mbb [\|I_{1,2}\|_K]$, with bound
\bea
\n\mbb E[\|I_{2,1}\|_K]\leq\kk L(2+\pi)^{\f{1}{2}}\f{2^{\f{\aaa+2}{2}}B_k^{\f{\aaa}{2}}}{d^{\f{\aaa}{2}}}\Big[\sqrt{2}(2+\sqrt{\pi})^{\f{1}{2}}L2^{\f{\aaa+2}{2}}B_k^{\f{\aaa}{2}}\f{1}{\la^{\f{3}{2}}d^{\f{\aaa}{2}}}+\sqrt{2}\la^{-\f{1}{2}}\Big]\cdot2^{2p}c_p\kk\sigma^{-2p}\\
\n\cdot[\kk^{2p+1}(\sqrt{C_V}M)^{2p+1}\la^{-(p+\f{3}{2})}+M^{2p+1}\la^{-1}][(2\Gamma(5)+\log^42)^{\f{1}{2}}\hha^2+(2\Gamma(3)+\log^22)\hha].\\
&&  \label{i21}
\eea
For $\mbb E[\|I_{2,2}\|_K]$, since
\bea
\n \|I_{2,2}\|_K\leq\|L_K^{1/2}(\la I+L_{K,D})^{-1}\|\Big(\|\hhe\|_K+\|\he\|_K\Big)\\
\n\leq\|(\la I+L_K)^{1/2}(\la I+L_{K,D})^{-1/2}\|\|(\la I+L_{K,D})^{-1/2}\|\Big(\|\hhe\|_K+\|\he\|_K\Big)\\
\n\leq\hc^{1/2}\cdot\la^{-1/2}\cdot2\cdot2^{2p}c_p\kk\sigma^{-2p}\Big[\kk^{2p+1}(\sqrt{C_V}M)^{2p+1}\la^{-(p+\f{1}{2})}+M^{2p+1}\Big],
\eea
in which the second inequality follows from Lemma \ref{ees} and the fact that $\|T_1^sT_2^s\|\leq\|T_1T_2\|^s$, $s\in (0,1]$ for two positive self-adjoint operators $T_1$, $T_2$, after taking expectation, by using Lemma \ref{bcd} to $\hc^{1/2}$ with $s=1/2$, we have
\bea
\mbb E[\|I_{2,2}\|_K]\leq(2\Gamma(2)+\log2)\hha\cdot2\cdot2^{2p}c_p\kk\sigma^{-2p}\Big[\kk^{2p+1}(\sqrt{C_V}M)^{2p+1}\la^{-(p+1)}+M^{2p+1}\la^{-1/2}\Big].\label{i22}
\eea
Finally, combine the estimates \eqref{i11}, \eqref{i12}, \eqref{i21}, \eqref{i22} for $\mbb E[\|I_{1,1}\|_K]$, $\mbb E[\|I_{1,2}\|_K]$, $\mbb E[\|I_{2,1}\|_K]$, $\mbb E[\|I_{2,2}\|_K]$, then set and take out the following constants associated with them,
\bea
\n c_1=\Big[\sqrt{2}(2+\sqrt{\pi})^{\f{1}{2}}L2^{\f{2+\aaa}{2}}B_k^{\f{\aaa}{2}}(2\Gamma(3)+\log^22)+\sqrt{2}(2\Gamma(2)+\log2)\Big](2+\sqrt{\pi})^{\f{1}{2}}LM2^{\f{\aaa}{2}}B_k^{\f{\aaa}{2}},\\
\n c_2=2\kk L(2+\sqrt{\pi})^{\f{1}{2}}2^{\f{\aaa+2}{2}}B_k^{\f{\aaa}{2}}\Big(\sqrt{2}(2+\sqrt{\pi})^{\f{1}{2}}L2^{\f{\aaa+2}{2}}B_k^{\f{\aaa}{2}}+\sqrt{2}\Big)\Big[(2\Gamma(5)+\log^42)^{\f{1}{2}}\\
\n\ \ \ \ \ \ \ \ \ \  +(2\Gamma(3)+\log^22)\Big]\Big(2C_{p,\kk,C_V,M}+2\Big),\\
\n c_3=2\kk L(2+\sqrt{\pi})^{\f{1}{2}}2^{\f{\aaa+2}{2}}B_k^{\f{\aaa}{2}}\Big(\sqrt{2}(2+\sqrt{\pi})^{\f{1}{2}}L2^{\f{\aaa+2}{2}}B_k^{\f{\aaa}{2}}+\sqrt{2}\Big)\Big[(2\Gamma(5)+\log^42)^{\f{1}{2}}\\
\n\ \ \ \ \ \ \ \ \ \  +(2\Gamma(3)+\log^22)\Big]2^{2p}c_p\kk\Big[\kk^{2p+1}(\sqrt{C_V}M)^{2p+1}+M^{2p+1}\Big],\\
\n c_4=(2\Gamma(2)+\log2)2\cdot2^{2p}c_p\kk\Big[\kk^{2p+1}(\sqrt{C_V}M)^{2p+1}+M^{2p+1}\Big].
\eea
$C_{p,\kk,C_V,M}$ is defined as in \eqref{dac}. Finally, taking $\bar{C}=\max\{c_1, c_2, c_3, c_4\}$ yields the desired bound in Proposition \ref{hfjf}.
\end{proof}

As a result of combing Proposition \ref{hfjf}, Proposition \ref{ones} and \eqref{sz}, the next theorem is a general error estimate without decaying restriction on effective dimension $\huaN(\la)$. The result is crucial to obtain learning rates.

\begin{thm}\label{thm1}
Suppose that the regularity condition \eqref{regularity} holds with $r>0$ and $|y|\leq M$ almost surely.  The mapping $K_{(\cdot)}:X_{\mu}\rightarrow\mathcal{L}(Y,\huaH_K)$ is $(\aaa,L)$-H\"older continuous with $\aaa\in(0,1]$ and $L>0$. Then we have
\bea
\nono&&\mbb E\bigg[\big\|f_{\hat{D},\la}^{\sigma}-f_{\rho}\big\|_{L_{\rho_{X_{\mu}}}^2}\bigg]\\
\nono&&\leq \bar{C}\Bigg\{\big(\f{1}{\la d^{\f{\aaa}{2}}}\hha^2+\hha\big)\f{1}{\la^{\f{1}{2}}d^{\f{\aaa}{2}}}+\f{1}{\la^{\f{1}{2}}d^{\f{\aaa}{2}}}(\f{1}{\la d^{\f{\aaa}{2}}}+1)(\hha^2+\hha)\bigg[\hha^2\cdot\f{\ha'}{\sqrt{\la}}\\
\n \ +\hha^2(\hha-1)\|f_{\la}\|_K+\hha\sigma^{-2p}(\la^{-(p+\f{3}{2})}+\la^{-1})+\|f_{\la}\|_K\bigg]\\
\n \ +\f{1}{\la^{\f{1}{2}}d^{\f{\aaa}{2}}}\f{1}{\la d^{\f{\aaa}{2}}}\sigma^{-2p}(\la^{-(p+\f{3}{2})}+\la^{-1})(\hha^2+\hha)+\hha\sigma^{-2p}(\la^{-(p+1)}+\la^{-\f{1}{2}})\Bigg\}\\
&&\ +C_{p,\kk,C_V,M}\Big[\hha^2\ha'+\hha^2\ha\|f_{\la}\|_K+\hha\cdot\f{1}{\sqrt{\la}}\sigma^{-2p}(\la^{-(p+\f{1}{2})}+1)\Big]+\|g_{\rho}\|_{L_{\rho_{X_{\mu}}}^2}\la^{\min\{r,1\}}.\label{main}
\eea
$\bar{C}$ is defined as in Proposition \ref{hfjf}, $C_{p,\kk,C_V,M}$ is defined as in \eqref{dac}.  $\ha$, $\ha'$, $\hha$ are defined as in \eqref{r1}, \eqref{r2}, \eqref{r3}.
\end{thm}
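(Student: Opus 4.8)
The plan is to assemble Theorem~\ref{thm1} from the three pieces of the decomposition already recorded above, namely
\be
f_{\hat{D},\la}^{\sigma}-f_{\rho}=(f_{\hat{D},\la}^{\sigma}-f_{D,\la}^{\sigma})+(f_{D,\la}^{\sigma}-f_{\la})+(f_{\la}-f_{\rho}).
\ee
Taking the $L_{\rho_{X_{\mu}}}^2$-norm, applying the triangle inequality, and then passing to the total expectation $\mbb E=\mbb E_{\mathbf{z}^{|D|}}\mbb E_{\mathbf{x}^{\mathbf{d},|D|}|\mathbf{z}^{|D|}}$ over both sampling stages yields
\be
\mbb E\big[\|f_{\hat{D},\la}^{\sigma}-f_{\rho}\|_{L_{\rho_{X_{\mu}}}^2}\big]\leq \mbb E\big[\|f_{\hat{D},\la}^{\sigma}-f_{D,\la}^{\sigma}\|_{L_{\rho_{X_{\mu}}}^2}\big]+\mbb E\big[\|f_{D,\la}^{\sigma}-f_{\la}\|_{L_{\rho_{X_{\mu}}}^2}\big]+\|f_{\la}-f_{\rho}\|_{L_{\rho_{X_{\mu}}}^2},
\ee
the last term being deterministic.

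First I would bound the two-stage term $\mbb E\big[\|f_{\hat{D},\la}^{\sigma}-f_{D,\la}^{\sigma}\|_{L_{\rho_{X_{\mu}}}^2}\big]$ directly by Proposition~\ref{hfjf}, whose hypotheses are exactly those assumed in Theorem~\ref{thm1} (regularity \eqref{regularity} with $r>0$, $|y|\le M$, and $(\aaa,L)$-H\"older continuity of $K_{(\cdot)}$), and which already supplies precisely the bracketed expression multiplied by $\bar C$ in \eqref{main}. Next I would bound the bridge term $\mbb E_{\mathbf{z}^{|D|}}\big[\|f_{D,\la}^{\sigma}-f_{\la}\|_{L_{\rho_{X_{\mu}}}^2}\big]$ by Proposition~\ref{ones}, which contributes the term with prefactor $C_{p,\kk,C_V,M}$ from \eqref{dac}; here one uses that $f_{D,\la}^{\sigma}$ and $f_{\la}$ are measurable with respect to the first-stage sample alone, so only $\mbb E_{\mathbf{z}^{|D|}}$ is in play and there is no second-stage contribution. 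Finally, the approximation term $\|f_{\la}-f_{\rho}\|_{L_{\rho_{X_{\mu}}}^2}$ is controlled by the Smale--Zhou estimate \eqref{sz}, giving $\|g_{\rho}\|_{L_{\rho_{X_{\mu}}}^2}\la^{\min\{r,1\}}$. Adding the three bounds and keeping $\bar C$ and $C_{p,\kk,C_V,M}$ exactly as defined in Proposition~\ref{hfjf} and \eqref{dac} produces \eqref{main} verbatim, with no new constants; in particular the $\|f_{\la}\|_K$ factors are left untouched, matching the form of the statement.

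At this stage there is no real obstacle: the substantive analytic work — the resolvent identity $A^{-1}-B^{-1}=A^{-1}(B-A)B^{-1}$ together with the factorization through $(\la I+L_K)^{1/2}$, the moment estimates of Lemma~\ref{bcd}, the second-stage comparisons of Lemma~\ref{twol} and Lemma~\ref{secl}, and the control of the nonlinear residuals $\he,\hhe$ via Lemma~\ref{ees} — has all been dispatched in the preparatory propositions. The only point requiring care is to check that the three term-wise bounds live on a common probability space with compatible expectation operators, so that term-by-term addition is legitimate; this is immediate from the tower property once one notes that $f_{D,\la}^{\sigma}$ and $f_{\la}$ do not depend on the second-stage sample and that $f_{\la}-f_{\rho}$ is nonrandom. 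A mild cosmetic simplification would be to merge $\bar C$ and $C_{p,\kk,C_V,M}$ into a single $\max$, but keeping them separate is convenient for the subsequent capacity-based derivation of the learning rates in Theorem~\ref{thm2}.
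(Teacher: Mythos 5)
Your proposal is correct and coincides with the paper's own argument: the paper proves Theorem~\ref{thm1} by exactly this three-term decomposition, bounding $f_{\hat{D},\la}^{\sigma}-f_{D,\la}^{\sigma}$ via Proposition~\ref{hfjf}, $f_{D,\la}^{\sigma}-f_{\la}$ via Proposition~\ref{ones}, and $f_{\la}-f_{\rho}$ via the Smale--Zhou bound \eqref{sz}, then adding the bounds. Your additional remark about the tower property and the measurability of $f_{D,\la}^{\sigma}$ and $f_{\la}$ with respect to the first-stage sample is a sound (if implicit in the paper) justification of the term-by-term summation.
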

\begin{proof}
Combine Proposition \ref{hfjf}, Proposition \ref{ones} and \eqref{sz}, the proof is completed.
\end{proof}
\section{Proofs of main results}
We estimate the learning rates of RDR in this section. In the following, for convenience of analysis on learning rates, we use the convention that $A_{|D|}\les B_{|D|}$ ($A_{|D|}=\o(B_{|D|})$) denotes that there exist some constant $C>0$ independent of the cardinality $|D|$, $d$ and $\sigma$ such that $A_{|D|}\les C\cdot B_{|D|}$ for any $|D|$ for some functions $A_{|D|}$, $B_{|D|}$ which may depend on $|D|$. Also, we use $A_{|D|}\les 1$ to denote that there is a constant $C>0$ independent of $|D|$, $d$ and $\sigma$ such that $A_{|D|}\leq C$.
%Also, we write $A_{|D|}\thickapprox B_{|D|}$ to denote that there are constants $C_1>0$ and $C_2>0$ independent of $|D|$ such that $C_1A_{|D|}\leq B_{|D|}\leq C_2B_{|D|}$.
We need to estimate the right hand side of \eqref{main} in Theorem \ref{thm1} in different regularity range of $r$ when the regularization parameter $\la$ and second stage sample size $d$ take different orders of $|D|$, the cardinality of data set $D$. According to Smale and Zhou \cite{s4}
\bea
\|f_{\la}\|_K&\leq&\left\{
  \begin{array}{ll}
    \|g_{\rho}\|_{L_{\rho_{X_{\mu}}}^2}\la^{r-\f{1}{2}}, & \hbox{$r\in(0,1/2)$;} \\
   \kk^{2r-1}\|g_{\rho}\|_{L_{\rho_{X_{\mu}}}^2}, & \hbox{$r\in[1/2,\infty)$;} \label{smz}
  \end{array}
\right.
\eea
this estimate will be used in the following in different regularity range of $r$. For convenience, we denote the main terms of right hand side of \eqref{main} of Theorem \ref{thm1} by
\bea
\n \tra=\Big(\f{1}{\la d^{\f{\aaa}{2}}}\hha^2+\hha\Big)\f{1}{\la^{\f{1}{2}}d^{\f{\aaa}{2}}},\\
\n\trb=\f{1}{\la^{\f{1}{2}}d^{\f{\aaa}{2}}}(\f{1}{\la d^{\f{\aaa}{2}}}+1)(\hha^2+\hha)\Big[\hha^2\f{\ha'}{\sqrt{\la}}+\hha(\hha-1)\|f_{\la}\|_K\\
\n\ \ \ \ \ \ \ \ \ \ \ \ \  +\hha\sigma^{-2p}(\la^{-(p+\f{3}{2})}+\la^{-1})+\|f_{\la}\|_K\Big],\\
\n\trc=\f{1}{\la^{\f{1}{2}}d^{\f{\aaa}{2}}}(\f{1}{\la d^{\f{\aaa}{2}}}+1)\sigma^{-2p}(\la^{-(p+\f{3}{2})}+\la^{-1})(\hha^2+\hha),\\
\n\trd=\hha\sigma^{-2p}(\la^{-(p+1)}+\la^{-1/2}),\\
\n\tre=\hha^2\ha'+\hha^2\ha\|f_{\la}\|_K+\hha\cdot\f{1}{\sqrt{\la}}\sigma^{-2p}(\la^{-(p+\f{1}{2})}+1),\\
\n\trf=\|g_{\rho}\|_{L_{\rho_{X_{\mu}}}^2}\la^{\min\{r,1\}}.
\eea
In the following, the goal is to estimate the main terms $\tra\sim\trf$ for different regularity range of $r$ when $\la$ and $d$ take different orders  of $|D|$.
\subsection{Learning rates for $r\in(0,1/2)$}
When $r\in(0,1/2)$, \eqref{smz} implies $\|f_{\la}\|_K\leq\|g_{\rho}\|_{L_{\rho_{X_{\mu}}}^2}\la^{r-\f{1}{2}}$, $r\in(0,1/2)$. After taking  $\la=|D|^{-\f{1}{1+\beta}}$, $d=|D|^{\f{2}{\aaa(1+\beta)}}$, the following basic estimates hold for $r\in(0,1/2)$:
\be
\nono\f{1}{\la^{\f{1}{2}}d^{\f{\aaa}{2}}}=|D|^{\f{1}{2(1+\beta)}}|D|^{-\f{1}{1+\beta}}=|D|^{-\f{1}{2(1+\beta)}},
\ee
\be
\nono \f{1}{\la d^{\f{\aaa}{2}}}=|D|^{\f{1}{1+\beta}}|D|^{-\f{1}{1+\beta}}=1.
\ee
According to the condition $\huaN(\la)\leq\c_0\la^{-\beta}$, $\beta\in(0,1]$, we have
\be
\nono\ha=\f{2\kk}{\sqrt{|D|}}\Big(\f{\kk}{\sqrt{|D|\la}}+\sqrt{\huaN(\la)}\Big)\leq2\kk(\kk+\sqrt{\c_0})|D|^{-\f{r}{1+\beta}}\les|D|^{-\f{r}{1+\beta}}.
\ee
and
\be
\nono\f{\ha}{\sqrt{\la}}\leq2\kk(\kk+\sqrt{\c_0}), \ \hha=\f{\ha}{\sqrt{\la}}+1\leq 2\kk(\kk+\sqrt{\c_0})+1.
\ee
Same procedure with above inequalities implies
\be
\nono\ha'\les|D|^{-\f{r}{1+\beta}} \  \text{and} \ \f{\ha'}{\sqrt{\la}}\leq1+\sqrt{\c_0}.
\ee
Then,  since $\hha\les 1$ and $\hha-1\les 1$, it follows that
\be
\nono \tra=\Big(\f{1}{\la d^{\f{\aaa}{2}}}\hha^2+\hha\Big)\f{1}{\la^{\f{1}{2}}d^{\f{\aaa}{2}}}\les|D|^{-\f{1}{2(1+\beta)}}\les|D|^{-\f{r}{1+\beta}}  \  (\text{since} \ r\in(0,1/2)).
\ee
Now turn to $\trb$. We spit $\trb$ into four parts and estimate them each other.
\be
\nono(i)\ \ \ \f{1}{\la^{\f{1}{2}}d^{\f{\aaa}{2}}}(\f{1}{\la d^{\f{\aaa}{2}}}+1)(\hha^2+\hha)\hha^2\f{\ha'}{\sqrt{\la}}\les|D|^{-\f{1}{2(1+\beta)}}\les|D|^{-\f{r}{1+\beta}}.
\ee
When $\la=|D|^{-\f{1}{1+\beta}}$, $d=|D|^{\f{2}{\aaa(1+\beta)}}$,
\bea
\n (ii) \ \ \ \ \f{1}{\la^{\f{1}{2}}d^{\f{\aaa}{2}}}(\f{1}{\la d^{\f{\aaa}{2}}}+1)(\hha^2+\hha)\hha^2(\hha-1)\|f_{\la}\|_K\les \f{1}{\la^{\f{1}{2}}d^{\f{\aaa}{2}}}\cdot\la^{r-\f{1}{2}}\\
\n\quad\quad\quad\quad\quad\quad\quad\quad\quad\quad\quad\quad\quad\quad\quad\quad\quad\quad\quad\quad\les|D|^{-\f{1}{2(1+\beta)}}|D|^{-\f{r-\f{1}{2}}{1+\beta}}\les|D|^{-\f{r}{1+\beta}}.
\eea
Same way with $(ii)$ implies
\be
\nono (iii) \ \ \ \ \ \f{1}{\la^{\f{1}{2}}d^{\f{\aaa}{2}}}(\f{1}{\la d^{\f{\aaa}{2}}}+1)(\hha^2+\hha)\|f_{\la}\|_K\les|D|^{-\f{r}{1+\beta}}.
\ee
For the forth term,
\bea
\n (iv)\ \ \ \ \ \ \f{1}{\la^{\f{1}{2}}d^{\f{\aaa}{2}}}(\f{1}{\la d^{\f{\aaa}{2}}}+1)(\hha^2+\hha)\hha\sigma^{-2p}(\la^{-(p+\f{3}{2})}+\la^{-1})\\
\n\quad\quad\quad\quad\les |D|^{-\f{1}{2(1+\beta)}}\cdot 1\cdot \sigma^{-2p}\Big(|D|^{\f{p+\f{3}{2}}{1+\beta}}+|D|^{\f{1}{1+\beta}}\Big)\\
\n\quad\quad\quad\quad\les \sigma^{-2p}\Big(|D|^{\f{p+1}{1+\beta}}+|D|^{\f{1}{2(1+\beta)}}\Big)\les\max\Big\{\f{|D|^{\f{p+1}{1+\beta}}}{\sigma^{2p}},\f{|D|^{\f{1}{2(1+\beta)}}}{\sigma^{2p}}\Big\}\les\f{|D|^{\f{p+1}{1+\beta}}}{\sigma^{2p}}.
\eea
Combining $(i)\sim(iv)$, we obtain that
\be
\nono\trb\les\max\Big\{|D|^{-\f{r}{1+\beta}},\f{|D|^{\f{p+1}{1+\beta}}}{\sigma^{2p}}\Big\}.
\ee
Then the same way with above $(iv)$ implies
\be
\nono\trc=\f{1}{\la^{\f{1}{2}}d^{\f{\aaa}{2}}}(\f{1}{\la d^{\f{\aaa}{2}}}+1)\sigma^{-2p}(\la^{-(p+\f{3}{2})}+\la^{-1})(\hha^2+\hha)\les\f{|D|^{\f{p+1}{1+\beta}}}{\sigma^{2p}}.
\ee
For $\trd$, it follows that
\be
\nono\trd=\hha\sigma^{-2p}(\la^{-(p+1)}+\la^{-1/2})\leq\sigma^{-2p}\Big(|D|^{\f{p+1}{1+\beta}}+|D|^{\f{1}{2(1+\beta)}}\Big)\les\f{|D|^{\f{p+1}{1+\beta}}}{\sigma^{2p}} \ (\text{since} \ p+1>\f{1}{2}).
\ee
For $\tre$, since
\be
\nono\hha^2\ha'\les|D|^{-\f{1}{2(1+\beta)}}\les|D|^{-\f{r}{1+\beta}},
\ee
\be
\nono\hha^2\ha\|f_{\la}\|_K\leq\hha^2\ha\|g_{\rho}\|_{L_{\rho_{X_{\mu}}}^2}\la^{r-\f{1}{2}}\les|D|^{-\f{1}{2(1+\beta)}}|D|^{\f{r-\f{1}{2}}{1+\beta}}\les|D|^{-\f{r}{1+\beta}},
\ee
\be
\nono\hha\cdot\f{1}{\sqrt{\la}}\sigma^{-2p}(\la^{-(p+\f{1}{2})}+1)\les\sigma^{-2p}\Big(|D|^{\f{1+p}{1+\beta}}+|D|^{\f{1}{2(1+\beta)}}\Big)\les\f{|D|^{\f{1+p}{1+\beta}}}{\sigma^{2p}},
\ee
it follows that
\be
\nono \tre\les\max\Big\{|D|^{-\f{r}{1+\beta}},\f{|D|^{\f{p+1}{1+\beta}}}{\sigma^{2p}}\Big\}.
\ee
Then, for $\trf$,
\be
\nono \trf=\|g_{\rho}\|_{L_{\rho_{X_{\mu}}}^2}\la^{\min\{r,1\}}=\|g_{\rho}\|_{L_{\rho_{X_{\mu}}}^2}\les|D|^{-\f{r}{1+\beta}}.
\ee
Finally, combining above estimates for $\tra\sim\trf$ yields
\be
\nono \mbb E\bigg[\big\|f_{\hat{D},\la}^{\sigma}-f_{\rho}\big\|_{L_{\rho_{X_{\mu}}}^2}\bigg]=\o\Big(\max\Big\{|D|^{-\f{r}{1+\beta}},\f{|D|^{\f{p+1}{1+\beta}}}{\sigma^{2p}}\Big\}\Big), \ r\in(0,1/2).
\ee
\subsection{Learning rates for $r\in[1/2,1]$}
When $r\in[1/2,1]$, \eqref{smz} implies $\|f_{\la}\|_K\leq\kk^{2r-1}\|g_{\rho}\|_{L_{\rho_{X_{\mu}}}^2}$, $r\in [1/2,1]$. Before estimating the terms in \eqref{main}, when $\la=|D|^{-\f{1}{2r+\beta}}$, $d=|D|^{\f{1+2r}{\aaa(2r+\beta)}}$, we derive the following basic estimates at first. After substituting  $\la$ and $d$, we have
\be
\nono\f{1}{\la^{\f{1}{2}}d^{\f{\aaa}{2}}}=|D|^{-\f{r}{2r+\beta}}, \ \f{1}{\la d^{\f{\aaa}{2}}}=|D|^{\f{1}{2r+\beta}}|D|^{-\f{1+2r}{2(2r+\beta)}}=|D|^{\f{1-2r}{2(2r+\beta)}}\leq1.
\ee
Use the condition $\huaN(\la)\leq\c_0\la^{-\beta}$, $\beta\in(0,1]$, it follows that
\be
\nono\ha=\f{2\kk}{\sqrt{|D|}}\Big(\f{\kk}{\sqrt{|D|\la}}+\sqrt{\huaN(\la)}\Big)\leq2\kk(\kk+\sqrt{\c_0})|D|^{-\f{r}{2r+\beta}}\les|D|^{-\f{r}{2r+\beta}}.
\ee
and
\be
\nono\hha=\f{\ha}{\sqrt{\la}}+1=\f{2\kk}{\sqrt{|D|\la}}\Big(\f{\kk}{\sqrt{|D|\la}}+\sqrt{\huaN(\la)}\Big)+1\leq2\kk(\kk+\sqrt{\c_0})+1.
\ee
In a same way with estimate of above $\ha$, $\hha$, we have
\be
\nono\ha'\les|D|^{-\f{r}{2r+\beta}} \ \ \text{and} \ \ \f{\ha'}{\sqrt{\la}}\leq\f{1}{|D|\la}+\f{\sqrt{\huaN(\la)}}{\sqrt{|D|\la}}\leq1+\sqrt{\c_0}.
\ee
With above basic estimates, we can estimate main terms in \eqref{main} in Theorem \ref{thm1} to derive the learning rates of RDR when $r\in[1/2,1]$. Since $\hha\les 1$, $\hha-1\les 1$ and $\f{\ha'}{\sqrt{\la}}\les 1$, it follows that
\be
\nono\tra=\Big(\f{1}{\la d^{\f{\aaa}{2}}}\hha^2+\hha\Big)\f{1}{\la^{\f{1}{2}}d^{\f{\aaa}{2}}}\les|D|^{-\f{r}{2r+\beta}}.
\ee
Also,
\bea
\n\trb=\f{1}{\la^{\f{1}{2}}d^{\f{\aaa}{2}}}\f{1}{\la d^{\f{\aaa}{2}}}(\hha^2+\hha)\Big[\hha^2\f{\ha'}{\sqrt{\la}}+\hha(\hha-1)\|f_{\la}\|_K\\
\n+\hha\sigma^{-2p}(\la^{-(p+\f{3}{2})}+\la^{-1})+\|f_{\la}\|_K\Big]\les|D|^{-\f{r}{2r+\beta}}+\f{|D|^{\f{(\f{3}{2}+p)-r}{2r+\beta}}}{\sigma^{2p}}+\f{|D|^{\f{1-r}{2r+\beta}}}{\sigma^{2p}}\\
\nono&& \quad\quad\quad\quad\quad\quad\quad\quad\quad\quad\quad\quad\quad\quad\quad\quad\quad\quad\les\max\Big\{|D|^{-\f{r}{2r+\beta}},\f{|D|^{(\f{3}{2}-r)+p}}{\sigma^{2p}}\Big\},
\eea
in which the last inequality follows from the fact $\f{3}{2}+p>1$. For $\trc$, we have
\be
\nono\trc=\f{1}{\la^{\f{1}{2}}d^{\f{\aaa}{2}}}(\f{1}{\la d^{\f{\aaa}{2}}}+1)\sigma^{-2p}(\la^{-(p+\f{3}{2})}+\la^{-1})(\hha^2+\hha)\les\max\Big\{\f{|D|^{\f{\f{3}{2}+p-r}{2r+\beta}}}{\sigma^{2p}},\f{|D|^{\f{1-r}{2r+\beta}}}{\sigma^{2p}}\Big\}\les\f{|D|^{\f{\f{3}{2}+p-r}{2r+\beta}}}{\sigma^{2p}}.
\ee
Also, for $\trd$, we have
\be
\nono\trd=\hha\sigma^{-2p}(\la^{-(p+1)}+\la^{-1/2})\les\sigma^{-2p}\Big(|D|^{\f{p+1}{2r+\beta}}+|D|^{\f{1}{2}(\f{1}{2r+\beta})}\Big)\les\f{|D|^{\f{p+1}{2r+\beta}}}{\sigma^{2p}} \ (\text{since} \ p+1>\f{1}{2}).
\ee
On the other hand, since
\be
\nono\hha^2\ha'\les|D|^{-\f{r}{2r+\beta}},
\ee
\be
\nono\hha\ha\|f_{\la}\|_K\leq\hha\ha\kk^{2r-1}\|g_{\rho}\|_{L_{\rho_{X_{\mu}}}^2}\les|D|^{-\f{r}{2r+\beta}},
\ee
\be
\nono\hha\f{1}{\sqrt{\la}}\sigma^{-2p}(\la^{-(p+\f{1}{2})}+1)\les\f{|D|^{\f{p+1}{2r+\beta}}}{\sigma^{2p}}+\f{|D|^{\f{1}{2}(\f{1}{2r+\beta})}}{\sigma^{2p}}\les\f{|D|^{\f{p+1}{2r+\beta}}}{\sigma^{2p}},
\ee
Therefore, we have
\be
\nono\tre\les|D|^{-\f{r}{2r+\beta}}+\f{|D|^{\f{p+1}{2r+\beta}}}{\sigma^{2p}}\les\max\{|D|^{-\f{r}{2r+\beta}},\f{|D|^{\f{p+1}{2r+\beta}}}{\sigma^{2p}}\}.
\ee
Also, $\trf$ is estimated as follow
\be
\nono\trf=\nono\|g_{\rho}\|_{L_{\rho_{X_{\mu}}}^2}\la^{\min\{r,1\}}=\|g_{\rho}\|_{L_{\rho_{X_{\mu}}}^2}\la^r\les|D|^{-\f{r}{2r+\beta}}.
\ee
Now combining above estimates for the main terms $\tra\sim\trf$, noting the fact that when $r\in[1/2,1]$, $\f{3}{2}-r\leq1$ and
\be
\nono |D|^{\f{\f{3}{2}+p-r}{2r+\beta}}/\sigma^{2p}\leq |D|^{\f{1+p}{2r+\beta}}/\sigma^{2p},
\ee
we finally have
\be
\nono \mbb E\bigg[\big\|f_{\hat{D},\la}^{\sigma}-f_{\rho}\big\|_{L_{\rho_{X_{\mu}}}^2}\bigg]=\o\Big(\max\Big\{|D|^{-\f{r}{2r+\beta}},\f{|D|^{\f{1+p}{2r+\beta}}}{\sigma^{2p}}\Big\}\Big), \ r\in[1/2,1].
\ee

\subsection{Learning rates for $r\in(1,\infty)$}
When $r\in(0,\infty)$, \eqref{smz} implies $\|f_{\la}\|_K\leq\kk^{2r-1}\|g_{\rho}\|_{L_{\rho_{X_{\mu}}}^2}$, $r>1$. After taking $\la=|D|^{-\f{1}{2+\beta}}$, $d=|D|^{\f{1}{\aaa}(\f{3}{2+\beta})}$, we start with following basic estimates:
\be
\nono\f{1}{\la^{\f{1}{2}}d^{\f{\aaa}{2}}}\leq|D|^{\f{1}{2(2+\beta)}}|D|^{-\f{3}{2(2+\beta)}}=|D|^{-\f{1}{2+\beta}},
\ee
\be
\nono\f{1}{\la d^{\f{\aaa}{2}}}\leq|D|^{\f{1}{2+\beta}}|D|^{-\f{3}{2(2+\beta)}}=|D|^{\f{2}{2(2+\beta)}-\f{3}{2(2+\beta)}}=|D|^{-\f{1}{2(2+\beta)}}<1.
\ee
Since $\huaN(\la)\leq \c_0\la^{-\beta}$, $\beta\in(0,1]$, it follows that
\be
\nono\ha'=\f{1}{|D|\sqrt{\la}}+\f{\sqrt{\huaN(\la)}}{\sqrt{|D|}}\leq(1+\sqrt{\c_0})\f{\la^{-\f{\beta}{2}}}{\sqrt{|D|}}\Big(\f{1}{\sqrt{|D|}}\la^{\f{\beta-1}{2}}+1\Big).
\ee
When $\la=|D|^{-\f{1}{2+\beta}}$, $\f{1}{\sqrt{|D|}}\la^{\f{\beta-1}{2}}=|D|^{-\f{1}{2}}|D|^{\f{1-\beta}{2}\f{1}{2+\beta}}=|D|^{-\f{2\beta+1}{2(2+\beta)}}<1$ and $\f{\la^{-\f{\beta}{2}}}{\sqrt{|D|}}=|D|^{\f{\beta}{2(2+\beta)}-\f{1}{2}}=|D|^{\f{\beta-2-\beta}{2(2+\beta)}}=|D|^{-\f{1}{2+\beta}}$, hence we have
\be
\nono\ha'\les|D|^{-\f{1}{2+\beta}}, \ \ \text{same way implies} \ \ \ha\les|D|^{-\f{1}{2+\beta}}.
\ee
Also,
\be
\nono\f{\ha'}{\sqrt{\la}}\leq(1+\sqrt{\c_0})\f{|D|^{-\f{1}{2+\beta}}}{|D|^{-\f{1}{2(2+\beta)}}}\leq(1+\sqrt{\c_0})|D|^{-\f{1}{2(2+\beta)}}\leq1+\sqrt{\c_0}.
\ee
Same way implies
\be
\nono \f{\ha}{\sqrt{\la}}\leq1+\sqrt{\c_0}.
\ee
Now based on above estimates, note that $\hha\les 1$, $\hha-1\les 1$ and $\f{\ha'}{\sqrt{\la}}\les 1$, we can estimate $\tra\sim\trf$ as follows,
\be
\nono\tra=\Big(\f{1}{\la d^{\f{\aaa}{2}}}\hha^2+\hha^2\Big)\f{1}{\la^{\f{1}{2}}d^{\f{\aaa}{2}}}\les|D|^{-\f{1}{2+\beta}}.
\ee
For $\trb$, since
\be
\nono(i)\ \ \ \ \ \ \ \f{1}{\la^{\f{1}{2}}d^{\f{\aaa}{2}}}(\f{1}{\la d^{\f{\aaa}{2}}}+1)(\hha^2+\hha)\hha^2\f{\ha'}{\sqrt{\la}}\les|D|^{-\f{1}{2+\beta}},
\ee
\be
\nono(ii)\ \ \ \ \ \ \ \f{1}{\la^{\f{1}{2}}d^{\f{\aaa}{2}}}(\f{1}{\la d^{\f{\aaa}{2}}}+1)(\hha^2+\hha)\Big[\hha(\hha-1)\|f_{\la}\|_K+\|f_{\la}\|_K\Big]\les|D|^{-\f{1}{2+\beta}}.
\ee
\bea
\n(iii)\ \   \f{1}{\la^{\f{1}{2}}d^{\f{\aaa}{2}}}(\f{1}{\la d^{\f{\aaa}{2}}}+1)(\hha^2+\hha)\hha\sigma^{-2p}(\la^{-(p+\f{3}{2})}+\la^{-1})\les\sigma^{-2p}\Big(|D|^{\f{p+\f{3}{2}-1}{2+\beta}}+1\Big)\les\f{|D|^{\f{p+\f{1}{2}}{2+\beta}}}{\sigma^{2p}},
\eea
it follows that
\be
\nono\trb\les\max\Big\{|D|^{-\f{1}{2+\beta}},\f{|D|^{\f{p+\f{1}{2}}{2+\beta}}}{\sigma^{2p}}\Big\}.
\ee
Same reason with $(iii)$ implies
\be
\nono \trc=\f{1}{\la^{\f{1}{2}}d^{\f{\aaa}{2}}}(\f{1}{\la d^{\f{\aaa}{2}}}+1)\sigma^{-2p}(\la^{-(p+\f{3}{2})}+\la^{-1})(\hha^2+\hha)\les\f{|D|^{\f{p+\f{1}{2}}{2+\beta}}}{\sigma^{2p}}.
\ee
For $\trd$, we have
\be
\nono\trd=\hha\sigma^{-2p}(\la^{-(p+1)}+\la^{-1/2})\les\sigma^{-2p}\Big(|D|^{\f{p+1}{2+\beta}}+|D|^{\f{1}{2(2+\beta)}}\Big)\les\f{|D|^{\f{p+1}{2+\beta}}}{\sigma^{2p}}.
\ee
Also, since
\be
\nono\hha^2\ha'\les|D|^{-\f{1}{2+\beta}},
\ee
\be
\nono\hha^2\ha\|f_{\la}\|_K\leq\hha^2\ha\kk^{2r-1}\|g_{\rho}\|_{L_{\rho_{X_{\mu}}}^2}\les\ha\les|D|^{-\f{1}{2+\beta}},
\ee
\be
\hha\f{1}{\sqrt{\la}}\sigma^{-2p}(\la^{-(p+\f{1}{2})}+1)\les\f{1}{\sqrt{\la}}\sigma^{-2p}(\la^{-(p+\f{1}{2})}+1)\les\sigma^{-2p}(|D|^{\f{p+1}{2+\beta}}+|D|^{\f{1}{2(2+\beta)}})\les\f{|D|^{\f{p+1}{2+\beta}}}{\sigma^{2p}},
\ee
it follows that
\be
\nono\tre\les\max\Big\{|D|^{-\f{1}{2+\beta}},\f{|D|^{\f{p+1}{2+\beta}}}{\sigma^{2p}}\Big\}.
\ee
Finally, since $r>1$, we have
\be
\nono\trf=\|g_{\rho}\|_{L_{\rho_{X_{\mu}}}^2}\la^{\min\{r,1\}}=\|g_{\rho}\|_{L_{\rho_{X_{\mu}}}^2}\la^1=\|g_{\rho}\|_{L_{\rho_{X_{\mu}}}^2}\cdot|D|^{-\f{1}{2+\beta}}\les|D|^{-\f{1}{2+\beta}}.
\ee
Combining above estimates for $\tra\sim\trf$, we obtain
\be
 \mbb E\bigg[\big\|f_{\hat{D},\la}^{\sigma}-f_{\rho}\big\|_{L_{\rho_{X_{\mu}}}^2}\bigg]=\o\Big(\max\Big\{|D|^{-\f{1}{2+\beta}},\f{|D|^{\f{p+1}{2+\beta}}}{\sigma^{2p}}\Big\}\Big).
\ee
Now it is ready to provide proof for Corollary \ref{sigm}.

\subsection{Proof of Corollary \ref{sigm}}
\emph{Proof.} The proof is obvious after using Theorem \ref{thm2} and noting the fact that, if
\bea
\sigma&\geq&\left\{
  \begin{array}{ll}
    |D|^{\f{p+1+r}{2p(1+\beta)}}, & \hbox{$r\in(0,1/2)$;} \\
   |D|^{\f{p+1+r}{2p(2r+\beta)}}, & \hbox{$r\in[1/2,1]$;}\\
   |D|^{\f{p+1+r}{2p(2+\beta)}}, & \hbox{$r\in(1,\infty)$,}
  \end{array}
\right.
\eea
then
\bea
\nono &&|D|^{-\f{r}{1+\beta}}\geq\f{|D|^{\f{p+1}{1+\beta}}}{\sigma^{2p}}, \ r\in (0,1/2);\\
\nono &&|D|^{-\f{r}{2r+\beta}}\geq\f{|D|^{\f{p+1}{2r+\beta}}}{\sigma^{2p}},\ r\in [1/2,1];\\
\nono &&|D|^{-\f{r}{2+\beta}}\geq\f{|D|^{\f{p+1}{2+\beta}}}{\sigma^{2p}}, \ r\in(1,\infty).
\eea

\subsection{Proof of Theorem \ref{quan}}
\emph{Proof.} It follows from Fang et al. \cite{fang} that the least square distribution regressor has the form
\be
\nono f_{\hat{D},\la}^{ls}=(\la I+L_{K,\hat{D}})^{-1}\hat{S}_{D}^Ty.
\ee
With the representation of $f_{\hat{D},\la}^{\sigma}$ in Lemma \ref{rel}, we have
\bea
\nono&& \|f_{\hat{D},\la}^{\sigma}-f_{\hat{D},\la}^{ls}\|_{L_{\rho_{X_{\mu}}}^2}=\|(\la I+L_{K,\hat{D}})^{-1}\hhe\|_{L_{\rho_{X_{\mu}}}^2}=\|L_K^{1/2}(\la I+L_{K,\hat{D}})^{-1}\hhe\|_K\\
\n\leq\|L_K^{1/2}(\la I+L_{K,\hat{D}})^{-1}\|\|\hhe\|_K.
\eea
Use Lemma \ref{ees} and take expectation on both sides of above inequality, it follows that
\bea
\n \mbb E\bigg[\big\|f_{\hat{D},\la}^{\sigma}-f_{\hat{D},\la}^{ls}\big\|_{L_{\rho_{X_{\mu}}}^2}\bigg]\\
\n\leq 2^{2p}c_p\kk\sigma^{-2p}\Big[\kk^{2p+1}(\sqrt{C_V}M)^{2p+1}\la^{-(p+\f{1}{2})}+M^{2p+1}\Big]\mbb E_{\mathbf{z}^{|D|}}\Bigg[\Big\{\mbb E_{\mathbf{x}^{\mathbf{d},|D|}|\mathbf{z}^{|D|}}[\|L_K^{1/2}(\la I+ L_{K,\hat{D}})^{-1}\|^2]\Big\}^{1/2} \Bigg]\\
\n\leq 2^{2p}c_p\kk\sigma^{-2p}\Big[\kk^{2p+1}(\sqrt{C_V}M)^{2p+1}\la^{-(p+\f{1}{2})}+M^{2p+1}\Big]\\
\n\ \ \cdot \Bigg[\Big(\sqrt{2}\la^{-\f{3}{2}}\kk(2+\sqrt{\pi})^{\f{1}{2}}L\f{2^{\f{\aaa+2}{2}}B_k^\f{\aaa}{2}}{d^{\f{\aaa}{2}}}\Big)\mbb E_{\mathbf{z}^{|D|}}[\hc]+\sqrt{2}\la^{-1/2}\mbb E_{\mathbf{z}^{|D|}}[\hc^{1/2}]\Bigg],
\eea
in which the first inequality follows from the basic fact that $\mbb E_{\mathbf{x}^{\mathbf{d},|D|}|\mathbf{z}^{|D|}}[\|L_K^{1/2}(\la I+ L_{K,\hat{D}})^{-1}\|]\Big\}\leq\Big\{\mbb E_{\mathbf{x}^{\mathbf{d},|D|}|\mathbf{z}^{|D|}}[\|L_K^{1/2}(\la I+ L_{K,\hat{D}})^{-1}\|^2]\Big\}^{1/2}$, the second inequality follows from Lemma \ref{secl}. After using Lemma \ref{bcd} to $\hc$ and taking out corresponding coefficients by setting
\be
\ww{C}=2^{2p}c_p\kk[\kk^{2p+1}(\sqrt{C_V}M)^{2p+1}+M^{2p+1}]\Big\{\sqrt{2}(2+\sqrt{\pi})^{\f{1}{2}}L2^{\f{\aaa+2}{2}}B_k^{\f{\aaa}{2}}(2\Gamma(3)+\log^22)+\sqrt{2}(2\Gamma(2)+\log2)\Big\},
\ee
we arrive at
\be
\nono\mbb E\bigg[\big\|f_{\hat{D},\la}^{\sigma}-f_{\hat{D},\la}^{ls}\big\|_{L_{\rho_{X_{\mu}}}^2}\bigg]\leq\ww{C}\f{(\la^{-(p+\f{1}{2})}+1)(\la^{-\f{3}{2}}d^{-\f{\aaa}{2}}\hha^2+\la^{-\f{1}{2}}\hha)}{\sigma^{2p}},
\ee
which completes the proof.

\subsection{Proof of Corollary \ref{gap}}
\emph{Proof.} For any given $\la>0$, note that
\be
\nono \lim_{|D|\rightarrow\infty}\ha=\f{2\kk}{\sqrt{|D|}}\bigg(\f{\kk}{\sqrt{|D|\la}}+\sqrt{\huaN(\la)}\bigg)=0,
\ee
then it follows that
\be
\lim_{|D|\rightarrow\infty}\hha=\lim_{|D|\rightarrow\infty}\Big(\f{\ha}{\sqrt{\la}}+1\Big)=1, \label{co1}
\ee
and
\be
\overline{\lim}_{|D|\rightarrow\infty \atop d\rightarrow\infty}d^{-\f{\aaa}{2}}\hha^2=0. \label{co2}
\ee
From Theorem \ref{quan}, we have known that
\be
\nono \mbb E\bigg[\big\|f_{\hat{D},\la}^{\sigma}-f_{\hat{D},\la}^{ls}\big\|_{L_{\rho_{X_{\mu}}}^2}\bigg]\leq\ww{C}\f{(\la^{-(p+\f{1}{2})}+1)(\la^{-\f{3}{2}}d^{-\f{\aaa}{2}}\hha^2+\la^{-\f{1}{2}}\hha)}{\sigma^{2p}}.
\ee
By taking upper limit with respect to $|D|\rightarrow\infty \atop d\rightarrow\infty$ on above inequality and using \eqref{co1} and \eqref{co2}, we obtain
\be
\nono\overline{\lim}_{|D|\rightarrow\infty \atop d\rightarrow\infty}\mbb E\bigg[\big\|f_{\hat{D},\la}^{\sigma}-f_{\hat{D},\la}^{ls}\big\|_{L_{\rho_{X_{\mu}}}^2}\bigg]\leq\ww{C}\f{(\la^{-(p+1)}+\la^{-\f{1}{2}})}{\sigma^{2p}},
\ee
which completes the proof.

\section*{Acknowledgements}
The work described in this paper is supported partially by the Research Grants Council of Hong Kong [Project No. CityU 11202819], the CityU Strategic Grant 7005511. The last author is supported partially by the Research Grants Council of Hong Kong [Project \# CityU 11307319], Hong Kong Institute for Data Science, and National Science Foundation of China [Project No. 12061160462]. This paper was written when the last author visited SAMSI/Duke during his sabbatical leave. He would like to express his gratitude to their hospitality and financial support.


\begin{thebibliography}{999}

\bibitem{pr}
Frank Bauer, Sergei Pereverzev, Lorenzo Rosasco. On regularization algorithms in learning theory. Journal of complexity, 23.1, 52-72, 2017.

\bibitem{rkhs}
Alain Berlinet, Christine Thomas-Agnan. Reproducing kernel Hilbert spaces in probability and statistics. Springer Science, Business Media, 2011.


\bibitem{s2}
Xiangyu Chang, Shao-Bo Lin, Ding-Xuan Zhou. Distributed semi-supervised learning with kernel ridge regression.  Journal of Machine Learning Research, 18.1, 1493-1514, 2017.
%

\bibitem{robust}
Andreas Christmann, Ingo Steinwart. Consistency and robustness of kernel-based regression in convex risk minimization. Bernoulli 13.3, 799-819, 2007.

\bibitem{as1}
Andreas Christmann, Arnout Van Messem. Bouligand derivatives and robustness of support vector machines for regression. Journal of Machine Learning Research 9, 915-936, 2018.

\bibitem{pr1}
Ernesto De Vito, Sergei Pereverzyev, Lorenzo Rosasco. Adaptive kernel methods using the balancing principle. Foundations of Computational Mathematics, 10.4, 455-479, 2010.

\bibitem{as2}
Florian Dumpert, Andreas Christmann. Universal consistency and robustness of localized support vector machines. Neurocomputing, 315, 96-106, 2018.


\bibitem{inv}
Heinz Werner Engl, Martin Hanke, Andreas Neubauer. Regularization of inverse problems. Vol. 375. Springer Science, Business Media, 1996.

\bibitem{cucker}
Cucker Felipe, Ding-Xuan Zhou. Learning theory: an approximation theory viewpoint. Vol. 24. Cambridge University Press, 2007.

\bibitem{fang}
Zhiying Fang, Zheng-Chu Guo, Ding-Xuan Zhou. Optimal learning rates for distribution regression. Journal of Complexity, 56, 101426, 2020.

\bibitem{k1}
Kenji Fukumizu, Arthur Gretton, Xiaohai Sun, Bernhard Scholkopf. Kernel measures of conditional dependence. Advances in Neural Information Processing Systems, 20, 489-496, 2007.

\bibitem{fengyl}
Yunlong Feng, Xiaolin Huang, Lei Shi, Yuning Yang, Johan A. K. Suykens. Learning with the maximum correntropy criterion induced losses for regression. Journal of Machine Learning Research, 16, 993-1034, 2015.

\bibitem{fengf}
Yunlong Feng, Jun Fan, Johan AK Suykens. A Statistical Learning Approach to Modal Regression. Journal of Machine Learning Research, 21.2, 1-35, 2020.

\bibitem{fw}
Yunlong Feng, Qiang Wu. Learning under ($1+ \epsilon$)-moment conditions. Applied and Computational Harmonic Analysis, 49.2, 495-520, 2020.

\bibitem{fw1}
Yunlong Feng, Qiang Wu. A Statistical Learning Assessment of Huber Regression. arXiv preprint arXiv:2009.12755, 2020.


\bibitem{k2}
Arthur Gretton, Karsten M. Borgwardt, Malte Rasch, Bernhard Scholkopf, Alexander J. Smola. A kernel method for the two-sample-problem. Advances in Neural Information Processing Systems, 19, 513-520, 2006.

\bibitem{k3}
Arthur Gretton, Kenji Fukumizu, Choon Hui Teo, Le Song, Bernhard Scholkopf, Alexander J. Smola. A kernel statistical test of independence. Advances in Neural Information Processing Systems, 20, 585-592, 2007.

\bibitem{s3}
Zheng-Chu Guo, Shao-Bo Lin, Ding-Xuan Zhou. Learning theory of distributed spectral algorithms. Inverse Problems, 33.7, 074009, 2017.
%

\bibitem{guos}
Zheng-Chu Guo, Ting Hu, Lei Shi. Gradient descent for robust kernel-based regression. Inverse Problems 34.6, 065009, 2018.

\bibitem{guoc}
Zheng-Chu Guo, Lei Shi, Qiang Wu. Learning theory of distributed regression with bias corrected regularization kernel network. Journal of Machine Learning Research 18.1, 4237-4261, 2017.


\bibitem{ht}
Ting Hu, Qiang Wu, and Ding-Xuan Zhou. Distributed kernel gradient descent algorithm for minimum error entropy principle. Applied and Computational Harmonic Analysis 49.1, 229-256, 2020.

\bibitem{lv}
Fusheng Lv,  Jun Fan. Optimal learning with Gaussians and correntropy loss. Analysis and Applications, 19.1, 107-124, 2019.

\bibitem{corf}
Weifeng Liu, Puskal P. Pokharel, and Jose C. Príncipe. Correntropy: properties and applications in non-Gaussian signal processing. IEEE Transactions on Signal Processing 55.11, 5286-5298, 2017.

\bibitem{gddr}
Nicole. Mücke. Stochastic gradient descent meets distribution regression. arXiv preprint arXiv:2010.12842 (2020).




\bibitem{dr2}
Barnabás Póczos, Alessamdro Rinaldo, Aarti Singh, Larry Wasserman. Distribution-free distribution regression. Artificial Intelligence and Statistics PMLR, 507-515, 2013.

\bibitem{twostage}
Zoltán Szabó, Arthur Gretton,  Barnabás Póczos, Bharath K. Sriperumbudur. Two-stage sampled learning theory on distributions. In Artificial Intelligence and Statistics, 948-957, 2015.
%
\bibitem{only}
Zoltán Szabó, Bharath K. Sriperumbudur, Barnabás Póczos, Arthur Gretton. Learning theory for distribution regression. Journal of Machine Learning Research, 17.1, 5272-5311, 2016.

\bibitem{mmdr}
Zoltán Szabó, Bharath K. Sriperumbudur, Barnabás Póczos, Arthur Gretton. Minimax-optimal distribution regression. In International Society for NonParametric Statistics (ISNPS) Conference, 2016.

\bibitem{s4}
Steve Smale, Ding-Xuan Zhou. Learning theory estimates via integral operators and their approximations. Constructive approximation, 26.2, 153-172, 2007.

\bibitem{s5}
Steve Smale, Ding-Xuan Zhou. Shannon sampling II: Connections to learning theory.  Applied and Computational Harmonic Analysis 19.3, 285-302, 2005.

\bibitem{s6}
Steve Smale, Ding-Xuan Zhou. Shannon sampling and function reconstruction from point values. Bulletin of the American Mathematical Society 41.3: 279-305, 2004.










\bibitem{si}
Ingo Steinwart,  Andreas Christmann. Estimating conditional quantiles with the help of the pinball loss. Bernoulli 17.1, 211-225, 2011.

\bibitem{book}
Ingo Steinwart,  Andreas Christmann. Support vector machines. Springer Science and Business Media, 2008.


\bibitem{wh}
Cheng Wang, Ting Hu. Online minimum error entropy algorithm with unbounded sampling. Analysis and Applications 17.02, 293-322, 2019.






\end{thebibliography}
\end{document}